\newcommand{\wrt}{w.r.t.\ }
\newcommand{\size}[1]{\ensuremath{||#1||}\xspace}
\newtheorem{theorem}{Theorem} 
\newtheorem{definition}[theorem]{Definition}
\newtheorem{claim}[theorem]{Claim}
\newtheorem{observation}[theorem]{Observation}
\newtheorem{lemma}[theorem]{Lemma}
\newtheorem{proposition}[theorem]{Proposition}
\newtheorem{example}{Example}
\begin{document}

\title{An ExpTime Upper Bound for $\mathcal{ALC}$ with Integers (Extended Version)
}

\author{Nadia Labai}
\author{Magdalena Ortiz}
\author{Mantas \v Simkus}
\affil{Faculty of Informatics, Vienna, Austria}

\date{}

\maketitle        

\newif\ifskip
\skipfalse

\newcommand{\ALC}{\ensuremath{\mathcal{ALC}}\xspace}
\newcommand{\ALCF}{\mathcal{ALCF}}
\newcommand{\ALCP}{\mathcal{ALCFP}}
\newcommand{\ALCUP}{\mathcal{ALCF^P}}

\newcommand{\alcupz}{\ensuremath{\ALCUP(\cZ_c)}\xspace} 

\newcommand{\bfU}{\mathbf{U}}
\newcommand{\bfB}{\mathbf{B}}
\newcommand{\bfw}{\mathbf{w}}

\newcommand{\bN}{\mathbb{N}}
\newcommand{\bQ}{\mathbb{Q}}
\newcommand{\bZ}{\mathbb{Z}}

\newcommand{\cBv}{\mathcal{B}^{\wedge}}

\newcommand{\cA}{\mathcal{A}}
\newcommand{\cC}{\mathcal{C}}
\newcommand{\cB}{\mathcal{B}}
\newcommand{\cD}{\mathcal{D}}
\newcommand{\cG}{\mathcal{G}}
\newcommand{\cH}{\mathcal{H}}
\newcommand{\cI}{\mathcal{I}}
\newcommand{\cJ}{\mathcal{J}}
\newcommand{\cF}{\mathcal{F}}
\newcommand{\cK}{\mathcal{K}}
\newcommand{\cN}{\mathcal{N}}
\newcommand{\cO}{\mathcal{O}}
\newcommand{\cP}{\mathcal{P}}
\newcommand{\cQ}{\mathcal{Q}}
\newcommand{\cR}{\mathcal{R}}
\newcommand{\cT}{\mathcal{T}}
\newcommand{\cZ}{\ensuremath{\mathcal{Z}}\xspace}

\newcommand{\fr}{\mathrm{fr}}
\newcommand{\emb}{\mathrm{emb}}
\newcommand{\rtop}{\mathrm{top}}
\newcommand{\rbot}{\mathrm{bot}}

\newcommand{\rS}{\mathrm{S}}
\newcommand{\rct}{\mathrm{ct}}
\newcommand{\rCFr}{\mathrm{CFr}}
\newcommand{\ralcf}{\mathrm{alcf}}

\newcommand{\sfnat}{\mathsf{nat}}
\newcommand{\sfint}{\mathsf{int}}
\newcommand{\sfNC}{\mathsf{N_C}}
\newcommand{\sfNR}{\mathsf{N_R}}
\newcommand{\sfNF}{\mathsf{N_F}}
\newcommand{\sfNaf}{\mathsf{N_{af}}}
\newcommand{\sfP}{\mathsf{P}}
\newcommand{\sfReg}{\mathsf{Reg}}
\newcommand{\sfROL}{\mathsf{ROL}}
\newcommand{\sfund}{\mathsf{und}}
\newcommand{\sfpropagate}{\mathsf{propagate}}
\newcommand{\sfu}{\mathsf{u}}

\newcommand{\twoEXPTIME}{\textsc{2ExpTime}}
\newcommand{\EXPTIME}{\textsc{ExpTime}}
\newcommand{\NEXPTIME}{\textsc{NExpTime}}
\newcommand{\PTIME}{\textsc{PTime}}
\newcommand{\PSPACE}{\textsc{PSpace}}
\newcommand{\lpath}{\llbracket}
\newcommand{\rpath}{\rrbracket}

\newcommand{\condsym}{\bigstar}

\begin{abstract}
Concrete domains, especially those that allow to compare features with numeric values, 
have long been recognized as a very desirable extension of description logics (DLs), 
and significant efforts have been invested into adding them to usual 
DLs while  keeping the complexity of reasoning in check.
For expressive DLs and in the 
presence of general TBoxes, for standard reasoning tasks like consistency, the most 
general decidability results are for the so-called $\omega$-admissible domains, which are 
required to be dense. Supporting non-dense domains for features that range over integers
 or natural numbers remained largely open, despite often being singled out as a
 highly desirable  extension.
  The decidability of some extensions of $\mathcal{ALC}$
 with non-dense domains has been shown, but existing results rely on powerful machinery that does not allow 
to infer any elementary bounds on the complexity of the problem. In this paper, 
we study an extension of $\mathcal{ALC}$ with a rich integer domain that allows for comparisons 
(between features, and between features and constants coded in unary), and prove that 
consistency can be solved using automata-theoretic techniques in single exponential 
time, and thus has no higher worst-case complexity than standard $\mathcal{ALC}$. Our upper bounds 
apply to some extensions of DLs with concrete domains known from the literature, 
support general TBoxes, and allow for comparing values along paths 
of ordinary (not necessarily functional) roles.

\end{abstract}

\section{Introduction}
\label{sec_introduction}

Concrete domains, especially those allowing to compare features with numeric
values, are a very natural and useful extension of description logics.
Their relevance 
was recognized since the early days of DLs
\cite{DBLP:conf/ijcai/BaaderH91}, and they arise in all kinds of application domains. 
Identifying extensions of DLs that 
keep the complexity of
reasoning in  check has been an ever present challenge for the DL community,
and major research efforts have been devoted to that goal, see
\cite{Lutz02survey} and its references.
The best-known results so far are for the
so-called \emph{$\omega$-admissible domains} which, among other requirements, must be
dense. Decidability and tight complexity results have been established for
several expressive DLs extended with $\omega$-admissible domains based on the
real or the rational numbers. 
However, non-dense numeric domains with the integer or natural 
numbers are not   $\omega$-admissible, and supporting them 
has been often singled out as an open challenge
with significant practical implications
\cite{Lutz02KR,Lutz02survey}.

To our knowledge, there are two decidability results for extensions of
$\mathcal{ALC}$ with non-dense domains based on the integer numbers $\bZ$.
For some domains that support comparisons over the integers,
decidability can be inferred from results on 
fragments of 
CTL$^*$ with  constraints \cite{BozzelliG06}. More recently,
Carapelle and Turhan \cite{CarapelleTurhan16} proved decidability for concrete domains
that have the so-called \emph{EHD-property} (for \emph{existence of a homomorphism
  is definable}), which applies in particular to
$\bZ$ with comparison relations like  `$=$' and  `$<$'.
However, neither of these works allow to infer any elementary
bounds on the complexity of reasoning.
The former result applies the theory of well-quasi-orders to some
dedicated \emph{graphose inequality systems}. The latter result reduces the
satisfaction of the numeric constraints to satisfiability of a formula in a
powerful extension of monadic second order logic with a \emph{bounding
  quantifier}, which has been proved decidable over trees \cite{BojanczykT12}.
In both cases, the machinery stems from  formalisms  stronger than \ALC,
and yields little insight on what is the additional cost of 
the concrete domain. 

In this paper we propose an  automata-theoretic algorithm tightly tailored for
the DL \alcupz, an extension of $\ALCF$ with a domain based on $\bZ$ that follows
the work of Carapelle and Turhan \cite{CarapelleTurhan16}. Not only do we obtain the first elementary complexity upper
bounds, but in fact  we obtain the best
results that we could have hoped for: satisfiability  is
decidable in single exponential time, and thus not harder than for plain
$\ALC$.
The upper bound also applies to other approaches to concrete domains, and 
it extends  
to some domains over the real numbers 
that
include unary predicates 
for asserting 
that
some numbers must be integer or natural. 
Crucially, our setting accommodates general TBoxes, and allows to access the
concrete domain along arbitrary paths of ordinary roles, and not only of
functional ones. 
To our knowledge, this is the first decidability result with both of these
features,
even for $\omega$-admissible domains. 

Our upper bound is obtained using automata-theoretic techniques. Concretely, we
rely on a suitable notion of the tree model property, and build a
non-deterministic automaton on infinite trees that accepts representations of
models of the input.
The key challenge in the presence of TBoxes
comes from verifying whether  
an assignment of integer values along infinite paths exists.
While an infinite path of ever increasing or ever decreasing values always
exists, 
unsatisfiability of non-dense domains can arise from requiring 
an infinite number of integers that are larger than some
integer and smaller than another. 
However, identifying that a given input enforces an infinite sequence of
integers between two bounds may require us to identify, for example, if two
infinite paths in the model meet at ever increasing distances. 
It is far from  apparent how to detect this kind of very non-local behavior in standard
automata, and we could not identify an automata-verifiable condition that precisely characterizes
it. Instead,
we use a condition similar to the one proposed for constraint LTL by
Demri and D'Souza in \cite{DemriD07}, 
which is necessary on all trees, and sufficient on \emph{regular} ones, and
appeal to Rabin's theorem to obtain a sound and complete satisfiability test.
Some proofs are omitted from the body of the paper, and can be found in the appendix.

\paragraph{Related work}
The first DLs with concrete domains were introduced by Baader and Hanschke \cite{DBLP:conf/ijcai/BaaderH91},  
where concrete values are connected
via paths of functional roles, often called  \emph{feature paths}.
They showed that pure concept satisfiability is decidable for concrete
domains $\cD$ that are \emph{admissible}, that is,  satisfiability of conjunctions of
predicates from $\cD$ is decidable, and its predicates are closed under
negation.
Generalizations of this result and tight complexity bounds for specific
settings were obtained in the following years. For example,
 concept satisfiability is $\PSPACE$-complete under certain assumptions
\cite{DBLP:journals/igpl/Lutz02}. 
Adding acyclic TBoxes increases the complexity to $\NEXPTIME$, and general TBoxes
easily result in undecidability \cite{lutzNExpTime}. It remains decidable 
if the paths to concrete domains are restricted to single functional roles
\cite{DBLP:conf/cade/HaarslevMW01}.
Lutz also studied specific concrete domains, for example for
 temporal reasoning \cite{DBLP:journals/ai/Lutz04}, and 
summarized key results in a survey paper \cite{Lutz02survey}. 

Later research relaxed the requirements on the concrete domain, and the most
general results  so far are for extensions of $\ALC(\cC)$ with  \emph{$\omega$-admissible domains}, where concept satisfiability
\wrt to general TBoxes remains decidable   \cite{DBLP:journals/jar/LutzM07}.
However, this and related results assume two key restrictions that we relax in
our work: 
the concrete domain is dense, 
and   
only functional roles occur in the paths connecting to the concrete domains.
Both restrictions are also present in $\bQ$-$\mathcal{SHIQ}$, an
extension of $\mathcal{SHIQ}$ with comparison predicates over the rational
numbers,  for which concept satisfiability \wrt general TBoxes is \EXPTIME-complete.
The logic we consider is closely related to $\bQ$-$\mathcal{SHIQ}$.
It includes the $\ALCF$ fragment of  $\bQ$-$\mathcal{SHIQ}$, but additionally
allows us to replace the rational numbers by integers or naturals. Our
$\EXPTIME$ upper bound   also applies to the extension of $\bQ$-$\mathcal{SHIQ}$
with an $\sfint$ or $\mathsf{nat}$ predicate to make only \emph{some}
 values integer or natural,
and, under certain restrictions, to its extension with arbitrary role paths.

Concerning the latter extension, already the seminal work of Baader and Hanschke \cite{DBLP:conf/ijcai/BaaderH91} points out the
potential usefulness of allowing referral to the concrete domains also along
paths of regular roles,
but this easily results in undecidability.
For example, such an extension of \ALC known as $\ALCP(\cD)$ is
undecidable for any so-called \emph{arithmetic domain} $\cD$  
\cite{lutzNExpTime}. 
However, $\cZ_c$ and its analogue over the real numbers $\cR_c$ are not arithmetic,
and the corresponding DLs $\ALCP(\cZ_c)$ and  $\ALCP(\cR_c)$
do not seem to have been studied before.
By encoding these logics into $\ALCUP(\cZ_c)$ and  
$\ALCUP(\cR_c)$, we prove that their satisfiability problem is
decidable and obtain upper complexity bounds (which are tight under some
restrictions).

Finally, we remark that the extensions of DLs with concrete domains that we
consider here are closely related to constraint temporal logics.
Our logic \alcupz subsumes constraint LTL as defined in \cite{DemriD07},
whose satisfiability problem is \PSPACE\ complete. It is in turn subsumed by
constraint CTL$^*$, and more specifically, by a fragment
of it called  CEF$^+$ in \cite{BozzelliG06}, which unlike full CTL$^*$, has a
decidable satisfiability problem, but for which no tight complexity bounds are
known. Although much of the work on concrete domains in the last decade has
focused on lightweight DLs like DL-Lite (e.g.\ \cite{BaaderBL17,PoggiLCGLR08,SavkovicC12,ArtaleRK12}), 
some advances in the area of constraint CTL 
\cite{DBLP:journals/jcss/CarapelleKL16} have inspired the study of expressive
extensions that had long remained an open problem, like
the ones considered here \cite{CarapelleTurhan16}. 

\section{The \texorpdfstring{$\ALCUP(\cZ_c)$}{ALCF-P(Zc)} description logic}
\label{sec_ALCPZ_prelim}

The DL $\ALCUP(\cD)$ was introduced by Carapelle and Turhan \cite{CarapelleTurhan16}
for arbitrary domains $\cD$. Here we instantiate this DL with the concrete domain $\cZ_c$ that is defined as $\bZ$ equipped with
the standard binary equality and comparison relations `$=$' and `$<$', 
as well as a family of unary relations for comparing with an integer
constant. 

\begin{definition}[Syntax of $\ALCUP(\cZ_c)$]
  Let $\mathsf{Reg}$ be a countably infinite set of \emph{registers}
  (also known as \emph{concrete features}). A \emph{register term} is
  an expression of the form $S^{k} x$, where $x\in \mathsf{Reg}$ and
  $k\geq 0$ is an integer. An \emph{atomic constraint} is an
  expression of the form (i) $t=t'$, (ii) $t<t'$, or (iii) $t= c$,
  where $t,t'$ are register terms, and $c\in\bZ$. A \emph{(complex)
    constraint} $\Theta$ is an expression built from atomic
  constraints using the Boolean connectives $\lnot, \land$ and
  $\lor$. The depth of $\Theta$  (in symbols, $depth(\Theta)$) is the maximal $d$ such that some
  register term $S^{d}x$ appears in $\Theta$.

    Let $\sfNC$ and $\sfNR$ be  countably infinite sets of
  \emph{concept} and \emph{role names}, respectively. We
  further assume an infinite set $\sfNF \subseteq \sfNR$ of
  \emph{functional} role names.  A \emph{role path $P$} is any finite
  sequence $r_1 \cdots r_n$ of role names, with $n\geq 0$. We use
  $|P|$ to denote the length of $P$, i.e.\ $|P|=n$. Note that the
  empty sequence is also a role path, which we denote with
  $\epsilon $.

  $\ALCUP(\cZ_c)$ concepts are defined as follows:
  \begin{align*}
    C := A \mid \neg C \mid (C \sqcap C) \mid \exists r. C \mid \exists P. \lpath \Theta\rpath
  \end{align*}
  where $A \in \sfNC$, $r \in \sfNR$, $P$ is a role path, and $\Theta$
  is a constraint with $depth(\Theta) \leq |P|$.
    We use $C\sqcup D$ as
  an abbreviation of $\neg ( \neg C \sqcap \neg D) $, and
  $\forall r. C$ as an abbreviation of $\neg \exists r.\neg
  C$. Moreover, we use $\forall P. \lpath \Theta\rpath$ instead of
  $\neg \exists P. \lpath \neg \Theta\rpath $.
  Concepts of the form $D=\exists P. \lpath \Theta \rpath$ and
  $D=\forall P. \lpath \Theta \rpath$ are  called \emph{path
    constraints}, and we let $depth(D)=|P|$.

  A \emph{TBox} $\cT$ is any finite set of
  \emph{axioms}, where each axiom has the form
  $C \sqsubseteq D$ for some concepts $C$ and $D$.

  (Plain) $\ALCF$ concepts and TBoxes are defined as in   $\ALCUP(\cZ_c)$ but do not
  allow path constraints. 
\end{definition}
 
We can now define the semantics of the considered DL.
\begin{definition}[Semantics]
  \label{def_Z_interpretation}
  An \emph{interpretation} is a tuple
  $\cI = (\Delta^\cI, \cdot^\cI, \beta)$, consisting of a non-empty
  set $\Delta^\cI$ (called \emph{domain}), a \emph{register
    function} 
  $\beta: \Delta^\cI \times \sfReg \rightarrow \mathbb{Z}$, and a
  \emph{(plain) interpretation function} $\cdot^\cI$ that assigns
  $C^{\cI} \subseteq \Delta^{\cI}$ to every concept name 
  $C \in \sfNC$, 
  $r^{\cI} \subseteq \Delta^{\cI} \times \Delta^{\cI}$ to every role
  name $r \in \sfNR$. We further require that
  $\{(v,v'),(v,v'')\}\subseteq r^{\cI}$ implies $v'=v''$ for all
  $r \in \sfNF$.
	Role paths denote  tuples of elements. 
  For a role path $r_1\cdots r_n$, we define
  $(r_1  \cdots r_n)^\cI$ as the set of all tuples
  $(v_0,\ldots,v_n) \in \Delta^{n+1}$ such that
  $(v_{0},v_1) \in r_1^\cI,\ldots, (v_{n-1},v_n) \in r_n^\cI$.

  For an interpretation $\cI$ and a tuple $\vec{v}=(v_0,\ldots,v_n)$ of
  elements in $\Delta^{\cI}$, 
  we define the following, where  $\theta \in \{=,<\}$ and $c \in \bZ$:
  \begin{itemize}
  \item $\cI,\vec{v}\models \theta(S^{i} x,  S^{j} y)$ iff $\beta(v_i,x) \theta \beta(v_j,y)$;
  \item $\cI,\vec{v}\models S^{i} x = c$ iff $\beta(v_i,x)= c$;
  \item $\cI,\vec{v}\models \Theta_1\land \Theta_2 $ iff $\cI,\vec{v}\models \Theta_1$ and 
    $\cI,\vec{v}\models \Theta_2$;
    \item $\cI,\vec{v}\models \Theta_1\lor \Theta_2 $ iff $\cI,\vec{v}\models \Theta_1$ or
    $\cI,\vec{v}\models \Theta_2$; 
    \item $\cI,\vec{v}\models \neg \Theta $ iff $\cI,\vec{v}\not \models \Theta$.
  \end{itemize}

Now 
the function $\cdot^{\cI}$ is  extended to complex concepts as follows:
  \begin{itemize}
  \item $(\neg C)^{\cI} = \Delta^{\cI} \setminus C^{\cI}$ and  $(C \sqcap D)^{\cI} = C^{\cI} \cap D^{\cI}$,
    \item  $(\exists r. C) ^{\cI} = \{v \mid (v,v')\in r^{\cI}, v' \in
  C^{\cI}\}$, and
\item  $(\exists P. \lpath \Theta \rpath )^\cI=\{u\mid (u,\vec{v})\in
  P^\cI\mbox{ and }\cI,(u,\vec{v})\models\Theta \}$.
\end{itemize}

An interpretation $\cI$ is a \emph{model} of a TBox $\cT$, if
$C^{\cI}\subseteq D^{\cI}$ for all $C\sqsubseteq D\in \cT$.
We say that a concept $C$ is \emph{satisfiable \wrt $\cT$} if there is a model
$\cI$  of $\cT$ with $C^{\cI} \neq \emptyset$. 
\end{definition}

\begin{example}
  The TBox with the axiom
  $\top \sqsubseteq \exists r. \lpath S^0 x < S^1 x \rpath$ enforces
  an infinite chain of objects whose $x$ registers store increasing
  integer values. This witnesses that $\ALCUP(\cZ_c)$ does not enjoy
  the finite model property. 
\end{example}

\paragraph{Tree model property}

The automata-based techniques we employ in this paper rely on the
\emph{tree model property} of $\ALCUP(\cZ_c)$. We recall the
definition of tree-shaped models (cf.~\cite{CarapelleTurhan16}). For $n \geq 1$, let 
$[n] = \{1, \ldots, n\}$.  We say
$\cI = (\Delta^{\cI}, \cdot^{\cI}, \beta)$ is \emph{tree-shaped} if
$\Delta^{\cI} = [n]^{\star}$ for some $n\geq 1$, and for every
$u, v \in \Delta^\cI$, we have that $(u,v) \in r^{\cI}$ for some
$r \in \sfNR$ iff $v = u \gamma$ for some $\gamma \in [n]$.
Let $\gamma_1, \ldots, \gamma_k \,{\in}\, [n]$. If
$u\gamma_k \,{\in}\, \Delta^{\cI}$, we call $u$ the \emph{parent} of
$u\gamma_k$, and if
$v=u \gamma_1 \cdots \gamma_k \,{\in}\, \Delta^{\cI}$, we call $u$ the
\emph{$k$-th ancestor} of $v$. Such $\cI$ is called an \emph{$n$-tree
  (interpretation)}.

The following theorem will allow us to focus on $n$-trees for our technical developments:
\begin{theorem}[Carapelle and Turhan, \cite{CarapelleTurhan16}]
\label{th_n_tree_model}
Let $\hat{\cC}$, $\hat{\cT}$ be in negation normal form, where $d$ is the maximal depth of an existential path constraint in $\hat{\cC}$ or $\hat{\cT}$,
and $e$ the number of existentially quantified subconcepts in $\hat{\cC}$ or $\hat{\cT}$. 
If $\hat{\cC}$ is satisfiable \wrt $\hat{\cT}$, then it has an $n$-tree model where \mbox{$n = d \cdot e$.}
\end{theorem}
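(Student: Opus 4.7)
The plan is to prove the tree model property by a standard unraveling. Starting from an arbitrary model $\cI$ of $\hat{\cT}$ with some $w_0 \in \hat{\cC}^{\cI}$, I build an $n$-tree interpretation $\cI'$ whose domain is the full $n$-ary tree $[n]^{\star}$, guided by a labelling map $\ell : [n]^{\star} \to \Delta^{\cI}$ with $\ell(\epsilon) = w_0$. The concept-name, role, and register assignments of $\cI'$ at each node are inherited from those of its $\cI$-label, while the $n$ children of each node are allocated so as to witness all the existential commitments that $\ell(u)$ has in $\cI$.

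Concretely, at a node $u$ with $\ell(u) = v$, for every existential $\exists r.C$ satisfied by $v$ I pick some $v' \in r^{\cI}(v) \cap C^{\cI}$, take a fresh child $u\gamma$ of $u$, set $\ell(u\gamma) = v'$, and declare $(u, u\gamma) \in r^{\cI'}$. For every path constraint $\exists r_1 \cdots r_k.\lpath \Theta \rpath$ satisfied by $v$, I pre-choose an entire $\cI$-path $v = v_0, v_1, \ldots, v_k$ that witnesses $\Theta$, reserve one $r_1$-child of $u$ labelled with $v_1$, and commit to reserving one additional dedicated child at each of the next $k-1$ descendants along that path so as to realise the continuation $v_2, \ldots, v_k$. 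Remaining slots of $[n]^{\star}$ are filled with an arbitrary legal labelling, e.g.\ by duplicating a sibling.

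For the branching bound $n = d \cdot e$, I count the children needed at an arbitrary node $u$: at most $e$ for the local existentials of $\ell(u)$, plus one dedicated continuation for each witness path inherited from an ancestor at distance $j < d$. Since at each such ancestor only one of its children lies on $u$'s branch, each contributes at most one pending continuation, yielding at most $e + (d-1) \le d \cdot e$ genuinely needed children, well within the available $n$.

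The final step is to verify by structural induction on concepts in negation normal form that $\ell(u) \in C^{\cI}$ implies $u \in C^{\cI'}$ for every subconcept $C$ occurring in $\hat{\cC}$ or $\hat{\cT}$; together with $\ell(\epsilon) = w_0$ and the preservation of TBox axioms, this gives $w_0 \in \hat{\cC}^{\cI'}$ and $\cI' \models \hat{\cT}$. I expect the real obstacle to be the path-constraint case of the induction: one must argue that the chain of dedicated children reserved along the pre-chosen witness path actually materialises a tuple $(u, u\gamma_1, \ldots, u\gamma_1 \cdots \gamma_k) \in P^{\cI'}$ whose $\beta'$-values satisfy $\Theta$. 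This requires scheduling the labelling of each intermediate child so that it simultaneously fulfils its ancestor's pending continuation and its own local existential obligations, while $\ell$ transports registers faithfully from $\cI$ to $\cI'$.
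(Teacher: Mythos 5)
This theorem is imported from Carapelle and Turhan \cite{CarapelleTurhan16} and the paper contains no proof of it, so there is nothing internal to compare your argument against; I can only assess it on its own terms. What you propose is the standard unraveling construction that one would expect here, and its core is sound: labelling nodes by elements of an arbitrary model, transporting concept memberships, role edges and register values along the labelling, pre-selecting a full witness path $v_0,\ldots,v_k$ for each satisfied $\exists P.\lpath\Theta\rpath$ and threading it through a chain of dedicated children. Your child count is also correct: a node needs at most $e$ fresh children for its own existential obligations plus at most one pending continuation per ancestor at distance $1,\ldots,d-1$ (since distinct path constraints started at an ancestor descend into distinct children, only one of their chains can pass through the current branch), and $e+(d-1)\le d\cdot e$ holds whenever $d,e\ge 1$.

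Two details deserve explicit treatment in a full write-up. First, functional roles: if $r\in\sfNF$, a node may not receive two distinct $r$-children, so witnesses for different obligations that all require an $r$-successor must be merged (this is harmless because $\ell(u)$ has a unique $r$-successor in $\cI$, so all such witnesses carry the same label), and the ``filler'' children used to pad the domain out to the full tree $[n]^{\star}$ must be attached via non-functional roles or otherwise not duplicate an existing functional edge. Second, preservation of the TBox: your induction only gives the direction $\ell(u)\in C^{\cI}\Rightarrow u\in C^{\cI'}$ for concepts in negation normal form, so to conclude $\cI'\models C\sqsubseteq D$ you should apply it to the NNF of $\neg C\sqcup D$ (which holds at every element of $\cI$), rather than reasoning about $C$ and $D$ separately. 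Neither point threatens the argument; both are routine.
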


\section{A tight upper bound for satisfiability} 
\label{sec_upper_bound}
In this section we present our main result: an algorithm for deciding \alcupz concept
satisfiability \wrt to general TBoxes in \emph{single exponential time}.
The algorithm uses automata on infinite trees, and reduces  the satisfiability
test to the emptiness of a suitable automata.
But first we bring \alcupz concepts  and TBoxes into a simpler shape that facilitates the later developments.

\subsection{Atomic normal form}
\label{subsec_normalization}

Here we go from a concept $\cC'$ and TBox $\cT'$ in general
form to equisatisfiable $\hat{\cC}$ and $\hat{\cT}$ in \emph{atomic normal
form}, where the path constraint are of length $1$ and the register constraints are atomic.
This conversion relies on the tree model property of $\ALCUP$ and on $\cZ_c$ being negation-closed. 
That is, the negation of an atomic relation can be expressed without negation via other relations;
for $<$ and $=$ negation can be removed using only one disjunction, and for $=c$ negation can be removed using
one conjunction with one disjunction and one fresh register name.

\begin{definition}[Atomic normal form]
An $\ALCUP(\cZ_c)$-concept is in \emph{atomic normal form (ANF)} if for
every $\exists P. \lpath \Theta \rpath$ and $\forall P. \lpath \Theta \rpath$ that appears in it,  
$\Theta$ is an atomic constraint and $|P|\leq 1$. 
A TBox $\cT$ is in ANF if the TBox-concept $\bigsqcap_{C \sqsubseteq D \in \cT}(\neg C \sqcup D)$ is in ANF.
\end{definition}

\begin{lemma}
\label{lem_new_normal_form}
Let $\cC'$ and $\cT'$ be a concept and a TBox in $\ALCUP(\cZ_c)$. Then
$\cC'$ and $\cT'$ can be transformed in polynomial time into $\cC$ and $\cT$  in ANF 
such that  $\cC$ is satisfiable w.r.t.\ $\cT$ iff $\cC'$
is satisfiable w.r.t.\ $\cT'$.
\end{lemma}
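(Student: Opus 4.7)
The plan is to normalize $\cC'$ and $\cT'$ into ANF through three polynomial-size, satisfiability-preserving passes, using fresh concept names, fresh registers, and structural (Tseitin-style) naming to keep the result polynomial.

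First I would put the input into negation normal form: push $\neg$ through $\sqcap,\sqcup$ and across path constraints via $\neg\exists P.\lpath\Theta\rpath \equiv \forall P.\lpath\neg\Theta\rpath$, then use the negation-closedness of $\cZ_c$ noted just before the lemma to push the remaining negations to the register-level atoms: $\neg(t<t')$ becomes $(t=t')\vee(t'<t)$, $\neg(t=t')$ becomes $(t<t')\vee(t'<t)$, and $\neg(t=c)$ is eliminated by adding a fresh register $y_c$ pinned globally via $\top\sqsubseteq\lpath S^0 y_c = c\rpath$ and rewriting it as $(t<S^0 y_c)\vee(S^0 y_c<t)$. After this pass every path constraint appears only positively, and every $\Theta$ is a positive Boolean combination of atomic constraints.

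The second and main pass reduces every path to length at most $1$ by propagating ``snapshots'' of the relevant register values along the chain. For a subconcept $\exists P.\lpath\Theta\rpath$ with $P = r_1\cdots r_n$ and $n>1$, I introduce fresh concept names $B_0,\ldots,B_n$ marking positions along the chain and, for each register term $S^i x$ occurring in $\Theta$, a fresh snapshot register $z_{i,x}$. The subconcept is replaced by $B_0$ and $\cT$ is extended with: the chain axioms $B_k \sqsubseteq \exists r_{k+1}. B_{k+1}$ for $k<n$; the capture axioms $B_i \sqsubseteq \lpath S^0 z_{i,x} = S^0 x\rpath$; the propagation axioms $B_k \sqsubseteq \forall r_{k+1}. \lpath S^0 z_{i,x} = S^1 z_{i,x}\rpath$ for every $k\geq i$; and the terminal axiom $B_n \sqsubseteq \lpath\Theta^\star\rpath$, where $\Theta^\star$ is $\Theta$ with every $S^i x$ replaced by $S^0 z_{i,x}$, turning it into a constraint local to the last node of the chain. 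A symmetric construction with $\forall r_{k+1}$-successor chain axioms handles $\forall P.\lpath\Theta\rpath$. Every new axiom is itself a path constraint of length at most $1$, and because Pass~1 has left path constraints only positively, the one-way implication $B_0 \sqsubseteq \exists P.\lpath\Theta\rpath$ enforced by these axioms suffices for satisfiability preservation. The tree-model property from Theorem~\ref{th_n_tree_model} is what makes the capture-and-propagate scheme consistent, since in a tree each node has a unique ancestor sequence.

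The third pass turns the remaining non-atomic $\Theta$ into an atomic one. For length-$0$ constraints I use $\lpath\Theta_1\wedge\Theta_2\rpath \equiv \lpath\Theta_1\rpath \sqcap \lpath\Theta_2\rpath$ and the dual disjunction equivalence. For length-$1$ constraints I push disjunctions out of $\exists r$ and conjunctions out of $\forall r$ in the obvious way. The only remaining case, $\exists r.\lpath\Theta_1\wedge\Theta_2\rpath$ (and dually $\forall r.\lpath\Theta_1\vee\Theta_2\rpath$), is handled by another round of propagation: for each register $x$ used at $S^0$ in such a context, introduce a fresh register $\hat{x}^r$ together with the global axiom $\top \sqsubseteq \forall r.\lpath S^0 x = S^1 \hat{x}^r\rpath$. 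An atomic length-$1$ conjunct $\theta(S^0 x, S^1 y)$ then becomes the length-$0$ constraint $\theta(S^0 \hat{x}^r, S^0 y)$ at the $r$-successor, so $\exists r.\lpath\Theta_1\wedge\Theta_2\rpath$ is equivalent to $\exists r.(\lpath\hat{\Theta}_1\rpath \sqcap \lpath\hat{\Theta}_2\rpath)$, which is in ANF. Equisatisfiability of each pass follows from the standard structural-transformation argument---tree models of the previous stage extend uniquely by the intended interpretation of the fresh symbols, and conversely the defining axioms pin them down in any model of the new stage. I expect the main technical care to be needed in the second pass, where one must verify that the chain and propagation axioms faithfully simulate the original path constraint on tree models without accidentally forcing unintended identifications of snapshot register values at nodes outside the intended chain---precisely where the tree shape guaranteed by Theorem~\ref{th_n_tree_model} is indispensable.
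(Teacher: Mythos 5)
There is a genuine gap in your second pass. Your snapshot registers $z_{i,x}$ are indexed by the \emph{absolute position} $i$ in the chain, and both the capture axiom ($B_i \sqsubseteq \exists\epsilon.\lpath S^0 z_{i,x} = S^0 x\rpath$) and the propagation axiom ($B_k \sqsubseteq \forall r_{k+1}.\lpath S^0 z_{i,x} = S^1 z_{i,x}\rpath$) fire at \emph{every} node carrying the relevant $B$-label and at \emph{every} $r_{k+1}$-successor. When the path constraint comes from a TBox axiom, different instantiations of the same chain necessarily overlap, and a single node then occupies different positions in different chains; your axioms then force unintended identifications of register values at \emph{different} nodes. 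Concretely, take $\top \sqsubseteq \exists rr.\lpath (S^0x < S^1x)\wedge(S^1x < S^2x)\rpath$ and a tree branch $u_0 \to u_1 \to u_2$ where $u_1$ is the designated $B_1$-witness for $u_0$ and $u_2$ is the designated $B_1$-witness for $u_1$ (unavoidable if these are the only $r$-successors). Capture at $u_1\in B_1$ gives $z_{1,x}(u_1)=x(u_1)$, capture at $u_2\in B_1$ gives $z_{1,x}(u_2)=x(u_2)$, and propagation at $u_1\in B_1$ over its $r$-successor $u_2$ gives $z_{1,x}(u_1)=z_{1,x}(u_2)$; together these force $x(u_1)=x(u_2)$, contradicting the constraint $x(u_1)<x(u_2)$ that the axiom itself imposes. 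So your transformed system is unsatisfiable while the original is satisfiable, and the tree-model property does not rescue this: the counterexample is already a tree, and the $\forall r_{k+1}$ in your propagation axiom hits all successors regardless of how witnesses are spread out. This is precisely the "technical care" you flag at the end, but it is not a detail---it breaks equisatisfiability.

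The paper avoids this by making the copy-registers \emph{relative} rather than positional: the fresh register $x^{k}_{i,P}$ at a node $v$ is intended to hold "the value of $x_i$ at $v$'s $k$-th ancestor," which is an intrinsic property of $v$ in a tree, independent of which chain instances pass through $v$. The propagation is then unconditional ($\top \sqsubseteq \forall r.\lpath S^1 x^{k}_{i,P} = S^0 x^{k-1}_{i,P}\rpath$, shifting the index by one at each step), there are no capture axioms tied to chain positions, and hence no collisions. The paper also dispenses with your $B_k$ chain markers entirely: it keeps the original quantifier prefix ($\exists P.\lpath\Theta\rpath$ becomes $\exists P. T_{P,\Theta}$) and uses test concepts $T_{P,\Theta}$ defined by \emph{equivalences} that evaluate the localized constraint at the endpoint using the relative copies. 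If you reindex your snapshots by distance-from-the-current-node instead of chain position and drop the $B_k$'s in favour of keeping the nested $\exists r_1.\exists r_2.\cdots$ prefix, your construction essentially becomes the paper's. Your first and third passes are fine and match the paper's treatment.
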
 
\begin{proof}[Proof sketch.]
We can convert $\cC'$ and $\cT'$ to negation normal form and then remove
 negation from atomic constraints using $\wedge$, $\vee$, and at most
one fresh register name per constraint, all in linear time. Therefore we assume that $\cC'$ and $\cT'$ are negation free.
Next, relying on the tree model property, we copy at each node $u$ the
registers of its ancestors that may occur in the same constraints as $u$'s own 
registers.
For this we propagate the register values of the ancestors one step at a time 
with axioms 
\begin{align*}
\top \sqsubseteq \forall r.
  \lpath S^1 x_{i,P}^{k} = S^0 x_{i,P}^{k-1} \rpath
\end{align*}
We define a  TBox $\cT_{\mathsf{prop}}$
that contains such an axiom for each appropriate role name $r$ and (fresh) register names associated with role paths $P$
and depth $k$ of path constraints used in
$\cC'$ and $\cT'$. 
Note that along every path $P$, the TBox $\cT_{\mathsf{prop}}$ propagates values 
into copy-registers associated with \emph{all} paths appearing in $\cC'$ or $\cT'$, not just into the copy-registers associated with $P$. 
We will later restrict our attention to the relevant registers depending on context.
The following claim is proved with a straightforward inductive construction:
\begin{claim}
\label{cl_part_I}
Every tree model of $\cC'$ w.r.t.\
$\cT' \cup \cT_{\mathsf{prop}}$ contains a tree model of $\cC'$
w.r.t.\ $\cT'$, and every tree model of $\cC'$ w.r.t.\ $\cT'$
can be expanded to a tree model of $\cC'$ w.r.t.\
$\cT' \cup \cT_{\mathsf{prop}}$.
\end{claim}	

For a role path $P$ and an atomic constraint $\Theta$, let $\mathsf{loc}(\Theta,P)$ denote the constraint obtained from $\Theta$
by replacing each occurrence of $S^j x_{i}^{0}$ with $ S^0 x_{i,P}^{|P|-j}$. 
In the next step, we create some ``test'' concept names and axioms
that will allow to check whether a given constraint is satisfied in a
certain path in a tree model. 
For each (sub)constraint $\Theta$ and a role path $P$ that appear in $\cC'$ or $\cT'$,
take a fresh concept name
$T_{P,\Theta}$ and add to a TBox $\cT_{\mathsf{loc}}$ the following axioms (recall that $\cC'$ and $\cT'$ are negation free):
\begin{enumerate}[label={({\bf A}\textsubscript{\arabic*}}),ref=({\bf A}\textsubscript{\arabic*}),series=conditions]
\item 
$T_{P,\Theta} \equiv T_{P,\Theta_1} \sqcap T_{P,\Theta_2}$ if $\Theta = \Theta_1\land  \Theta_2$ 
\item 
$T_{P,\Theta} \equiv T_{P,\Theta_1} \sqcup T_{P,\Theta_1}$  if $\Theta = \Theta_1\lor \Theta_2$ 
\item 
$T_{P,\Theta} \equiv \exists \epsilon \lpath \mathsf{loc}(\Theta,P) \rpath$ if  $\Theta$ is an atomic constraint.
\end{enumerate}
We make two claims about combining $\cT_{\mathsf{prop}}$ with $\cT_{\mathsf{loc}}$.
The first is that we can continue expanding the initial tree model:
\begin{claim}
\label{cl_part_II}
Every tree model of $\cC'$ w.r.t.\ $\cT' \cup \cT_{\mathsf{prop}}$ can be expanded to a tree model of $\cC'$ w.r.t.\
$\cT' \cup \cT_{\mathsf{prop}} \cup \cT_{\mathsf{loc}}$, and every tree model of 
$\cC'$ w.r.t.\
$\cT' \cup \cT_{\mathsf{prop}} \cup \cT_{\mathsf{loc}}$
is a tree model of $\cC'$ w.r.t\ $\cT' \cup \cT_{\mathsf{prop}}$.
\end{claim}
The above claim follows by induction on $\Theta$.
Next, we claim that $\cT_{\mathsf{prop}} \cup \cT_{\mathsf{loc}}$ indeed
relates the satisfaction of path constraints to membership in the test concepts:
\begin{claim}
\label{cl_part_II_a}
Let $\cJ$ be a tree model of $\cT_{\mathsf{prop}} \cup \cT_{\mathsf{loc}}$, 
and let $P$ be a role path and $\Theta$ a constraint appearing in $\cC'$ or $\cT'$.
Then it holds that
\begin{enumerate}
\item $\cJ$ contains 
	a $P$-path
  $e_0,\ldots, e_{|P|}$ and if $e_{|P|}\in T_{P,\Theta}^{\cJ}$, then the path
  $e_0,\ldots, e_{|P|}$ satisfies $\Theta$ in $\cJ$;
\item if $\cJ$ contains a $P$-path  $e_0,\ldots, e_{|P|}$ that satisfies $\Theta$, then $e_{|P|} \in T_{P,\Theta}^{\cJ}$.
\end{enumerate}
\end{claim}

Now we are ready to rewrite $\cC'$ and $\cT'$ into ANF using the locally available copy-registers
and the test concept names;
Given a concept $D$ and a role path
$P= r_1\cdots r_n$, we write $\exists P.D $ as shorthand for $\exists r_1 (\exists r_2(\cdots (\exists r_n.D)\cdots))$,
and similarly for $\forall P. D$.
Let $\cC$ and $\cT^{*}$ be
obtained from $\cC'$ and $\cT'$, respectively, by replacing every concept
$\exists P.\lpath\Theta \rpath $ by $\exists P.T_{P,\Theta} $ and
every $\forall P.\lpath\Theta \rpath $ by $\forall P.T_{P,\Theta}$. 
Our desired normalization is $\cC$ equipped with the TBox
$\cT=\cT^{*}\cup \cT_{\mathsf{prop}} \cup \cT_{\mathsf{loc}}$.

Given a tree model of $\cC'$ w.r.t.\ $\cT'$, by chaining Claim~\ref{cl_part_I} and Claim~\ref{cl_part_II}, we get a tree model $\cJ$ of $\cC'$ w.r.t.\
$\cT' \cup \cT_{\mathsf{prop}} \cup \cT_{\mathsf{loc}}$, and by applying Claim~\ref{cl_part_II_a}
we get that 
\[
(\exists P.\lpath\Theta \rpath)^\cJ  = (\exists P.T_{P,\Theta})^\cJ, \quad
(\forall P.\lpath\Theta \rpath)^\cJ  = (\forall P.T_{P,\Theta})^\cJ
\]
Hence $\cJ$ is also a tree model of $\cC$ w.r.t.\ $\cT$.

Given a tree model $\cJ$ 
of $\cC$ w.r.t.\ $\cT$, again by applying Claim~\ref{cl_part_II_a} we get that $\cJ$
is also a tree model of $\cC'$ w.r.t.\ $\cT' \cup \cT_{\mathsf{prop}} \cup \cT_{\mathsf{loc}}$ (and
in particular w.r.t.\ $\cT'$).

\end{proof}

\subsection{Abstractions and constraint graphs}
\label{sec_tree_rep}

To check satisfiability of $\cC$ \wrt $\cT$, we follow the approach
of \cite{CarapelleTurhan16} and split the task into two checks:
a satisfiability check for an \emph{abstracted} version of $\cT$, $\cC$, which
is in plain $\ALCF$,
and an \emph{embeddability check} for so-called \emph{constraint graphs}.
We recall the definitions of abstracted $\ALCUP(\cZ_c)$ concepts and
constraint graphs from \cite{CarapelleTurhan16}, adapted to our context.

\begin{definition}[Abstraction]
\label{def_abstraction}
Consider a path constraint $E = \exists r. \lpath \Theta \rpath$, 
where $\Theta$ is an atomic constraint.
Let $B \in \mathbf{B}$ be a fresh concept name, which we call the \emph{placeholder} of $\Theta$. 
The \emph{abstraction} of $E$ is defined as
	\[
	E_a = \exists r. B
	\]
The abstraction of a universal path constraint $E' = \forall r. \lpath \Theta' \rpath$ is analogous.
If $r = \epsilon$, then the abstraction is simply $B$.

The abstractions of concepts and TBoxes given in ANF are the (plain $\ALCF$)
concepts and TBoxes obtained by replacing all path constraints with their
abstracted versions.
\end{definition}

Let $\cC_a$, $\cT_a$ be the abstractions of $\cC$ and $\cT$, respectively. 
Let $\sfReg_{\cC,\cT}$ be the set of register names used in $\cC$ and $\cT$.

The constraint graph of a plain tree-shaped interpretation 
indicates how the values of its registers participate in relevant
relations.
Comparisons and equalities between registers are represented as graph edges, and equalities with constants are stored as node labels.

\begin{definition}[Constraint graph]
\label{def_constraint_graph}
Let $\cI_a = (\Delta^{\cI_a}, \cdot^{\cI_a})$ be a plain tree-shaped interpretation of $\cC_a,\cT_a$.
The \emph{constraint graph} of $\cI_a$ is the directed partially labeled graph
$\cG_{\cI_a} = (V,E,\lambda)$ where $V = \Delta^{\cI_a} \times \mathsf{Reg}_{\cC,\cT}$, $\lambda: V \rightarrow 2^{\mathbf{B}}$, and  
$E = E_{<} \cup E_{=}$ is such that, for every $(v,y),(u,x) \in V$, 
	\begin{enumerate}
		\item 
		$((v,y),(u,x)) \in E_{<}$ if and only if either
		\begin{compactitem}
			\item
			$u = v$, $v \in B^{\cI_a}$ and $B$ is a placeholder for
                          $S^0 y < S^0 x$, 
		        \item 
			$u$ is the parent of $v \in B^{\cI_a}$ and $B$ is a placeholder for	$S^1 y < S^0 x$, or
			\item
			$v$ is the parent of $u \in B^{\cI_a}$ and $B$ is a placeholder for	$S^0 y < S^1 x$.
		\end{compactitem}
		\item
		$((v,y),(u,x)) \in E_{=}$ if and only if  
		\begin{compactitem}
			\item 
			$u = v$, $v \in B^{\cI_a}$ and $B$ is a placeholder for	$S^0 y = S^0 x$,
			\item
			$u$ is the parent of $v \in B^{\cI_a}$ and $B$ is a placeholder for	$S^1 y = S^0 x$, or
			\item
			$v$ is the parent of $u \in B^{\cI_a}$ and $B$ is a
                          placeholder for	$S^0 y = S^1 x$.  
		\end{compactitem}
	\end{enumerate} 
In addition, for a placeholder $B$ for $S^0 x = c$, we have that
$B \in \lambda(u,x)$ if and only if $u \in B^{\cI_a}$. 
\end{definition}
When the interpretation is clear from context, we write $\cG$.

We say a constraint graph $\cG$ is \emph{embeddable} into $\cZ_c$ if there is an integer assignment $\kappa$ to the vertices $V$ of 
$\cG$ such that for every $(u,x),(v,y) \in V$, if $((v,y),(u,x)) \in E_<$ then
$\kappa(u,w) < \kappa(v,y)$ (and similarly for $E_=$), and if $B \in \lambda(u,x)$
is a placeholder for $S^0 x = c$, then $\kappa(u,x) = c$. 

\begin{figure}
  \centering
  \includegraphics[scale=.57]{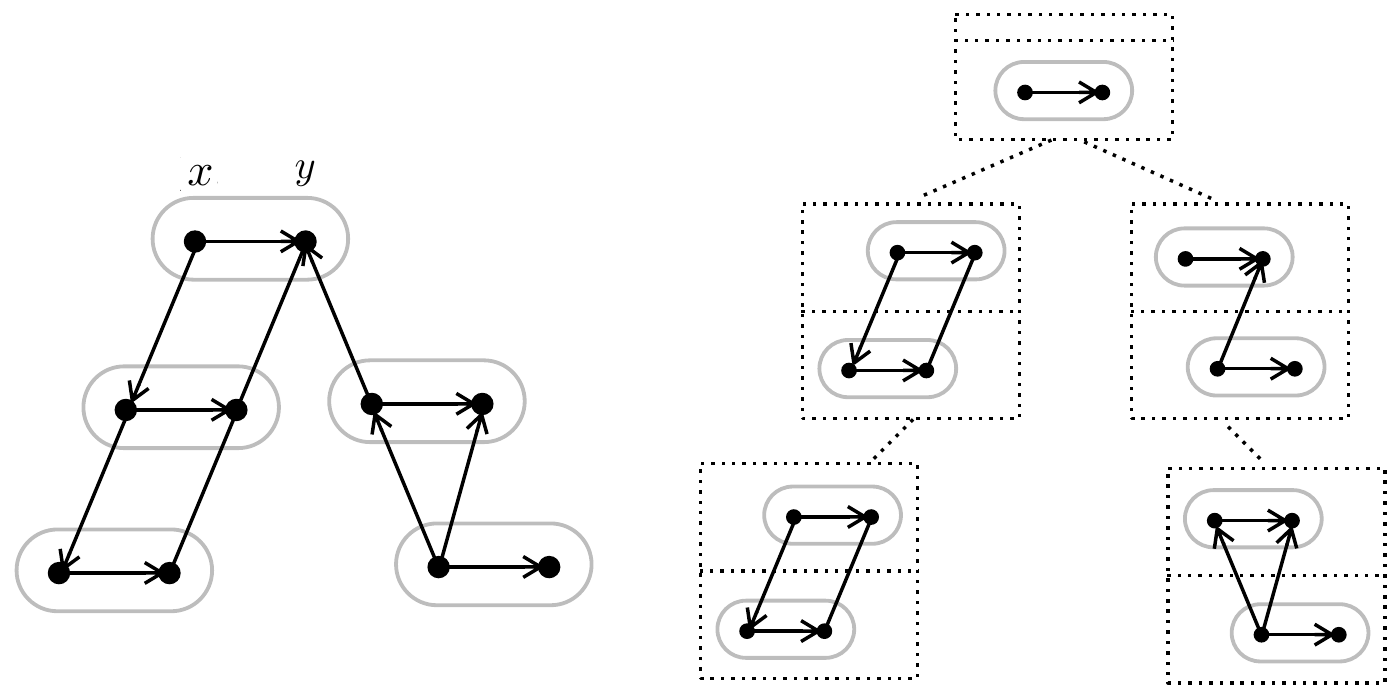}
  \caption{A constraint graph with registers $x$ and $y$ for each logical element (left) and its tree representation (right). 
The label $\lambda$  is empty for all nodes and not shown. 
} 
  \label{fig:constrgraph}
\end{figure}

\begin{example}
\label{ex_constraint}
  The left hand side of Figure~\ref{fig:constrgraph} shows a constraint graph
  for an interpretation where each element satisfies $\exists \epsilon. \lpath S^0 x < S^0 y \rpath$, 
	 the root and its right child satisfy $\exists r. \lpath S^0 y  > S^1 x \rpath$, the root and its left child 
	satisfy $\exists r. \lpath (S^0 x < S^1 x) \wedge (S^0 y = S^1 y) \rpath$, and the right child of the root additionally
	satisfies $\exists r. \lpath (S^0 x\,{<}\,S^1 x) \wedge (S^0 y < S^1 x)\rpath$. 

  This constraint graph is embeddable into $\bZ$. Consider, however, an
  infinite interpretation in which the leftmost branch of the constraint graph
  repeats infinitely. Then we would have
	paths from the $x$ to the $y$ register of the root
	involving any finite number of edges from $E_<$, which would imply 
	that the integer values assigned to these registers must have infinitely many different integer values between them.
	Thus in this case, the graph would not be embeddable.
  \end{example}

Abstractions and constraint graphs allow us to reduce 
$\ALCUP(\cZ_c)$  satisfiability to two separate checks:

\begin{theorem}[Carapelle and Turhan, \cite{CarapelleTurhan16}]
\label{th_split_checks}
$\cC$ is satisfiable \wrt $\cT$ if and only if there is a tree-shaped $\cI_a \models_{\cT_a} \cC_a$ such that
$\cG_{\cI_a}$ is embeddable into $\cZ_c$.
\end{theorem}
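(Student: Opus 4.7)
The plan is to prove the two implications of the biconditional separately: the left-to-right direction invokes the tree model property (Theorem~\ref{th_n_tree_model}), while the right-to-left direction explicitly builds an $\ALCUP(\cZ_c)$ model by combining the plain interpretation of $\cI_a$ with the register values supplied by the embedding. Throughout I work in atomic normal form (guaranteed by Lemma~\ref{lem_new_normal_form}) and negation normal form, so every path constraint appears as $\exists r.\lpath \Theta\rpath$ or $\forall r.\lpath \Theta\rpath$ with $|r|\leq 1$ and atomic $\Theta$, and each placeholder $B$ occurs only positively in $\cC_a$ and $\cT_a$.

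For the forward direction, assume $\cC$ is satisfiable w.r.t.\ $\cT$; by Theorem~\ref{th_n_tree_model} fix a tree-shaped model $\cI=(\Delta^\cI,\cdot^\cI,\beta)$. Define $\cI_a$ on the same domain, agreeing with $\cI$ on all concept and role names, and interpret each placeholder $B$ for a constraint $\Theta$ as the set of elements at which the appropriate tuple satisfies $\Theta$ under $\beta$. A routine structural induction on NNF subconcepts $D$ shows $D^\cI = D_a^{\cI_a}$, the only nontrivial case being path constraints, where the equivalence of $\exists r.\lpath \Theta\rpath$ and $\exists r.B$ (respectively, the universal form) is immediate from the choice of $B^{\cI_a}$. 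Thus $\cI_a$ is a tree-shaped model of $\cC_a$ w.r.t.\ $\cT_a$. Setting $\kappa(u,x):=\beta(u,x)$ then embeds $\cG_{\cI_a}$ into $\cZ_c$: every edge of $\cG_{\cI_a}$ arises from some $v\in B^{\cI_a}$ for a placeholder of a binary atomic constraint, which by construction means $\Theta$ is satisfied by the corresponding tuple under $\beta=\kappa$; the unary-label cases $B$ for $S^0 x = c$ are analogous.

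For the backward direction, let $\cI_a$ be a tree-shaped model of $\cC_a$ w.r.t.\ $\cT_a$ and $\kappa$ an embedding of $\cG_{\cI_a}$. Build $\cI$ by keeping the concept and role interpretations of $\cI_a$ and setting $\beta(u,x):=\kappa(u,x)$. The crucial observation is that whenever $v\in B^{\cI_a}$ for a placeholder $B$ of a constraint $\Theta$, the definition of $\cG_{\cI_a}$ forces the corresponding edge or label to be present, which $\kappa$ respects; hence the relevant tuple of $\cI$ satisfies $\Theta$ under $\beta$. This yields $(\exists r.B)^{\cI_a}\subseteq (\exists r.\lpath\Theta\rpath)^\cI$ and $(\forall r.B)^{\cI_a}\subseteq (\forall r.\lpath\Theta\rpath)^\cI$, and since every placeholder occurs positively in the NNF of $\cC_a$ and $\cT_a$, a monotonic structural induction gives $D_a^{\cI_a}\subseteq D^\cI$ for all subconcepts $D$. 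Applied to each axiom rewritten as $\top \sqsubseteq G$, and to $\cC$, this yields $\cI\models \cT$ and $\cC^\cI\neq\emptyset$.

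The main obstacle is coordinating the abstract concept name $B$ with the independently chosen integer assignment $\kappa$: nothing in the plain $\ALCF$ semantics of $\cI_a$ ties $B^{\cI_a}$ to the actual truth of $\Theta$, and conversely $\kappa$ may satisfy $\Theta$ at locations where $B$ does not hold. The reason this causes no trouble is precisely the ANF restriction: since $|r|\leq 1$ and $\Theta$ is atomic, each placeholder $B$ refers to a single tree edge and a single binary (or unary) comparison, so $\cG_{\cI_a}$ faithfully records every implication that $\cI_a$ demands of $\kappa$, and NNF ensures the one-sided containments $B^{\cI_a}\subseteq \{v\mid \Theta \text{ holds at }v\}$ suffice. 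Without ANF, placeholders would range over longer paths and one would need richer graph structure or a bidirectional correspondence that the monotonic induction above cannot provide.
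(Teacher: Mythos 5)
This statement is imported verbatim from Carapelle and Turhan \cite{CarapelleTurhan16}; the paper gives no proof of it, so there is nothing internal to compare against. Your reconstruction is correct and is the standard argument: in the forward direction you interpret each placeholder exactly as the set where its constraint holds and take $\kappa:=\beta$, and in the backward direction you take $\beta:=\kappa$ and use that placeholders occur only positively in NNF, so the one-sided containment $B^{\cI_a}\subseteq\{v\mid \Theta \text{ holds at } v\}$ propagates monotonically to all subconcepts and axioms. Your closing remark correctly identifies why ANF is what makes the constraint graph a faithful record of the obligations on $\kappa$.
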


\subsection{Embeddability condition} 
\label{subsec_embd_cond}
Our first aim is to test  embeddability using tree automata.
For this, we represent 
(augmented) constraint graphs as trees.
In a nutshell, our \emph{tree representations} are tree decompositions
(cf.\,\cite{Diestel}) 
where each bag holds
the subgraph induced by a logical element $u \in \Delta^{\cI_a}$ and its parent.
Since our constraints have maximal depth $1$,
we can do this using an alphabet that stores information about two  logical
elements: a parent at the top ($\mathrm{top}$) and a child at the bottom
($\mathrm{bot}$).

This is illustrated in Figure \ref{fig:constrgraph}, where 
the tree representation of the constraint 
graph is shown on the right hand side.
Note that the bottom part of the label of each vertex induces the same graph
at the top part of the label of each child, and that the label of the root vertex 
has no top row.

For these representations we use two copies $x^{\mathrm{top}}$ and
$x^{\mathrm{bot}}$ of each $x \in \sfReg_{\cC,\cT}$, and call the respective
sets $\sfReg_{\cC,\cT}^{\mathrm{top}}$ and $\sfReg_{\cC,\cT}^{\mathrm{bot}}$.
The relevant information about (in)equalities with constants is
stored as a partial labeling in these tree representations.
Let $c_{0}$ be the smallest integer used in either $\cC$ or $\cT$ and let $c_\alpha$ be the largest. If no
integers were used, set $c_0 = c_\alpha = 0$.
Denote by $[c_0,c_{\alpha}]$ the range of integers between $c_0$ and $c_{\alpha}$, inclusive.
Let $\mathbf{U} = \{U_{<c_0}, U_{c_{\alpha} < }\} \cup \{U_{c} \mid  c \in [c_0, c_{\alpha}]\}$ be fresh labels.
Let $\Sigma$ be the set of partially $\mathbf{U}$-labeled graphs where the vertex set is either exactly  
$V = \mathsf{Reg}_{\cC,\cT}^{\rtop} \cup \mathsf{Reg}_{\cC,\cT}^{\mathrm{bot}}$ 
or $V = \mathsf{Reg}_{\cC,\cT}^{\mathrm{bot}}$, 
and $E = E_{<} \cup E_{=}$.

\begin{definition}[Tree representation of constraint graph]
\label{def_tree_representation}
Let $\mathcal{G}$ be the constraint graph of some plain interpretation $\cI_a$. 
For $u \in \Delta^\mathcal{I}$ with  parent $v$, define $X(u)$ as the subgraph of $\mathcal{G}$ induced by 
$\{(u,x) \mid x \in \mathsf{Reg}_{\mathcal{C},\mathcal{T}}\} \cup \{(v,x) \mid x \in \mathsf{Reg}_{\mathcal{C},\mathcal{T}}\}$.
For $u = \varepsilon$, define $X(u)$ as the subgraph of $\mathcal{G}$ induced by $\{(u,x) \mid x \in \mathsf{Reg}_{\mathcal{C},\mathcal{T}}\}$.
Let $Y(u)$ be the following partially $\mathbf{U}$-labeled graph:
\begin{itemize}
	\item 
	The vertices of $Y(u)$ are obtained from $X(u)$ by renaming $(u,x)
        \mapsto x^{\mathrm{bot}}$ and $(v,x) \mapsto x^{\rtop}$ for every $x
        \in \sfReg_{\cC,\cT}$. 
	\item
	The edges of $Y(u)$ are exactly those of $X(u)$ (under the renaming).
	\item
	We have $U_c(x^{\rbot})$ if and only if $(u,x)$ is labeled with a
        placeholder for $=c$, and similarly for $x^{\rtop}$. 
\end{itemize} 

The \emph{tree representation} $\mathrm{Tr}(\mathcal{G})$ of $\mathcal{G}$ is the tree over $\Sigma$
where $\mathrm{Tr}(\mathcal{G})(u) = Y(u)$.
\end{definition}

Rather than considering tree representations where  nodes are labeled with
arbitrary graphs from $\Sigma$, it will be convenient to consider trees over a
restricted alphabet that contains only graphs that have been enriched with
implicit information in a  maximal consistent way.

\begin{definition}[Frame]
\label{def_frame}
A \emph{frame} is a graph in $\Sigma$ such that: 
\begin{enumerate}
	\item 
	there is an edge between every pair of vertices
	\item
	there are no strict cycles, i.e.\ no cycles that include an edge from $E_{<}$
	\item
	if $e_=(x,y)$ then also $e_=(y,x)$
	\item 
	every vertex must have exactly one of the labels in $\mathbf{U}$
	\item
	\label{cond_equality}
	$e_=(x,y)$ iff $x$ and $y$ have the same label from $\mathbf{U}$.
	\item
	  If $e_<(x,y)$ then either       
	\begin{inparaenum}
		\item 
		$U_{<c_0}(x)$, or
		\item
		$U_{c_{\alpha} <}(y)$, or
		\item
		$U_{c_i}(x)$ and $U_{c_j}(y)$ with $c_i,c_j \in [c_0,
                  c_\alpha]$ and $c_i < c_j$. 
	\end{inparaenum}
\end{enumerate}
We denote the alphabet of frames by $\Sigma_{\fr}$.
\end{definition}

\begin{definition}[Framified constraint graph]
\label{def_framified}
We say that an augmentation $\cG_{\mathrm{fr}}$ of a constraint graph $\cG$ is a \emph{framified constraint graph}
if its tree representation is over $\Sigma_{\mathrm{fr}}$.
\end{definition}
Note that a constraint graph may have multiple framifications; e.g. if not all
registers are compared to a constant. It may also have no framifications; e.g. if it contains a strict cycle.
In fact, a framification may not introduce strict cycles. 
\begin{lemma}
\label{lem_no_cycles_in_framification}
Let $\cG_\fr$ be a framified constraint graph. Then there are no strict cycles in $\cG_\fr$.
\end{lemma}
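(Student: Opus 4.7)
The plan is to linearly order the label set $\mathbf{U}$ and show that strict edges of $\cG_\fr$ strictly increase endpoint labels, so that any strict cycle would yield $L(v_0) \prec L(v_0)$, a contradiction. Order $\mathbf{U}$ as
\[
U_{<c_0} \;\prec\; U_{c_0} \;\prec\; U_{c_0+1} \;\prec\; \cdots \;\prec\; U_{c_\alpha} \;\prec\; U_{c_\alpha <}.
\]
Since the tree representation of $\cG_\fr$ lies in $\Sigma_\fr$, frame condition~(4) forces every vertex of every bag to carry exactly one label, and Definition~\ref{def_tree_representation} makes the labels of $x^{\rbot}$ in $Y(u)$ and of $x^{\rtop}$ in $Y(u')$ (for any child $u'$ of $u$) agree. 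So each vertex $(u,x)$ of $\cG_\fr$ has a well-defined global label $L(u,x) \in \mathbf{U}$.

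The first step is a purely local claim: if $e_<(x,y)$ holds inside a frame $F \in \Sigma_\fr$, then $L(x) \prec L(y)$ strictly. Condition~(6) of the frame yields $L(x) \preceq L(y)$ by a routine case analysis on (i)--(iii). To exclude equality, suppose $L(x) = L(y)$; condition~(5) then provides $e_=(x,y)$, condition~(3) provides $e_=(y,x)$, and hence $x \to_< y \to_= x$ is a cycle in $F$ that contains an edge of $E_<$, violating condition~(2). So $L(x) \prec L(y)$. By condition~(5) alone, every $e_=(x,y)$ inside a frame satisfies $L(x) = L(y)$.

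The second step is to observe that every edge of $\cG_\fr$ is realized inside a single bag of the tree representation. By Definition~\ref{def_constraint_graph}, edges of the underlying constraint graph connect either two registers of a common element or registers of an element and its parent, and in both cases the corresponding pair of vertices co-occurs in the bag $Y(u)$ for the child element $u$ (or in $Y(\varepsilon)$ at the root). Framification can only add further edges within individual bags, since being a frame is a per-bag condition. Hence the local claim applies to every edge of $\cG_\fr$.

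Suppose for contradiction $\cG_\fr$ contains a strict cycle $v_0 \to v_1 \to \cdots \to v_n = v_0$. Applying the local statements edge by edge, equality edges preserve $L$ while strict edges strictly increase it in $\prec$; composing around the cycle yields $L(v_0) \prec L(v_0)$, which is impossible in a linear order. Hence $\cG_\fr$ has no strict cycles. I expect the only real subtlety to be the bookkeeping of the second step---checking that every edge truly sits inside a single frame and that the label assignment is globally consistent across bags sharing a vertex---but both follow directly from the definitions.
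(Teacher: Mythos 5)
Your proof is correct, but it takes a genuinely different route from the paper's. The paper argues by induction on the number of logical elements spanned by a hypothetical strict cycle: using frame condition (1) (an edge between every pair of vertices in a bag) and condition (2) (no local strict cycles), it replaces the subpath of the cycle lying in the deepest bag by a single edge of at least the same strictness, contracting the cycle until it fits inside one frame and contradicts condition (2) directly. You instead build a global potential function: the linear order on $\mathbf{U}$, the fact that condition (4) gives every vertex a well-defined label that is shared across the bags containing it (both copies $x^{\rtop}$ and $x^{\rbot}$ are induced by the single graph vertex $(u,x)$), and conditions (2), (3), (5), (6) together force $L(x)\prec L(y)$ on every $E_<$ edge and $L(x)=L(y)$ on every $E_=$ edge, so a strict cycle is impossible. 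Your local case analysis is right, including the slightly delicate point that in cases (i) and (ii) of condition (6) equality of labels must be excluded via (5)+(3)+(2), and the observation that every edge of $\cG_\fr$ (original or added by framification) lives inside a single bag is exactly the bookkeeping needed. What each approach buys: yours is shorter and arguably more transparent, but it leans on the $\mathbf{U}$-labels, i.e.\ on conditions (4)--(6), which are specific to this concrete domain; the paper's contraction argument uses only conditions (1) and (2), so it is more robust to changes in the frame definition (e.g.\ the $\sfund$ variant in Section~\ref{sec_other_pred}), and the same contraction technique is reused later in the proof of Lemma~\ref{lem_path_shape}, which is presumably why the authors chose it.
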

\begin{proof}[Proof Sketch]
We show by induction that if a strict cycle spanning the registers of $k$ logical elements
exists, then due to the existence of an edge between every pair of vertices, there is also a strict cycle spanning the registers of $k-1$ logical elements.
Repeating until the strict cycle spans at most 2 logical elements, we obtain a contradiction to Def.\,\ref{def_frame}.
\end{proof}

An embeddable constraint graph can always be framified. 
\begin{observation}
\label{obs_framification}
Let $\cG$ be an embeddable constraint graph. Then there exists a framification of $\cG$.
\end{observation}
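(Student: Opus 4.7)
The plan is to construct a framification $\cG_\fr$ directly from any embedding $\kappa:V\to\bZ$ witnessing the embeddability of $\cG$. The idea is to use $\kappa$ to both label every vertex with an element of $\mathbf{U}$ and to complete each frame of the tree representation into a graph satisfying Def.\ \ref{def_frame}.

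First, I would assign labels based on the location of each value relative to the constant range: a vertex $(u,x)$ receives $U_c$ if $\kappa(u,x)=c\in[c_0,c_\alpha]$, receives $U_{<c_0}$ if $\kappa(u,x)<c_0$, and receives $U_{c_\alpha<}$ if $\kappa(u,x)>c_\alpha$. This immediately yields condition 4 of Def.\ \ref{def_frame} and agrees with every $=c$ placeholder label already carried by $\cG$, since embeddability forces the $\kappa$-value of such a vertex to be exactly $c$. Second, for each pair $p,q$ of vertices co-occurring in some frame, I add $e_=(p,q)$ in both directions whenever $\kappa(p)=\kappa(q)$, and otherwise add a single $e_<$-edge in the direction prescribed by Def.\ \ref{def_constraint_graph}, keeping all existing $\cG$-edges (which are consistent with $\kappa$).

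With this augmentation, conditions 1 and 3 are immediate; condition 2 (no strict cycles) follows because any strict cycle would project via $\kappa$ to a strictly decreasing cycle in $\bZ$; and condition 6 is a routine case analysis that pairs each $e_<$-edge with the labels of its endpoints. The ``$e_=$ implies equal labels'' direction of condition 5 holds because equal $\kappa$-values yield identical labels, and the converse is automatic whenever the shared label is $U_c$ with $c\in[c_0,c_\alpha]$, since both endpoints must then have $\kappa$-value exactly $c$.

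The main obstacle is the remaining case of condition 5: two vertices in the same frame sharing label $U_{<c_0}$ (or $U_{c_\alpha<}$) must be joined by $e_=$, which would conflict with an $e_<$-edge that the second step might add between them. To overcome this, I plan to first preprocess $\kappa$ into a refined embedding $\kappa'$ that, inside every frame, collapses all vertices it sends outside $[c_0,c_\alpha]$ to a single common value strictly below $c_0$ and a single one strictly above $c_\alpha$. Such a $\kappa'$ can be built because the only rigid constraints on $\kappa$ come from the $=c$ placeholders (which lie inside $[c_0,c_\alpha]$), so values outside this range are free to be merged as long as no $\cG$-edge of $e_<$-type links two vertices scheduled for merging. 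The merging can then be carried out by a top-down traversal of the tree representation, exploiting the absence of strict cycles inside each frame of $\cG$ to schedule the value assignments consistently across the shared parent/child boundaries between adjacent frames.
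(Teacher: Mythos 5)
Your construction gets the right first two steps (clamp $\kappa$ to obtain the $\mathbf{U}$-labels, orient the missing edges by comparing $\kappa$-values), and you correctly identify that the only real difficulty is condition 5 of Definition~\ref{def_frame} for two vertices of the same frame that both land outside $[c_0,c_\alpha]$ with distinct values. The paper itself offers no proof of this observation, so the comparison is only against the definition. The problem is that your repair --- preprocessing $\kappa$ into a $\kappa'$ that merges, frame by frame, all below-range values into one value and all above-range values into one value --- cannot always be carried out, and you do not handle the failure case. You hedge with ``as long as no $\cG$-edge of $e_<$-type links two vertices scheduled for merging,'' but such an edge can be forced. Take $\cC=\exists\epsilon.\lpath S^0x<S^0y\rpath\sqcap\exists\epsilon.\lpath S^0y<S^0z\rpath\sqcap\exists\epsilon.\lpath S^0z=c_0\rpath$ with $c_0=c_\alpha$ the only constant: the constraint graph has, inside a single frame, a chain of two strict $\cG$-edges ending at a vertex whose label is pinned to $U_{c_0}$. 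The graph is embeddable ($c_0-2,c_0-1,c_0$), yet condition 6 forces the middle vertex to carry $U_{<c_0}$, and then the remaining vertex can carry neither $U_{<c_0}$ (condition 5 would force an $e_=$ edge closing a strict cycle with the existing $e_<$ edge, violating condition 2) nor any other label (condition 6 fails for the first strict edge). No merging and no relabeling escapes this, because the two below-range vertices are separated by a strict edge of $\cG$ itself.

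So the gap is not just that your traversal argument is sketched too loosely; it is that the statement, read against the literal ``iff'' in condition 5, fails on this example, and hence no completion of your argument (or any other) can close it without reinterpreting the definition. The reading that makes the observation true --- and that the rest of the paper actually uses --- restricts the ``same label implies $e_=$'' direction of condition 5 to the constant labels $U_c$ with $c\in[c_0,c_\alpha]$, leaving vertices that share $U_{<c_0}$ or $U_{c_\alpha<}$ free to be related by $e_<$. Under that reading your first two paragraphs already constitute a complete proof (the case you call the ``main obstacle'' simply does not arise, since condition 6 is satisfied by clause (a) or (b) whenever an endpoint is out of range), and the entire merging machinery, including the delicate cross-frame consistency of $\kappa'$, should be deleted rather than repaired. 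As written, the proposal rests its weight on a preprocessing step that is sometimes impossible.
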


In the tree representation of (framified) constraint graphs,
the $\rbot$ part of a vertex 
coincides with the $\rtop$  of its children. 

\begin{definition}[Consistent frames] 
\label{def_consistent_frames}
Let $\sigma_1,\sigma_2 \in \Sigma_\fr$. We call the pair
$(\sigma_1,\sigma_2)$ \emph{consistent}
if the following are equal:\\
$\bullet$ the subgraph induced by the $\sfReg_{\cC,\cT}^\rbot$ vertices of $\sigma_1$,
and\\
$\bullet$  the result of renaming each $x^\rtop$ to $x^\rbot$
in the subgraph induced by  the $\sfReg_{\cC,\cT}^\rtop$ vertices of
$\sigma_2$.
\end{definition}

Not every tree $T$ over $\Sigma_\fr$ corresponds to a framified constraint graph, but when
all parent-child pairs are consistent, we can refer to the framified constraint graph $T$ represents:

\begin{definition}
  Let $T$ be a tree over $\Sigma_\fr$.
We call $T$ \emph{consistent} if
the pair $(T(v),T(vh))$ is consistent
for every $v \in [n]^\star$ and every $h \in [n]$. We
denote by $\cG_\fr^T$ the framified constraint graph  with
$\mathrm{Tr}(\cG_\fr^T) = T$, and say that $T$ represents 
$\cG_\fr^T$. 
\end{definition}

We use the following terminology for talking about paths.

\begin{definition}\label{def_path_along}
Let $\bfw = \gamma_1 \gamma_2 \cdots$ be a finite or infinite word over $[n]$ and let $u \in [n]^\star$.
\emph{A path along $\bfw$ from $(u,x)$} is a path of the form $p = (u,x) - (u\gamma_1, x_1) - (u\gamma_1 \gamma_2, x_2) - \cdots$.

An infinite path $p : \mathbb{N} \rightarrow \Delta \times \mathsf{Reg}$ is a \emph{forward path}
if for every $n \in \mathbb{N}$, there is an edge from $p(i)$ to $p(i+1)$. 
It is a \emph{backward path} if for every $n \in \mathbb{N}$, there is an edge from $p(i+1)$ to $p(i)$.
The \emph{strict length} of a finite path $p$ is the number of strict edges in $p$. 
For an infinite path $p$, we say that \emph{$p$ is strict} if it has
infinitely many strict edges. 
\end{definition}

The following condition on framified constraint graphs will be crucial to
deciding embeddability:

\begin{description}
\item[$(\condsym)$]
There are no $(u,x),(u,y) \in \Delta \times \mathsf{Reg}_{\cC,\cT}$ in $\mathcal{G}_{\mathrm{fr}}$ 
for which we have that: there exists an infinite $\mathbf{w} \in
[n]^\omega$, and 
\begin{enumerate}
	\item 
	an infinite forward path $f$ from $(u,x)$ along $\mathbf{w}$, and 
	\item
	an infinite backward path $b$ from $(u,y)$ along $\mathbf{w}$
\end{enumerate}
such that $f$ or $b$ is strict, and such that for every $i \in \mathbb{N}$, there is a strict edge from $f(i)$ to $b(i)$.
\end{description}

Indeed,  $(\condsym)$ is a necessary  condition  for embeddability: 
\begin{lemma}
\label{lem_condition_necessary}
If a constraint graph is embeddable, then it satisfies the condition $(\condsym)$.
\end{lemma}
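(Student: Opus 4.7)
I would argue by contraposition: assume some framified constraint graph $\cG_\fr$ violates $(\condsym)$ via some $(u,x),(u,y)$, infinite word $\bfw$, forward path $f$ from $(u,x)$, and backward path $b$ from $(u,y)$, and show that no embedding $\kappa$ into $\cZ_c$ can exist.

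The key is to extract monotonicity from the definitions of forward and backward paths, using the fact that an edge $((v,y),(u,x))\in E_{<}$ forces $\kappa(u,x) < \kappa(v,y)$ and an edge in $E_{=}$ forces equality. Since $f$ is forward, for each $i$ there is an edge from $f(i)$ to $f(i+1)$, so $\kappa(f(i+1)) \leq \kappa(f(i))$, with strict inequality whenever that edge is in $E_<$. Dually, since $b$ is backward, for each $i$ there is an edge from $b(i+1)$ to $b(i)$, so $\kappa(b(i)) \leq \kappa(b(i+1))$, with strict inequality whenever that edge lies in $E_<$. Finally, the hypothesis that there is a strict edge from $f(i)$ to $b(i)$ gives $\kappa(b(i)) < \kappa(f(i))$ for every $i \in \bN$.

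Now I would split into two cases and in each derive a contradiction with the fact that the values are integers. If $f$ is strict, then $\kappa(f(i))$ is a non-increasing integer sequence that strictly decreases infinitely often, hence $\kappa(f(i)) \to -\infty$; combined with $\kappa(b(i)) < \kappa(f(i))$, this forces $\kappa(b(i)) \to -\infty$, contradicting that $\kappa(b(i))$ is non-decreasing and thus bounded below by $\kappa(b(0))$. Symmetrically, if $b$ is strict, then $\kappa(b(i)) \to +\infty$, which forces $\kappa(f(i)) \to +\infty$, contradicting that $\kappa(f(i))$ is non-increasing and bounded above by $\kappa(f(0))$.

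Since nothing in the argument depends on subtle features of framifications or on constants $c_0, c_\alpha$, I do not expect any serious obstacle: the whole proof is essentially a monotonicity-plus-well-foundedness observation exploiting that the ambient domain is $\bZ$. The mildly delicate point is merely to parse the edge-direction conventions carefully, so that ``forward'' yields non-increasing values and ``backward'' non-decreasing values, which is precisely where the failure of the analogous claim over a dense domain like $\bQ$ would show up (there the sequences could converge without contradiction, which is exactly the non-density phenomenon motivating the paper).
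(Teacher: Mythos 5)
Your argument is correct and takes essentially the same approach as the paper's own proof of Lemma~\ref{lem_condition_necessary}: both derive a contradiction from the unbounded accumulation of strict edges, the paper by building, for each $m$, a finite directed path of strict length at least $m$ from $(u,x)$ to $(u,y)$ (a prefix of $f$, a crossing strict edge, then the reversed prefix of $b$) so that $\kappa$ must satisfy an impossible gap, and you by tracking the two monotone integer sequences $\kappa(f(i))$ and $\kappa(b(i))$ directly --- the same discreteness-of-$\bZ$ observation in slightly different clothing. One small caveat: the orientation convention in the paper's embeddability definition is stated inconsistently with the semantics of $E_<$ in Definition~\ref{def_constraint_graph} (you followed the former literally, the paper's proof follows the latter), but either consistent reading yields the same argument, and you rightly flag this as the only delicate point.
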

\begin{proof}[Proof Sketch]
By contradiction. The existence of such a pair and paths would imply that the integers assigned to $(u,x)$ and $(u,y)$ 
have infinitely many different integers between them, since a path of strict length $k$ from $(u,x)$ to $(u,y)$
implies there being a difference of at least $k$ between their assigned values.
\end{proof}

Unfortunately, it is not sufficient in general.

\begin{example}[From \cite{DemriD07}]
  Figure~\ref{fig:ladder} shows an example of a constraint graph; to avoid clutter, we omitted the edges 
  augmented in its framification.
  It satisfies the condition $(\condsym)$ since there is no path with
infinitely many strict edges.
It is not embeddable into $\bZ$: indeed, for any
$n$, there is a path with at least $n$ strict edges between $x$ and $z$.
\end{example}

\begin{figure}
  \centering
  \includegraphics[scale=.57]{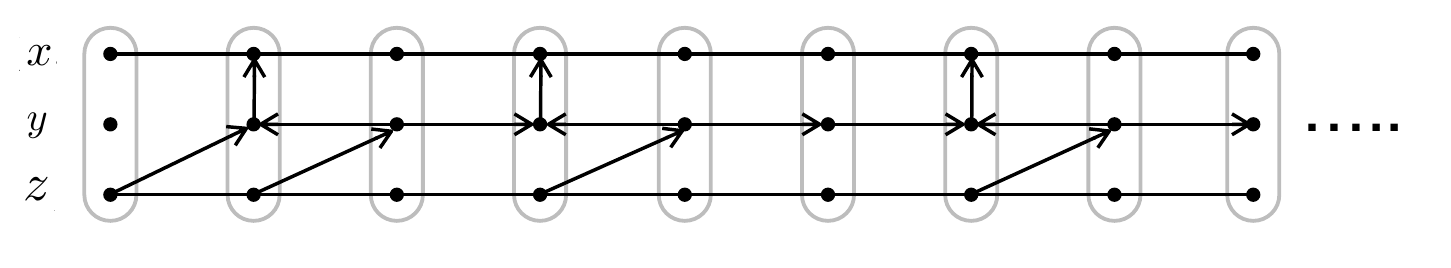}
  \caption{A constraint graph that satisfies $(\condsym)$
but is not embeddable into  $\bZ$.}
  \label{fig:ladder}
\end{figure}

Nonetheless, the condition will allow us to effectively test embeddability,
since it is  sufficient for \emph{regular} framified constraint graphs.

\begin{definition}
\label{def_regular_tree}
\label{def_regular_constraint_graph}
For an $n$-tree $T$ over $\Sigma$, the \emph{subtree rooted at $w \in [n]^\star$} is the tree $\left.T\right|_{w}(v) = T(wv)$
for all $v \in [n]^\star$.  

We say that an $n$-tree $T$ over $\Sigma$ is \emph{regular} if the set 
$\{\left. T \right|_{u} \mid u \in [n]^\star\}$ of subtrees of $T$ is finite.
We say that a constraint graph is regular if its tree representation is regular.
\end{definition}
\noindent
The next key lemma is the most technical result of the paper.
\begin{lemma}
\label{lem_condition_suff_for_regular}
Let $\cG_{\fr}$ be a regular framified constraint graph. If $\cG_{\fr}$ satisfies $(\condsym)$, then it is embeddable.
\end{lemma}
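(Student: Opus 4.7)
My plan adapts a pumping argument in the spirit of Demri and D'Souza~\cite{DemriD07} to the tree setting of framified constraint graphs. I would first factor $\cG_{\fr}$ by the equivalence $\sim$ generated by $E_{=}$. Lemma~\ref{lem_no_cycles_in_framification} ensures the quotient has no strict cycles, and the frame conditions guarantee that every $\sim$-class carries a unique $\mathbf{U}$-label consistent with the surrounding strict edges. Classes labeled $U_c$ for $c\in[c_0,c_\alpha]$ are forced to value $c$; by frame condition~(6) this is automatically consistent, and the task reduces to assigning integers to classes labeled $U_{<c_0}$ or $U_{c_\alpha<}$ respecting all remaining strict edges.

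For vertices $a,b$ in the quotient, let $d(a,b)$ be the supremum, over finite paths from $a$ to $b$ in which every $E_{<}$-edge is traversed in its forward direction, of the number of $E_{<}$-edges on the path. If $d(a,b)$ is finite for all relevant pairs, an embedding can be built in a standard layered fashion: each $U_c$-class gets value $c$; each $U_{<c_0}$-class $v$ gets value $c_0-1-\sup_{w} d(v,w)$ over labeled anchors $w$ reachable from $v$, with sufficient slack to separate components disjoint from the anchors, and symmetrically for $U_{c_\alpha<}$. Verifying each edge is then a routine case analysis using frame condition~(6) and the bounded distances.

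The main obstacle, and the heart of the proof, is to show that $(\condsym)$ together with regularity forces $d(a,b)$ to be finite on every pair $a=(u,x)$, $b=(u,y)$ in the same bag. Suppose toward contradiction that $d(a,b)=\infty$. By regularity only finitely many isomorphism types of subtrees occur in $\mathrm{Tr}(\cG_{\fr})$. For each $k$ fix a path $\pi_k$ from $a$ to $b$ with at least $k$ forward strict edges; since each bag has bounded size, long $\pi_k$ must descend into the tree and return. Pigeonhole on subtree types forces the $\pi_k$ for large $k$ to contain a repeated type signature, and pumping this signature yields paths of arbitrarily large strict length all descending along a common pattern. A K\"onig's-lemma extraction then produces an infinite branch $\mathbf{w}\in[n]^{\omega}$ together with a forward path $f$ from $a$ (tracing the descent portion of the pumped witnesses) and a backward path $b'$ ending at $b$ (tracing their ascent), both along $\mathbf{w}$, with a strict rung $f(i)\to b'(i)$ at every level --- precisely the configuration forbidden by $(\condsym)$. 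The delicate step is reorganizing each zigzagging witness into two monotone branches connected by strict cross-edges; here one isolates the deepest node of the path and splits its forward and backward components there, then aligns the resulting pieces across different $k$ using the finite set of subtree types to obtain the single limiting branch $\mathbf{w}$.
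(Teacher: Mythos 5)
Your overall architecture coincides with the paper's: argue the contrapositive, localize an unbounded strict distance to a pair of registers at a single node, reshape zigzag witnesses into one descending and one ascending branch, and use regularity plus pigeonhole on subtree types to pump an infinite configuration violating $(\condsym)$. Two points need attention. The lesser one: your layered embedding needs $d(a,b)<\infty$ for \emph{all} reachable pairs (in particular between a vertex and the constant-labeled anchors, which generally lie in different bags), but your contradiction argument only establishes finiteness for same-bag pairs; you still need the localization lemma that unboundedness for an arbitrary pair forces unboundedness for some same-bag pair within a subtree, which the paper obtains by a K\"onig's-lemma argument (and the paper sidesteps your layered construction altogether by citing the EHD-property characterization of embeddability into $\cZ_c$, so that ``no strict cycles plus bounded strict distances implies embeddable'' is taken as known).

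The genuine gap is in the step you yourself flag as delicate. ``Isolating the deepest node of the path and splitting its forward and backward components there'' does not produce two monotone branches: each half of a zigzag witness still zigzags. The correct reshaping must proceed level by level, replacing every excursion below a given level by the edge that framification necessarily places between the first and last visit to that level (this edge exists, and is strict when the excursion is, by the frame conditions and Lemma~\ref{lem_no_cycles_in_framification}); and, crucially, one must prove that this contraction does not destroy the strict edges one intends to pump. This is where the paper's proof does its real work: it shows that a witness of strict length $N$ can be reshaped into a down-then-up path of strict length at least about $\log_n\bigl((N-\rho)(n-1)/\rho\bigr)$, where $n$ is the branching degree and $\rho$ the number of registers, a quantity that still tends to infinity with $N$; only then does the pigeonhole on pairs of the form (subtree type, register quadruple) yield a repeatable segment containing at least one strict edge. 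Without such a quantitative lower bound your pumping has nothing to pump --- the reorganized paths could a priori have bounded, or even zero, strict length --- so the proposal as written does not close the proof.
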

\begin{proof}[Proof sketch]
Let $\cG_{\fr}$ be a regular framified constraint graph which is not embeddable. 
The heart of the proof is showing that there is a pair $(u,x), (u,y)$ and a finite path 
from $(u,x)$ to $(u,y)$ of a certain shape and positive strict length,
which may be extended indefinitely to obtain the desired $f$ and $b$.

First, we show that there is a pair $(u,x), (u,y)$ such that for any $m \in \bN$, there is a path from $(u,x)$ to $(u,y)$ 
of strict length at least $m$ which only involves vertices whose logical
element has the prefix $u$, that is, the path only involves vertices in the subtree 
rooted at $u$.
However, the path may move down and up this subtree arbitrarily.
We then use framification to describe a path $p'$ of a specific shape, which first goes down along some $w$ and then goes back up.
The path $p'$ may have reduced strict length, but we show a lower bound on the strict length of $p'$ which is a function of $m$.

Next we use regularity to argue that for large enough $m$, the path $p'$ becomes long enough that it essentially starts repeating itself, 
thus allowing us to extend it indefinitely (as well as the word $w$ it runs long)
to obtain the desired forward and backward paths; $f$ is obtained by
concatenating the downward portion of $p'$ and $b$ is obtained by
concatenating the upward portion. The strict edges between $f$ and $b$ are
given by the framification.
\end{proof}

\subsection{A Rabin tree automaton for embeddability}
\label{subsec_automata_const}
We still face two hurdles:
verifying ($\condsym$), and ensuring that satisfiable $\cC$ \wrt $\cT$ have a model 
with a \emph{regular} constraint graph. 
We overcome both by using Rabin's tree
automata. 

Recall that the trees are over the alphabet of frames $\Sigma_\fr$ and are of degree $n$ i.e.\ their nodes are over $[n]^\star$.
\begin{definition}[Rabin tree automaton]
\label{def_Rabin_tree_automaton}
A \emph{Rabin tree automaton} over the alphabet $\Sigma$ has the form $\cA = (Q, q_0, \longrightarrow, \Omega)$ 
with a finite state set $Q$, initial state $q_0$, transition relation $\longrightarrow \subseteq Q \times \Sigma \times Q^n$,
and $\Omega = \{(L_1,U_1), \ldots, (L_m,U_m)\}$ is a collection of ``accepting pairs'' of state sets $L_i, U_i \subseteq Q$.
A \emph{run} of $\cA$ on a tree $T$ is a map $\rho : [n]^\star \rightarrow Q$ with $\rho(\varepsilon) = q_0$
and $(\rho(w),T(w),\rho(w1),\ldots,\rho(wn)) \in \longrightarrow$ for $w \in [n]^\star$.
For a path $\pi$ in $T$ and a run $\rho$ denote by 
$\mathrm{In}(\rho \mid \pi)$ the set of states that appear infinitely often in the restriction of $\rho$ to $\pi$.
A run $\rho$ of $\cA$ is \emph{successful} if 
\begin{center}
	for all paths $\pi$ there exists an $i \in [m]$ with \\
	$\mathrm{In}(\rho \mid \pi) \cap L_i = \emptyset$ and $\mathrm{In}(\rho \mid \pi) \cap U_i \neq \emptyset$.
\end{center}
A tree $T$ is \emph{accepted} by the Rabin tree automaton if some run of $\cA$ in $T$ is successful.
\end{definition}

\begin{theorem}[Rabin's Theorem, \cite{bk_rabin72}]
Any non-empty Rabin recognizable set of trees contains a regular tree.
\end{theorem}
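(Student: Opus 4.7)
The plan is to reduce the non-emptiness problem for a Rabin tree automaton $\cA = (Q, q_0, \longrightarrow, \Omega)$ to a two-player infinite game on a finite graph, and then extract a regular witness tree from a positional winning strategy. I would define the \emph{emptiness game} $\cG_\cA$ on positions $Q$: from $q$, the \emph{existential} player Eve chooses a transition $(q,\sigma,q_1,\ldots,q_n)\in\longrightarrow$; then the \emph{universal} player Adam picks a direction $i \in \{1,\ldots,n\}$, and play moves to $q_i$. Eve's winning condition on the resulting infinite sequence of states is precisely the Rabin condition $\Omega$. It is routine to verify that Eve wins from $q_0$ iff $L(\cA) \neq \emptyset$: a successful run $\rho$ on an accepted tree $T$ yields Eve's strategy ``at node $w$ reached in play, choose $(T(w),\rho(w1),\ldots,\rho(wn))$''; conversely, unfolding any winning strategy of Eve against every possible Adam response produces an accepted tree together with a successful run.

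The technical heart of the argument is \emph{positional (memoryless) determinacy of Rabin games}: if Eve has any winning strategy in $\cG_\cA$, she has one of the form $\sigma^\ast: Q \to \Sigma \times Q^n$ depending only on the current position. I would invoke this as a black box (the classical result of Emerson--Jutla, building on earlier work of Gurevich--Harrington and Klarlund), or alternatively derive it by a latest-appearance-record construction that recasts the Rabin condition as a parity condition and then appeals to memoryless determinacy of parity games.

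Given such a $\sigma^\ast$, I would construct the candidate tree $T^\ast$ together with a canonical run $\rho^\ast$ by inductive unfolding: set $\rho^\ast(\varepsilon) = q_0$, and for each $w$ with $\rho^\ast(w) = q$ and $\sigma^\ast(q) = (\sigma,q_1,\ldots,q_n)$, put $T^\ast(w) = \sigma$ and $\rho^\ast(wi) = q_i$ for each $i \in \{1,\ldots,n\}$. Every path in $T^\ast$ is a play in $\cG_\cA$ in which Eve follows $\sigma^\ast$, so $\rho^\ast$ is a successful run of $\cA$ on $T^\ast$, whence $T^\ast \in L(\cA)$. Regularity is immediate from positionality: the subtree $\left.T^\ast\right|_w$ is completely determined by the state $\rho^\ast(w)$, so $\{\left.T^\ast\right|_w \mid w \in [n]^\star\}$ has at most $|Q|$ elements.

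The main obstacle is the positional determinacy of Rabin conditions, which is the nontrivial ingredient; the reduction to a game and the extraction of a regular tree from a positional strategy are essentially bookkeeping once positionality is available.
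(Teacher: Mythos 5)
The paper does not prove this statement at all: it is imported verbatim as a classical black box, cited to Rabin's 1972 monograph, and used only once (in Lemma~\ref{lem_LA_neq_iff_sol}) to pass from non-emptiness of $\cA_{\emb}$ to the existence of a \emph{regular} accepted tree, to which the regularity-dependent Lemma~\ref{lem_condition_suff_for_regular} can then be applied. Your game-theoretic argument is the standard modern proof and is sound: the emptiness game is set up correctly (Adam's choice of direction quantifies over paths, so Eve winning every play matches the ``for all paths, some pair'' acceptance condition), and the extraction of a regular tree from a positional strategy --- subtree at $w$ determined by $\rho^\ast(w)$, hence at most $|Q|$ distinct subtrees --- is exactly right. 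One small caveat on your fallback route: the latest-appearance-record reduction to parity yields a \emph{finite-memory} strategy in the original arena, not a positional one (genuine positionality for the Rabin player is the Emerson--Jutla/Klarlund result and does not follow from LAR alone). This does not damage the proof, since with finite memory the subtree at $w$ is determined by the pair (state, memory content), which still gives finitely many distinct subtrees and hence regularity; but if you keep the LAR alternative you should state the conclusion as finite-memory determinacy rather than positionality. It is also worth noting that Rabin's original proof proceeded by a rather intricate induction on the number of accepting pairs; your route is the cleaner one and is the proof one would actually want to present here.
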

Since condition ($\condsym$) is necessary and sufficient for the embeddability of regular framified constraint graphs, 
we get:
\begin{lemma}
\label{lem_LA_neq_iff_sol}
Let $\cA_{\emb}$ be a Rabin tree automaton that accepts exactly the consistent trees over $\Sigma_\fr$ satisfying $(\condsym)$.
There is an embeddable constraint graph iff $L(\cA_{\emb}) \neq \emptyset$.
\end{lemma}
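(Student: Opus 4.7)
The plan is to prove the two directions of the biconditional by combining the results already established, with Rabin's theorem being the key ingredient that bridges ``some tree is accepted'' with ``some \emph{regular} tree is accepted,'' since it is only on regular framified constraint graphs that condition $(\condsym)$ is known to be sufficient.

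For the forward direction, I would start with an embeddable constraint graph $\cG$. By Observation~\ref{obs_framification}, $\cG$ admits a framification $\cG_\fr$. Since a framification only makes explicit relationships that already hold in any embedding of $\cG$ (it closes the graph under implied edges between all pairs of registers and fixes the constant-label of each vertex), $\cG_\fr$ is itself embeddable. Lemma~\ref{lem_condition_necessary} then gives that $\cG_\fr$ satisfies $(\condsym)$. By construction, $\mathrm{Tr}(\cG_\fr)$ is a tree over $\Sigma_\fr$ whose parent-child labels agree on the shared register row, so it is consistent. Thus $\mathrm{Tr}(\cG_\fr)$ lies in $L(\cA_{\emb})$.

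For the backward direction, assume $L(\cA_{\emb}) \neq \emptyset$. Applying Rabin's Theorem, there is a \emph{regular} tree $T \in L(\cA_{\emb})$. Because $\cA_{\emb}$ accepts only consistent trees, $T$ represents a framified constraint graph $\cG_\fr^T$ with $\mathrm{Tr}(\cG_\fr^T) = T$, and regularity of $T$ means $\cG_\fr^T$ is a regular framified constraint graph in the sense of Definition~\ref{def_regular_constraint_graph}. Since $T$ is accepted, it satisfies $(\condsym)$, and so by Lemma~\ref{lem_condition_suff_for_regular}, $\cG_\fr^T$ is embeddable. Finally, $\cG_\fr^T$ is an augmentation of an underlying constraint graph $\cG$ (obtained by dropping the edges added in framification); any integer assignment embedding $\cG_\fr^T$ also embeds $\cG$, so $\cG$ is embeddable.

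The two directions are not really hard once the previous machinery is in place; the main conceptual point to emphasize is why the restriction to regular trees is harmless. Condition $(\condsym)$ is only known to be equivalent to embeddability on regular graphs (Lemma~\ref{lem_condition_suff_for_regular}), while on arbitrary graphs it is only necessary (Lemma~\ref{lem_condition_necessary}, and the ladder counterexample of Figure~\ref{fig:ladder}). Without Rabin's theorem, accepting an arbitrary tree satisfying $(\condsym)$ would not give us embeddability of the corresponding graph; it is precisely Rabin's theorem that lets us extract a regular witness from a non-empty automaton language, closing the equivalence. No additional technical calculation is needed, since the alphabet $\Sigma_\fr$ and the notion of consistency have already been designed so that consistent trees correspond bijectively to framified constraint graphs.
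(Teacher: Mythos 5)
Your proof is correct and follows essentially the same route as the paper's: framification plus Lemma~\ref{lem_condition_necessary} for the forward direction, and Rabin's Theorem plus Lemma~\ref{lem_condition_suff_for_regular} for the converse. The extra details you supply (that the framification produced from an embedding is itself embeddable, and that embeddability of the framified graph transfers to the underlying constraint graph) are points the paper leaves implicit, but they do not change the argument.
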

\begin{proof}
If there is an embeddable constraint graph $\mathcal{G}$, then it has some framification $\cG_\fr$ (Observation~\ref{obs_framification}),
which satisfies the condition $(\condsym)$ (Lemma~\ref{lem_condition_necessary}). Therefore the tree representation of $\cG_\fr$ 
is accepted by $\cA_{\emb}$ and $L(\cA_{\emb}) \neq \emptyset$.
For the other direction, assume $L(\cA_{\emb}) \neq \emptyset$. 
Then by Rabin's Theorem, there is a regular tree $T \in L(\cA_{\emb})$, which satisfies the condition $(\condsym)$. 
By Lemma~\ref{lem_condition_suff_for_regular}, we have that the constraint graph represented by $T$ is embeddable.
\end{proof}

Therefore it remains to show that the condition ($\condsym$) is indeed
verifiable by a Rabin tree automaton. We do this next.

\paragraph{Checking consistency of trees}
In our constructions of automata, it is useful to assume that they run on
 trees over $\Sigma_\fr$ that are consistent (in the sense of
Definition~\ref{def_consistent_frames}), rather than complicating the
constructions by incorporating the consistency check. 
Therefore we first describe an automaton $\cA_{\mathrm{ct}}$ which accepts
exactly the consistent trees, which we later intersect with the appropriate
automata. $\cA_{\mathrm{ct}}$ simply verifies the 
conditions of Definition~\ref{def_consistent_frames} by only having transitions between
consistent pairs of frames, and making sure the root vertex is labeled with a frame whose vertex set consists exactly of $\mathsf{Reg}_{\cC,\cT}^{\mathrm{bot}}$. 
The comparison between subgraphs of pairs of frames requires $\cA_{\mathrm{ct}}$ to remember the previous letter, 
and therefore its state set is exponential in \size{\cC,\cT}.

\paragraph*{Verifying  $(\condsym)$ with a Rabin tree automaton}
We describe an automaton $\cB$ which runs on consistent trees over $\Sigma_\fr$, and
finds a pair of registers which violates 
($\condsym$).
The desired $\cA_{\mathrm{emb}}$ is the complement of $\cB$
intersected with $\cA_{\mathrm{ct}}$. 

We now define $\cB$ and describe its behavior. We let
\[ \mathcal{B} = (Q, q_0, \longrightarrow, (\emptyset, U)), \mbox{\qquad
  with:} \]

\begin{compactitem}
	\item $Q = \{q_0,q_1,q_2\} \cup Q_p$ where  $ Q_p$ is the set of
          \emph{path states}
	\[ Q_p = \sfReg_{\cC,\cT} \times \sfReg_{\cC,\cT} \times \{f,b \}
        \times \{ 0,1\}. \]
	\item
	We describe $\longrightarrow$ next. 
        The initial state $q_0$ and the state $q_2$ both
        represent that the problematic pair is (a) in the current subtree, (b) but not in the current node. 
From either of them, $\cB$ picks one child for which (a) is also true, and possibly also
(b). In the latter case, it moves to $q_2$ for that child, while
the other children go into $q_1$. State $q_1$ means that the problematic
pair is not in the subtree, and once $\cB$  visits some node in 
$q_1$, it stays in $q_1$ for all its descendants. 
          
	Denote by $\mathbf{q}^e_i$ the $n$-tuple containing $q_2$ for entry $i$
			and $q_1$ for every other entry.
	\begin{itemize}
		\item 
		For every $i \in [n]$ and $\sigma \in \Sigma_\fr$ we have 
		\[ q_0 \overset{\sigma}{\longrightarrow} \mathbf{q}^e_i
                \mbox{\qquad and \qquad}  q_2 \overset{\sigma}{\longrightarrow} \mathbf{q}^e_i\]
		\item
		For every $\sigma \in \Sigma_\fr$, we have
		$q_1 \overset{\sigma}{\longrightarrow} (q_1, \ldots, q_1)$ 
	\end{itemize}
        At some point, $\cB$ moves from a node where both (a) and (b) are true (that
is, $q_0$ or $q_2$) to a node where (b) no longer holds, i.e., it guesses that
the problematic pair is in that node $u$.
At this point, it guesses the problematic pair $x,y$
and whether it is the forward path $f$ or the backward path $b$ which will be
strict. This will be stored in the flag $f$ or $b$, which once chosen cannot change during the
run.

If the guessed pair $x,y$ has a $\leq$
relation (needed for the strict edge
from $f(0)$ to $b(0)$ required by ($\condsym$)), $\cB$
transitions accordingly to a path state $(x,y,f,0)$ or $(x,y,b,0)$;

For every $i \in [n]$, $h \in \{f,b\}$, and $- \in \{0,1\}$, denote by $(x, y, h , - )_i$ the $n$-tuple containing $(x, y, h , -)$ for entry $i$
		and $q_1$ for every other entry.
		\begin{itemize}
			\item
		          For every $i \in [n]$ and $h \in \{f,b\}$,
                          if $e_<(x^{\rbot},y^{\rbot}) \in \sigma$ we have 
		$q_0 \overset{\sigma}{\longrightarrow} (x,y,h,0)_i$
		and
		$q_2 \overset{\sigma}{\longrightarrow} (x,y,h,0)_i$
		\end{itemize}
                Then $\cB$ attempts to expand $f$ and $b$ by guessing a child
                $v$ and a new
                pair $z,w$ with a strict edge between $z$ and $w$. It moves 
                to the appropriate path state for the child $v$, and to $q_1$ for
                the remaining children.
                When doing so, is also uses another binary flag to indicate 
                whether $\cB$ just witnessed a 
strict edge relevant to $f$ or $b$ ($1$), or not ($0$).

We describe the transitions for the case where  the forward path
is strict; there are similar transitions for backward paths. 
 If the guess correctly   extends $f$ and $b$, that is,
                      \[ e_<(z^{\rbot},w^{\rbot}) \in \sigma \mbox{\qquad and
                        \qquad}  e_<(y^{\rtop},w^{\rbot}) \notin
                      \sigma \]
then, for every $i \in [n]$,
		\begin{itemize}
		\item
                  if the current edge  on the forward path is strict, that is, $e_<(x^{\rtop},z^{\rbot}) \in
                  \sigma$, we have 
 \[ (x,y,f,-) \overset{\sigma}{\longrightarrow} \mathbf{(} z, w, f
,1 \mathbf{)}_i \] 
		\item
		and if the current edge is not
                strict, that is, $e_=(x^{\rtop},z^{\rbot}) \in \sigma$,
                then we have \[ (x,y,f,-) \overset{\sigma}{\longrightarrow}
                \mathbf{(} z, w, f ,0 \mathbf{)}_i \]
             \end{itemize} 
\item  	$U = \{q_1\} \cup \{(x,y,h,1) \mid h \in \{f,b\}
  \}$.
  
  Paths looping in $q_1$ are successful, and to guarantee that the guessed path
  is strict, it must contain infinitely many strict edges (marked with flag $1$).
   \end{compactitem}

The number of states of $\cB$ is polynomial in \size{\cC,\cT}, and the alphabet is exponential. 
As mentioned before, the automaton $\cA_{\mathrm{emb}}$ is the complement automaton of $\mathcal{B}$ intersected with $\cA_{\mathrm{ct}}$. 
It has the same alphabet, but  
it may have exponentially many more states \cite{ar_MullerSchupp95}. 

\begin{proposition}
  There is a Rabin tree automaton $\cA_{\emb}$
  that accepts exactly the consistent trees over $\Sigma_\fr$ that satisfy
  $(\condsym)$,  whose number of states is bounded by a single exponential in
  $||\cC,\cT||$ and whose $\Omega$ has a constant number of pairs.
\end{proposition}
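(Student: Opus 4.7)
The plan is to prove the proposition in three stages: establish correctness and size of the consistency automaton $\cA_\rct$, verify that $\cB$ accepts exactly the consistent trees whose framified constraint graph \emph{violates} $(\condsym)$, and then obtain $\cA_\emb$ by complementing $\cB$ and intersecting with $\cA_\rct$. For $\cA_\rct$ the argument is short: consistency is a local property between every parent--child pair of frames (Definition~\ref{def_consistent_frames}), so a top-down automaton that stores the current frame in its state, allows only consistent pairs in its transitions, and requires the root frame to have vertex set $\sfReg_{\cC,\cT}^\rbot$, accepts exactly the consistent trees. Its state space has size $O(|\Sigma_\fr|)$, single exponential in $\size{\cC,\cT}$, and every run is accepting, so one trivial Rabin pair suffices.

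For $\cB$ I would argue soundness and completeness on consistent trees. Soundness: a successful run $\rho$ uses the states $q_0$ and $q_2$ (with sibling subtrees sent to the absorbing $q_1$) to pick out a unique root-to-node path ending at some $u$, where a path state $(x,y,h,0)$ is entered; the enabling guard $e_<(x^\rbot, y^\rbot) \in T(u)$ supplies the initial strict cross-edge from $f(0)$ to $b(0)$ demanded by $(\condsym)$. From $u$, the unique infinite descendant branch remaining in path states defines a word $\bfw \in [n]^\omega$ and a sequence of pairs $(z_i, w_i)$; the transition guards at each step are exactly the conditions on the frame $T(u\bfw_{\leq i+1})$ under which the pairs extend $f$ to a forward path from $(u,x)$ along $\bfw$, extend $b$ to a backward path from $(u,y)$ along $\bfw$, and preserve the required strict cross-edges. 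The Rabin pair $(\emptyset, U)$ then forces the selected side ($f$ when $h=f$, else $b$) to be strict: once a path state is entered it cannot be left, so acceptance requires visiting flag-$1$ path states infinitely often, i.e., infinitely many strict edges on the chosen side. Completeness runs in reverse: a witness to a violation of $(\condsym)$ directly prescribes such a successful run.

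Finally, $\cA_\emb$ is obtained by complementing $\cB$ via the standard construction for Rabin tree automata~\cite{ar_MullerSchupp95} and then taking the product with $\cA_\rct$. The complementation incurs a single exponential blowup in state count; since $\cB$ has only a single Büchi-style pair, the number of Rabin pairs in the complement can be kept constant (e.g., by routing through a parity tree automaton with a constant number of priorities). The product construction multiplies state counts and unions the pair counts, both of which remain within the stated bounds. The main obstacle is the correctness of $\cB$: carefully verifying that its local transition guards---which combine $e_<$ and $e_=$ on both the $\rbot$ and $\rtop$ sides of the current frame---really reconstruct the two intertwined paths $f$ and $b$ with all their strict cross-edges, and that the single acceptance pair correctly enforces strictness of exactly the chosen side. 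Once that bookkeeping is pinned down, the remainder is routine automata-theoretic manipulation.
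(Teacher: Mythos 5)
Your proposal follows essentially the same route as the paper: the same $\cA_{\rct}$ construction (store the previous frame, allow only consistent transitions, trivial acceptance), the same reading of $\cB$ as a one-pair Rabin (B\"uchi-style) automaton guessing the violating pair and the intertwined paths $f$ and $b$, and the same definition of $\cA_{\emb}$ as the complement of $\cB$ intersected with $\cA_{\rct}$, citing the single-exponential complementation blowup; in fact you sketch the soundness/completeness of $\cB$ in more detail than the paper does. The one soft spot you share with the paper is the claim that the complement retains a \emph{constant} number of Rabin pairs (standard complementation of an $n$-state B\"uchi tree automaton yields $O(n)$ pairs, and your parenthetical about a constant-index parity automaton is not justified), but the paper itself only uses polynomially many pairs in its final automaton $\cA_{\cT,\cC}$, which is all the \EXPTIME{} bound requires.
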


\subsection{Deciding satisfiability}
\label{subsec_putting_automata_together}

The automaton $\cA_{\emb}$ provides us an effective way to decide the 
embeddability of a constraint graph. With this central ingredient in place, we
are ready to put together an algorithm for checking the
satisfiability of  $\cC$ \wrt $\cT$. 
We do so by building an automaton  $\cA_{\cT,\cC}$ whose language is not empty iff
$\cC$ is satisfiable \wrt $\cT$.
In a nutshell, we obtain it by intersecting 
$\cA_{\emb}$  and an automaton for deciding
 satisfiability of the abstraction to $\ALCF$. For the latter, we may
rely on existing constructions from the literature.

\paragraph{Satisfiability of the abstracted $\ALCF$ part}
A well known construction of a looping automaton which accepts exactly the 
tree models of an $\ALC$ concept \wrt a TBox can be found in \cite{Baader09}.
Note that, for  completeness,
it is important that it accepts all tree models, as opposed to e.g. accepting
some canonical model which may not necessarily have an embeddable constraint
graph.
That construction can be easily adapted to obtain a Rabin tree automaton $\cA_{\mathrm{alcf}}$,
which also ensures functionality of the appropriate roles.
The automaton $\cA_{\mathrm{alcf}}$ runs on trees over the alphabet $\Xi$, which consists of 
sets of the concept names in $\cC,\cT$ and a single role name from $\cC,\cT$.
The role name in each letter indicates the role with which a logical element is connected to its parent.
The states of $\cA_{\mathrm{alcf}}$ are maximal consistent sets of the subexpressions in $\cC,\cT$, also known as Hintikka sets.
The number of states of $\cA_{\mathrm{alcf}}$ and the alphabet are
exponential in \size{\cC,\cT}, and its $\Omega$ has a constant number of pairs.

\paragraph{Pairing the alphabet}
The final automaton $\cA_{\cT,\cC}$ should accept only representations of
models of the abstraction of $\cC$ and $\cT$ whose
constraint graph is embeddable. Since one check is done by
$\cA_{\mathrm{alcf}}$ and the other by $\cA_{\emb}$, we modify both automata
to use the same alphabet. 
We let  $\cA'_{\emb}$  and $\cA'_{\mathrm{alcf}}$
be the modification of $\cA_{\emb}$  and $\cA_{\mathrm{alcf}}$ to trees over
the product alphabet $\Sigma_\fr \times \Xi$, while ignoring the irrelevant
part of each letter. Clearly, the state sets of   $\cA'_{\emb}$  and 
$\cA'_{\mathrm{alcf}}$  are not affected and remain exponential in \size{\cC,\cT},
nor are their $\Omega$ sets, which still have a constant number of pairs.

\paragraph{Matching the alphabets}
It is not enough to verify if a tree over  $\Sigma_\fr \times \Xi$
is accepted by $\cA'_{\emb}$, which ignores $\Xi$, and by 
$\cA'_{\mathrm{alcf}}$,  which ignores $\Sigma_\fr$: such a tree could just
pair a model of the abstraction with a totally unrelated constraint graph.
We need to verify that the constraint graph matches the interpretation of the
abstraction.
For this, we take an automaton $\cA_{\mathrm{m}}$ that considers both parts of the product alphabet $\Sigma_\fr \times \Xi$
and accepts the trees where the restriction of the input to $\Xi$ 
induces the constraint graph corresponding to the restriction of the input to $\Sigma_\fr$.
This is done by verifying the conditions described in Definition~\ref{def_constraint_graph} 
while applying the placeholders in the $\Xi$ part of the letter to the $\rbot$ vertices in the $\Sigma_\fr$ part of the letter.
Such a test can be built into the transition relation, using a 
constant number of states.

\paragraph{Putting the automata together}
Finally, 
we build  $\cA_{\cT,\cC}$ as the intersection of  
$\cA'_{\mathrm{emb}}$, $\cA'_{\mathrm{alcf}}$, and $\cA_{\mathrm{m}}$.
Each tree it accepts represents a model of the abstraction of $\cC$ \wrt $\cT$ whose constraint
graph can be embedded into  $\bZ$, yielding the desired reduction of
satisfiability to automata emptiness.

\begin{proposition}
  There is a Rabin tree automaton $\cA_{\cT,\cC}$ whose state set is bounded
  by a single exponential in \size{\cC,\cT} and the number of pairs in its $\Omega$ is bounded by a polynomial 
	in \size{\cC,\cT}, such that
  $L(\cA_{\cT,\cC}) \neq \emptyset$ iff $\cC$ is satisfiable \wrt $\cT$.
   \end{proposition}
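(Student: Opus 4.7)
The plan is to define $\cA_{\cT,\cC}$ as the product (intersection) of the three Rabin tree automata $\cA'_{\emb}$, $\cA'_{\mathrm{alcf}}$, and $\cA_{\mathrm{m}}$ built over the common product alphabet $\Sigma_\fr \times \Xi$, using the standard product construction that pairs transitions component-wise and lifts the Rabin conditions of the factors to the product state space.

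For the complexity bounds, I would observe that the state sets of $\cA'_{\emb}$ and $\cA'_{\mathrm{alcf}}$ are each single exponential in $\size{\cC,\cT}$, while $\cA_{\mathrm{m}}$ has only a constant number of states, so the Cartesian product is still single exponential. Each factor carries only a constant number of Rabin pairs; since the standard intersection of Rabin tree automata with $k_1$, $k_2$, and $k_3$ accepting pairs yields an automaton with $k_1 + k_2 + k_3$ pairs after the appropriate lifting to the product, the aggregate stays polynomial in $\size{\cC,\cT}$, as required.

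For correctness in the ``if'' direction, suppose $\cC$ is satisfiable \wrt $\cT$. By Theorem~\ref{th_split_checks}, there is a tree-shaped plain interpretation $\cI_a$ of $\cC_a$ \wrt $\cT_a$ whose constraint graph $\cG_{\cI_a}$ is embeddable. Observation~\ref{obs_framification} supplies a framification $\cG_\fr$ of $\cG_{\cI_a}$, and Lemma~\ref{lem_condition_necessary} ensures $\cG_\fr$ satisfies $(\condsym)$; hence the tree representation of $\cG_\fr$ is accepted by $\cA'_{\emb}$ (ignoring the $\Xi$ coordinate). Encoding $\cI_a$ as a tree over $\Xi$ via the standard Hintikka-set labeling produces a witness accepted by $\cA'_{\mathrm{alcf}}$ (ignoring the $\Sigma_\fr$ coordinate). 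Pairing the two trees component-wise yields a single tree over $\Sigma_\fr \times \Xi$ which, by the very definition of constraint graph (Definition~\ref{def_constraint_graph}), satisfies the matching condition verified by $\cA_{\mathrm{m}}$, and is therefore accepted by $\cA_{\cT,\cC}$.

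For the ``only if'' direction, from any $T \in L(\cA_{\cT,\cC})$ I would recover on the $\Xi$-projection a tree-shaped plain interpretation $\cI_a$ of $\cC_a$ \wrt $\cT_a$ (since $T$ is accepted by $\cA'_{\mathrm{alcf}}$), and on the $\Sigma_\fr$-projection a consistent tree over $\Sigma_\fr$ satisfying $(\condsym)$, which by Lemma~\ref{lem_LA_neq_iff_sol} represents a framified constraint graph whose underlying constraint graph is embeddable into $\cZ_c$; the $\cA_{\mathrm{m}}$ component guarantees that this constraint graph is exactly $\cG_{\cI_a}$. Theorem~\ref{th_split_checks} then yields satisfiability of $\cC$ \wrt $\cT$. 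I do not expect any hard obstacle here; the substance of the argument is carried by the earlier lemmas and propositions, and the remaining work is the bookkeeping needed to check that the alphabet-pairing really forces the three factor automata to agree on a single tree that simultaneously witnesses an abstraction model and its embeddable constraint graph, rather than two unrelated objects.
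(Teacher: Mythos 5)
Your construction is essentially the paper's: $\cA_{\cT,\cC}$ is the product of $\cA'_{\emb}$, $\cA'_{\mathrm{alcf}}$, and $\cA_{\mathrm{m}}$ over the paired alphabet $\Sigma_\fr\times\Xi$, with the two directions of correctness delegated, exactly as in the paper, to Theorem~\ref{th_split_checks}, Observation~\ref{obs_framification}, Lemma~\ref{lem_condition_necessary}, and Lemma~\ref{lem_LA_neq_iff_sol}. One small correction: the standard product construction for \emph{intersecting} Rabin tree automata multiplies rather than adds the numbers of accepting pairs (the paper's appendix takes $\Omega^\cap=\{((L_i,L'_j),(U_i,U'_j))\mid i\in[m], j\in[m']\}$), so your ``$k_1+k_2+k_3$'' is not how the lifting works, but since each factor carries only a constant number of pairs this does not affect the claimed polynomial bound.
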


Since emptiness of  Rabin tree automata is decidable in time  polynomial in $Q$ and
exponential in the number of pairs in $\Omega$  \cite{EmersonJ88}, 
our main result follows.

\begin{theorem}
Satisfiability \wrt general TBoxes in $\ALCUP(\cZ_c)$ is decidable in \\ \EXPTIME.
\end{theorem}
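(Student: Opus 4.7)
The plan is to stitch together the machinery developed in Sections 3.1--3.4 and apply a known complexity result for Rabin tree automaton emptiness. The entire technical burden has been discharged by the preceding lemmas and propositions, so this final theorem is essentially a complexity accounting.

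First, given an arbitrary input concept $\cC'$ and TBox $\cT'$ in $\ALCUP(\cZ_c)$, I would appeal to Lemma~\ref{lem_new_normal_form} to convert them in polynomial time into an equisatisfiable pair $\cC$, $\cT$ in atomic normal form, with only a polynomial blow-up in $\size{\cC',\cT'}$. This reduces the problem to deciding satisfiability of a concept/TBox pair in ANF, which is the setting in which all the automata constructions of Section 3.4 are formulated.

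Next, I would invoke the preceding Proposition to obtain the Rabin tree automaton $\cA_{\cT,\cC}$ over the product alphabet $\Sigma_\fr \times \Xi$ whose emptiness is equivalent to satisfiability of $\cC$ \wrt $\cT$. By construction, $\cA_{\cT,\cC}$ is the intersection of $\cA'_{\mathrm{emb}}$, $\cA'_{\mathrm{alcf}}$, and $\cA_{\mathrm{m}}$, so soundness and completeness follow from Lemma~\ref{lem_LA_neq_iff_sol} together with the guarantees of the $\ALCF$ automaton $\cA_{\mathrm{alcf}}$ and the matching automaton $\cA_{\mathrm{m}}$: a non-empty language yields a consistent tree representing an embeddable framified constraint graph on top of a tree model of the $\ALCF$ abstraction, which by Theorem~\ref{th_split_checks} certifies that $\cC$ is satisfiable \wrt $\cT$; and conversely, any tree model of $\cC$ \wrt $\cT$ gives rise to such a tree in $L(\cA_{\cT,\cC})$.

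Finally, I would apply the result of Emerson and Jutla~\cite{EmersonJ88}, which decides emptiness of a Rabin tree automaton in time polynomial in $|Q|$ and exponential in the number of accepting pairs in $\Omega$. From the preceding Proposition, $|Q|$ is bounded by a single exponential in $\size{\cC,\cT}$ while $|\Omega|$ is polynomial in $\size{\cC,\cT}$, so the emptiness test runs in time
\[
|Q|^{O(1)} \cdot 2^{O(|\Omega|)} \;=\; 2^{\mathrm{poly}(\size{\cC,\cT})},
\]
which is a single exponential in the size of the original input. Combined with the polynomial normalization step, this yields the claimed \EXPTIME\ upper bound. There is no genuine obstacle left at this stage; the only thing to verify carefully is the complexity bookkeeping, namely that the exponential blow-up from complementing $\cB$ into $\cA_{\mathrm{emb}}$ and from intersecting with $\cA'_{\mathrm{alcf}}$ and $\cA_{\mathrm{m}}$ stays within a single exponential, which it does because all three intersected automata already have at most exponentially many states and a polynomial (in fact, constant) number of Rabin pairs.
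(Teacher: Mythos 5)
Your proposal is correct and follows essentially the same route as the paper: normalize to ANF in polynomial time, build the Rabin automaton $\cA_{\cT,\cC}$ with exponentially many states and polynomially many accepting pairs, and conclude via the Emerson--Jutla emptiness test. One small imprecision: the emptiness bound is polynomial in $|Q|$ with the \emph{degree} of the polynomial depending on the number of pairs (roughly $(|Q|\cdot|\Omega|)^{O(|\Omega|)}$), not $|Q|^{O(1)}\cdot 2^{O(|\Omega|)}$ as you wrote, but with $|Q|$ singly exponential and $|\Omega|$ polynomial this still yields a single exponential overall, so the conclusion stands.
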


This bound is tight: satisfiability \wrt general TBoxes is 
\EXPTIME-hard already for plain \ALC \cite{Schild91}.

\section{Beyond \texorpdfstring{$\ALCUP(\cZ_c)$}{ALCF-P(Zc)}}
\label{sec_other_pred}

In this section, we discuss some variants of our construction and how our
results extend to other closely related settings.

\paragraph{Undefined register values}
Classical concrete domains often allow for the predicate $\uparrow x$ which is interpreted as the register $x$ having undefined value.
In order to support this in our setting, we expand $\cZ_c$  to $\cZ_{c,\mathsf{und}}$ by 
adding a fresh element
to the integers to obtain $\mathbb{Z} \cup \{\sfu\}$, and adding a unary predicate $\mathsf{und}$ to the predicates of $\cZ_c$.
Our approach is adapted by redefining frames as follows.
The set $\mathbf{U}$ of labels also includes $U_{\mathsf{und}}$, and the first condition in Definition~\ref{def_frame}
is rephrased to be:
\begin{enumerate}
	\item 
	There is an edge between every pair of vertices which are not labeled $U_{\mathsf{und}}$.
\end{enumerate}
Note that due to condition~\ref{cond_equality}, also every pair of vertices labeled $U_{\mathsf{und}}$ is connected (with an equality edge).
\paragraph{Adding $\sfint$ or $\mathsf{nat}$ predicates to dense domains}

When operating over a dense concrete domain such as the rationals or the reals, 
it can be useful to have a predicate which enforces that certain registers hold integer or natural number values.
We show that such predicates may be added to our setting while maintaining our complexity.

The predicate $\sfint(x)$ is interpreted as $\{u \in \Delta^{\cI} \mid \beta(u,x) \in \mathbb{Z}\}$,
and similarly for $\sfnat(x)$. Note that $\sfnat(x) \equiv \sfint(x) \sqcap \exists \epsilon. \lpath 0 \leq S^0 x \rpath$, so we limit our treatment to $\sfint$.

We describe how to add the $\sfint$ predicate to $\cR_c$, which is $\cZ_c$ with the reals as the domain, while maintaining our complexity bounds.
We need to check whether the subgraph induced by the registers satisfying the $\sfint$ predicate is embeddable, which boils down to
making sure there is no pair of registers 
with infinitely many $\sfint$ registers between them that must have different values.
Therefore we adapt the automaton $\cB$ to look for a pair of registers violating the following updated condition:
\begin{description}\item[$(\condsym_\sfint)$]
There are no $(u,x),(u,y) \in \Delta \times \mathsf{Reg}_{\cC,\cT}$ in $\mathcal{G}_{\mathrm{fr}}$ 
for which we have that: there exists an infinite $\mathbf{w} \in [n]^\omega$ and
\begin{enumerate}
	\item 
	an infinite forward path $f$ from $(u,x)$ along $\mathbf{w}$
	\item
	an infinite backward path $b$ from $(u,v)$ along $\mathbf{w}$
\end{enumerate}
such that $f$ or $b$ is strict and has infinitely many $\sfint$ labels, 
and such that for every $i \in \mathbb{N}$, there is a strict edge from $f(i)$ to $b(i)$.
\end{description}
The automaton $\cB$ is adapted so that it guesses whether $f$ or $b$ is strict and has infinitely many $\sfint$ registers, 
and we add to the acceptance condition the requirement that it witnesses
infinitely many $\sfint$ registers on the path it guessed.
In the proofs we also  redefine the notion of strict length of paths,
counting only 
 $\sfint$ registers that occur between strict edges.

\subsection{The logic \texorpdfstring{$\ALCP(\cD)$}{ALCP(D)}}

The DL $\ALCP(\cD)$ was introduced for a general domain $\cD$ in
\cite{DBLP:conf/cade/Lutz01}, and a related DL was studied already 
in \cite{DBLP:conf/ki/BaaderH92}.
While  most extensions of DLs with concrete domains allow only functional
roles on the paths participating in the concepts that refer to the concrete
domain,  $\ALCP(\cZ_c)$ allows arbitrary roles.
It is similar to our logic, but it can compare values on  different paths,
while \alcupz can only compare values on the same path.

We briefly recall the definition of $\ALCP(\cZ_c)$, and refer to
\cite{DBLP:conf/cade/Lutz01} for details.
Since the syntax of $\ALCP$ does not allow
Boolean combinations of constraints, we enrich $\cZ_c$
to 
explicitly include $\neq, \leq,$ and $\geq$.
This provides a closer comparison between the logics, and in the case of
$\ALCUP$, it is equivalent to the simpler $\cZ_c$ considered so far.

$\ALCP(\cZ_c)$ is the augmentation of $\mathcal{ALCF}$ with\footnote{$\ALCP(\cZ_c)$
  supports $x \uparrow$, which we can simulate as above.}
\begin{itemize}
	\item
	$\exists P x. = c$ and $\forall P x. = c$ where $c \in \bZ$ and $P$ is a sequence of role names and $x$ a register name, and similarly for $\neq c$.
	The formulas apply the constraint to the $x$ register of the last element on a $P$ path.
	\item 
	$\exists P_1 x_1, P_2 x_2. \theta$ and $\forall P_1 x_1, P_2 x_2. \theta$ where $\theta \in \{\leq,< , =, \neq, >, \geq\}$
	and each $P_i x_i$ is a sequence of role names followed by a register name.
	The formulas apply the constraint to the $x_1$ register of the last element on a $P_1$ path and the $x_2$ register of the last element on a $P_2$ path,
	where both paths start at a common element.
\end{itemize}

We now translate $\ALCP(\cZ_c)$ to $\ALCUP(\cZ_c)$. In most cases  we use additional
registers. 
\begin{itemize}
	\item
	$\exists P x. = c$ translates to $\exists P. \lpath S^{|P|} x = c \rpath$ and 
	$\forall P x. = c$ translates to $\forall P. \lpath S^{|P|} x = c \rpath$.
	\item For existential concepts $\exists P_1 x_1, P_2 x_2.{\theta}$, an
          easy translation is possible by using two fresh register names  $copy\text{-}g_1$ and
          $copy\text{-}g_2$, which  intuitively store the values at the end of $P_1$ and $P_2$. For
          example,  $\exists P_1 x_1,P_2 x_2. <$ translates to
		\begin{align*}
		C := & \, 
		\exists  P_1.\lpath S^{0} copy\text{-}x_1 = S^{|P_1|} x_1\rpath \\
		\sqcap & \, \exists P_2.\lpath S^{0} copy\text{-}x_2 = S^{|P_2|} x_2\rpath \\
		\sqcap & \, \exists \epsilon. \lpath S^0 copy\text{-}x_1 < S^0 copy\text{-}x_2\rpath
		\end{align*} 
                The translations for $\theta \in \{\leq,< , =, \neq, >, \geq\}$ are similar.

              \item In the cases of $\forall P_1 x_1, P_2 x_2.\theta$, we
                treat differently the paths of only functional roles,
                 and the case where arbitrary
roles may occur. 
\begin{itemize}
      \item   If all roles occurring in $P_1$ and $P_2$ are functional, then an
        encoding similar to  $\exists P_1 x_1, P_2 x_2.\theta$ can be
        used. For example,  $\forall P_1 x_1,P_2 x_2.{<}$ translates to 
		$\neg\exists P_1. \top \sqcup \neg \exists P_2. \top \sqcup
                C$, where  $C$ is as above.
              \item If non-functional roles occur in $P_1$ and $P_2$, then we
                may need to compare numbers on several paths, and we may need
                more sophisticated tricks. 
                For $\theta \in \{\leq,< , =, \neq, >, \geq\}$, this is still possible
                using just a few registers. 
                  For example, we can translate 
                 $\forall P_1 x_1,P_2 x_2. <$ as 
		\begin{align*}
		\neg \exists P_1. \top \sqcup 
		& \big( 
		\exists P_1. \lpath S^{|P_1|} x_1 = S^0 copy\text{-}x_1 \rpath  \\
		& \sqcap
		\forall P_1. \lpath S^{|P_1|} x_1 \leq S^0 copy\text{-}x_1 \rpath  \\
		& \sqcap
		\forall P_2. \lpath S^{|P_2|} x_2 > S^0 copy\text{-}x_1 \rpath 
		\big)
		\end{align*}
		We are essentially ensuring, via $copy\text{-}x_1$, that the largest value of $x_1$ seen with a $P_1$
		path is smaller than every value of $x_2$ seen with a $P_2$ path.

		If $\theta$ is $\neq$, our translation requires
                exponentially many new register names. 
		Given $\cC$, $\cT$
		 we can ascertain a degree $k$ of some
               tree model (if any model exists). In this model there would be at most 
		$|P_1|^k$ different values to consider for the satisfaction of the constraint. 
		Slightly abusing notation, we express that the $x_1$ registers at the end of $P_1$ paths 
		contain values from a finite set which appears in the fresh register names of the common ancestor, 
		and that this set does not intersect with the set of values of the $x_2$ registers at the end of $P_2$ paths:
		\begin{align*}
		&\exists \epsilon. \lpath S^0 x_1 \neq \cdots \neq S^0 x_{|P_1|^k} \rpath \\
		\sqcap \,
		&\forall P_1. \lpath S^{|P_1|} x_1  = x_1 \vee \cdots \vee S^{|P_1|} x_1  = x_{|P_1|^k} \rpath \\
		\sqcap \,
		&\forall P_2. \lpath S^{|P_2|} x_2  = x_1 \vee \cdots \vee S^{|P_2|} x_2  \neq x_{|P_1|^k} \rpath 
		\end{align*} 
       \end{itemize}
\end{itemize}

This translation allows us to give an upper bound on the complexity of
reasoning \wrt general TBoxes  in the DL $\ALCP(\cZ_c)$, which to the best
of our knowledge, had never been provided before.
Our upper bounds also apply if we replace the integers by the real numbers,
with or without $\sfint$ and $\mathsf{nat}$ predicates in the concrete
domain. 

\begin{theorem}
Satisfiability \wrt general TBoxes in $\ALCP(\cZ_c)$ is decidable in
2ExpTime, and it is ExpTime-complete if there is a constant bound on the
length of any path $P_1$ that contains non-functional roles and
occurs in a concept of the form $\forall P_1 x_1,P_2 x_2.{\neq}$.
\end{theorem}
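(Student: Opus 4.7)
The plan is to reduce satisfiability in $\ALCP(\cZ_c)$ w.r.t.\ general TBoxes to satisfiability in $\alcupz$ via the translation exhibited above, and then invoke the $\EXPTIME$ upper bound of the preceding main theorem on the resulting instance. The complexity jumps from $\EXPTIME$ to $\twoEXPTIME$ precisely because one case of the translation is necessarily of exponential size, and the bounded-$P_1$ hypothesis is exactly what is needed to restore a polynomial translation.

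First, I would argue correctness of the translation case by case. The translations for $\exists P x. = c$, $\forall P x. = c$, $\exists P_1 x_1, P_2 x_2.\theta$, and $\forall P_1 x_1, P_2 x_2.\theta$ along purely functional paths are all of polynomial size, using only a constant number of fresh register names per construct. For each, I would show that any model $\cI'$ of $\cC'$ w.r.t.\ $\cT'$ can be enriched to a model of the translated concept and TBox by assigning the fresh copy-registers the values they are intended to copy, and conversely that any model of the translation restricts to a model of the original after forgetting the fresh registers. The functional case of $\forall P_1 x_1, P_2 x_2.\theta$ uses the disjunct $\neg\exists P_1.\top \sqcup \neg\exists P_2.\top$ to handle empty paths, and correctness follows from uniqueness of the witnesses.

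The core of the argument, and the main obstacle, is the case of $\forall P_1 x_1, P_2 x_2.\neq$ with non-functional roles on $P_1$. The translation stores a finite enumeration of the possible values of $x_1$ at $P_1$-endpoints in fresh registers of a common ancestor, and then forces every $P_2$-endpoint value of $x_2$ to differ from each enumerated value. To see that this enumeration suffices, I would appeal to Theorem~\ref{th_n_tree_model}: any satisfiable $\cC',\cT'$ has a tree model of branching degree $k$ polynomial in $\|\cC',\cT'\|$, so the number of distinct $P_1$-endpoints is at most $k^{|P_1|}$, and hence the same bound applies to the set of distinct $x_1$-values that must be excluded. This yields an $\alcupz$ concept and TBox of size $k^{|P_1|}$, which is single-exponential in $\|\cC',\cT'\|$ in general, and polynomial when $|P_1|$ is bounded by a constant. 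Verifying this bound carefully (in particular, that the enumeration is oblivious to which of the possibly many witnessing $P_1$-paths realizes each value) is the technical subtlety; it follows because $\neq$ is a property of values, not of paths.

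Finally, I would combine the pieces. Composing the translation of size $2^{O(\|\cC',\cT'\|)}$ with the $\EXPTIME$ procedure for $\alcupz$ yields a $\twoEXPTIME$ algorithm for unrestricted $\ALCP(\cZ_c)$. Under the stated restriction on $|P_1|$ in occurrences of $\forall P_1 x_1, P_2 x_2.\neq$ with non-functional roles, the translation is polynomial and we inherit the $\EXPTIME$ bound directly. The matching $\EXPTIME$ lower bound for the restricted case is immediate from the $\EXPTIME$-hardness of satisfiability for plain $\ALC$ w.r.t.\ general TBoxes~\cite{Schild91}, which is trivially a fragment of $\ALCP(\cZ_c)$.
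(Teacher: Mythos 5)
Your plan matches the paper's (largely implicit) argument exactly: reduce to \alcupz via the case-by-case translation, observe that only the $\forall P_1 x_1,P_2 x_2.{\neq}$ case with non-functional roles incurs an exponential blowup, compose with the \EXPTIME{} procedure to get \twoEXPTIME{} in general and \EXPTIME{} under the bounded-$|P_1|$ restriction, and take hardness from plain $\ALC$. Your count of $k^{|P_1|}$ enumerated values (rather than the paper's $|P_1|^k$) is in fact the bound consistent with the claimed complexities, so no correction is needed.
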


\section{Conclusions}

We have closed a long-standing open question in the  literature of DLs with
concrete domains: reasoning with general TBoxes in \ALC extended
with the non-dense domain $\cZ_c$ is \EXPTIME-complete, and hence not harder than in plain \ALC, even if
arbitrary paths of (not necessarily functional) roles are allowed to refer to
the concrete domain. This positive result extends to other domains that have
been advocated for in the literature, for example, comparisons over the real or
rational numbers  but with the  $\sfint$ and $\mathsf{nat}$ predicates.
Our technique builds on ideas used for constraint LTL in \cite{DemriD07}, and our condition $(\condsym)$ is very similar to the condition used in
that paper. Lifting the results from linear structures, as in
LTL,  to the tree-shaped ones needed in \ALC is not trivial.
It remains an open question  whether our technical results can be transferred to fragments of
constraint CTL$^*$ to obtain new complexity bounds. 
Natural next steps are exploring other DLs, for 
example $\mathcal{SHIQ}^{\cP}(\cZ)$,  and considering ABoxes and 
instance queries.

\section*{Acknowledgments}
This work was supported by the Austrian Science Fund (FWF) projects P30360, P30873, and W1255.

\bibliographystyle{plain}
\bibliography{ref}

\newpage
\appendix
\newcommand{\pathpic}{

\begin{tikzpicture}

\node(v1) at (-2,4) {$v$};
\node(v1z1) at (0,4) {$z$};
\node(v1z2) at (1,4) {$z'$};
\node (u1) at (0,3) {};
\node (u2) at (0,2) {};
\node(v2) at (-2,1) {$v' = vw$};
\node(v2z1) at (0,1) {$z$};
\node(v2z2) at (1,1) {$z'$};
\node(v3) at (-2,-2) {$v'' = vww$};
\node(v3z1) at (0,-2) {$z$};
\node(v3z2) at (1,-2) {$z'$};

\tikzset{decoration={snake, amplitude=.3mm}}

\draw[color=blue,decorate]  (v1z1) -- (v1z2);
\draw[color=blue,decorate]  (v1z1) -- (u1);
\draw[color=blue,decorate]  (u2) -- (v2z1);
\draw[color=blue,->] (u1) -- (u2);
\node[color=blue] at (-0.5,2.5) {$f_1$};

\draw[decorate]  (v2z1) -- (v2z2);
\draw[color=red,decorate]  (v2z2) -- (v1z2);
\node[color=red] at (1.5,2.5) {$b_1$};

\node(u3) at (0,0) {};
\node(u4) at (0,-1) {};

\draw[decorate]  (v2z1) -- (u3);
\draw[decorate]  (u4) -- (v3z1);
\draw[->] (u3) -- (u4);

\draw[decorate]  (v3z1) -- (v3z2);
\draw[decorate]  (v3z2) -- (v2z2);

\end{tikzpicture}
}

\section{Proof for the atomic normal form in Section~\ref{subsec_normalization}}

\subsection*{Proof of Lemma~\ref{lem_new_normal_form}}

First we demonstrate how negation may be removed from atomic constraints using generic examples:
\begin{itemize}
	\item 
	$\neg (x = y)$ can be rewritten as $(x < y) \vee (y < x)$
	\item
	$\neg (x = 0)$ can be rewritten using a fresh register name $z$ as $(z = 0) \wedge ((x < z) \vee (z < x))$
\end{itemize}

  Let $\cC'$ and $\cT'$ be a concept and a TBox in
  $\ALCUP(\cZ_c)$ that are negation free. Let $W_{\mathsf{roles}}$, $W_{\mathsf{reg}}$, and $W_{\mathsf{paths}}$ be the
  sets of role names, register names, and role paths that appear in $\cC'$ and
  $\cT'$, respectively. Let $d$ be the maximal depth of path
  constraints used in $\cC'$ and $\cT'$.

The proof is split into three parts; In the first part, we propagate the original register values
into copy-registers which will make them available locally.
In the second part, we use fresh ``test" concept names to indicate how the atomic values relate to one another, essentially acting 
as the logical connectives.
Finally, we put it together by rewriting the original concept and TBox into atomic normal form.

\paragraph{Part I} 
In the first step, by relying on the tree model property, we copy in each node $u$ the
registers of the ancestors that may occur in the constraints with the
registers of $u$ by propagating the values one step at a time.  
Assume that $W_{\mathsf{reg}}= \{x_1^{0},\ldots,x_m^{0}\}$. 
For every $i$ where  $1\leq  i \leq m$, every $k$ where $1\leq  k\leq d  $ and every $P \in W_{\mathsf{paths}}$, 
we take a fresh register name $x_{i,P}^k$ which will serve as a copy-register.  
We create a TBox $\cT_{\mathsf{prop},d}$ as follows:
\begin{align*}
\cT_{\mathsf{prop},d} = \big\{\top \sqsubseteq \forall r.
  \lpath S^1 x_{i}^{0} = S^0 x_{i,P}^{1} \rpath \mid r\in
  W_{\mathsf{roles}}, 1\leq i \leq m, P \in W_{\mathsf{paths}} \big\} \\
	\cup \big\{\top \sqsubseteq \forall r.
  \lpath S^1 x_{i,P}^{k} = S^0 x_{i,P}^{k-1} \rpath \mid r\in
  W_{\mathsf{roles}}, 1\leq i \leq m, 2\leq k\leq d, P \in W_{\mathsf{paths}} \big\}
\end{align*}
Note that along every path $P$, the TBox $\cT_{\mathsf{prop},d}$ propagates values into copy-registers associated with all the paths in $W_{\mathsf{paths}}$, not
just into the copy-registers associated with $P$. We will later restrict our attention to the appropriate copy-registers depending on context.
The next claim follows with a straightforward inductive construction:
\begin{claim}
\label{cl_part_I_app}
Every tree model of $\cC'$ w.r.t.\
$\cT' \cup \cT_{\mathsf{prop},d}$ contains a tree model of $\cC'$
w.r.t.\ $\cT'$, and every tree model of $\cC'$ w.r.t.\ $\cT'$
can be expanded to a tree model of $\cC'$ w.r.t.\
$\cT' \cup \cT_{\mathsf{prop},d}$.
\end{claim}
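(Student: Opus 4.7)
The claim has two directions, one essentially immediate and one requiring a careful extension of the register function.

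For the first direction, I would observe that the copy registers $x_{i,P}^k$ introduced by $\cT_{\mathsf{prop},d}$ do not occur in $\cC'$ or $\cT'$. Hence the interpretation of every subconcept of $\cC'$ and the truth of every axiom in $\cT'$ depend only on $\cdot^{\cI}$ and on the values that $\beta$ assigns to the original registers. Consequently, any tree model of $\cC'$ w.r.t. $\cT' \cup \cT_{\mathsf{prop},d}$ is automatically a tree model of $\cC'$ w.r.t. $\cT'$.

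For the nontrivial direction, given a tree model $\cI = ([n]^\star, \cdot^\cI, \beta)$ of $\cC'$ w.r.t. $\cT'$, the plan is to extend $\beta$ to a register function $\beta'$ on the fresh copy registers, leaving the values of the original registers untouched. Concretely, for every $v \in [n]^\star$, every $1 \leq i \leq m$, every $1 \leq k \leq d$, and every $P \in W_{\mathsf{paths}}$ I would set
\[
\beta'(v, x_{i,P}^k) \;=\; \beta\bigl(a_k(v),\, x_i^0\bigr)
\]
whenever $v$ admits a $k$-th ancestor $a_k(v)$ in the tree, and pick an arbitrary default value (say $0$) otherwise. Since neither $\cC'$ nor $\cT'$ mentions the copy registers, the expanded interpretation $\cI' = ([n]^\star, \cdot^\cI, \beta')$ remains a tree model of $\cC'$ w.r.t. $\cT'$.

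The remaining obligation is to check that every axiom of $\cT_{\mathsf{prop},d}$ holds in $\cI'$. This reduces to a direct verification along each tree edge: for any $(u,v) \in r^{\cI}$ with $v = u\gamma$, we have $a_k(v) = a_{k-1}(u)$ (and $a_1(v) = u$), so each propagation equation between the designated registers at $v$ and at $u$ reduces to an identity by construction of $\beta'$. The only mild subtlety is at nodes of depth smaller than $k$, where we resorted to the default value; but there both sides of the relevant propagation equation evaluate to the same default, so the axiom is satisfied. I do not foresee a serious obstacle: the construction mirrors the shape of the axioms, copying ancestor values one step down the tree at a time, and the whole verification is local to each edge.
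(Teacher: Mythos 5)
Your proposal is correct and matches the paper's argument: the paper builds the same expansion by induction on the copy-register level $k$ (assigning $0$ at the root and copying the parent's level-$(k-1)$ value one step down), which unrolls to exactly your closed-form definition $\beta'(v,x_{i,P}^k)=\beta(a_k(v),x_i^0)$ with default $0$ at shallow nodes. Your edge-local verification of the propagation axioms, including the observation that $v$ lacks a $k$-th ancestor precisely when $u$ lacks a $(k-1)$-th one so both sides fall back to the default, is the same reasoning the paper carries implicitly through its induction.
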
	
\begin{proof}
We inductively describe an expansion of a tree model $\cI$ of $\cC'$ w.r.t.\ $\cT'$ 
such that the final expansion is a tree model of $\cC'$ w.r.t.\
$\cT' \cup \cT_{\mathsf{prop}}$. 
We will simply copy the values in the original registers into their corresponding copy-registers.
For copy-registers of elements that are at a smaller depth than the associated path $P$,
we will assign an arbitrary value (namely $0$).

\begin{enumerate}
	\item 
	We first describe an expansion $\cJ_1$ of $\cI$ that will model $\cC'$ w.r.t.\
	$\cT' \cup \cT_{\mathsf{prop},1}$.
	For the root element $\varepsilon$, for every $i$ where $1 \leq i \leq m$, and for every $P \in W_{\mathsf{paths}}$, set
	\[
	(\varepsilon,x_{i,P}^{1})^{\cJ_{1}} = 0.
	\]
	For elements $u,v \in \Delta$ where $u$ is the parent of $v$,
	for every $i$ where $1 \leq i \leq m$, and for every $P \in W_{\mathsf{paths}}$, set 
	\[
	(v,x_{i,P}^{1})^{\cJ_{1}} = (u,x_{i}^{0})^{\cI}. 
	\]
	We have that the copy-registers $\{x_{i,P}^1 \mid 1 \leq i \leq m, P \in W_{\mathsf{paths}} \}$ are defined for all elements,
	and the newly assigned register values $\cJ_1$ satisfy the axioms in $\cT_{\mathsf{prop},1}$.
	\item
	We now describe an expansion $\cJ_{d'}$ given a tree model $\cJ_{d'-1}$ of $\cC'$ w.r.t.\ $\cT' \cup \cT_{\mathsf{prop},d'-1}$, 
	where the copy-registers 
	\[
	\{ x_{i,P}^{k} \mid 1 \leq i \leq m, P \in W_{\mathsf{paths}}, 1 \leq k \leq d'-1 \}
	\]
	are defined for all elements.
	For the root element $\varepsilon$, for every $i$ where $1 \leq i \leq m$, and for every $P \in W_{\mathsf{paths}}$, set
	\[
	(\varepsilon,x_{i,P}^{d'})^{\cJ_{d'}} = 0.
	\]
	For elements $u,v \in \Delta$ where $u$ is the parent of $v$, and
	for every $i$ where $1 \leq i \leq m$, and for every $P \in W_{\mathsf{paths}}$, set 
	\[
	(v,x_{i,P}^{d'})^{\cJ_{d'}} = (u,x_{i,P}^{d'-1})^{\cJ_{d'-1}}.
	\]

	We show that $\cJ_{d'}$ is a tree model of $\cC'$ w.r.t.\ $\cT' \cup \cT_{\mathsf{prop},d'}$.
	The newly assigned register values satisfy the axioms in $\cT_{\mathsf{prop},d'} \setminus \cT_{\mathsf{prop},d'-1}$, 
	and $\cJ_{d'-1}$ satisfies $\cC'$ w.r.t.\ $\cT' \cup \cT_{\mathsf{prop}, d'-1}$. 
	Since the expansion does not alter previously defined values, and since 
	the register names $x_{1,P}^{d'}, \ldots, x_{m,P}^{d'}$ for $P \in W_{\mathsf{paths}}$ 
	do not appear in neither $\cC'$ nor $\cT' \cup \cT_{\mathsf{prop},d'-1}$, we 
	have that $\cJ_{d'}$ is a tree model of $\cC'$ w.r.t.\ $\cT' \cup \cT_{\mathsf{prop},d'}$.
\end{enumerate}
 
Hence $\cJ_d$ is a tree shaped expansion of $\cI$ which satisfies $\cC'$
w.r.t.\ $\cT' \cup \cT_{\mathsf{prop},d}$.

\end{proof}	
We write $\cT_{\mathsf{prop}}$ for $\cT_{\mathsf{prop},d}$ from now on.

\paragraph{Part II} 
In this step, we create some ``test'' concept names and axioms
that will allow to check whether a given constraint is satisfied in a
certain path in a tree model. 
For $P \in W_{\mathsf{paths}}$ and an atomic constraint $\Theta$, let $\mathsf{loc}(\Theta,P)$ denote the constraint obtained from $\Theta$
by replacing each occurrence of $S^j x_{i}^{0}$ with $ S^0 x_{i,P}^{|P|-j}$. 
I.e.\ a reference to an original register at a large depth is replaced with a local reference to its copy-register.

Denote by $W_{\mathsf{cnstr}}$ the (sub)constraints that appears in $\cC'$ or $\cT'$.
For each $P \in W_{\mathsf{paths}}$ and each $\Theta \in W_{\mathsf{cnstr}}$, 
take a fresh concept name
$T_{P,\Theta}$. For each such $P$ and $\Theta$ we add to a TBox $\cT_{\mathsf{loc}}$ the following axioms
\begin{enumerate}[label={({\bf A}\textsubscript{\arabic*}}),ref=({\bf A}\textsubscript{\arabic*}),series=conditions]
\item 
\label{it_and}
$T_{P,\Theta} \equiv T_{P,\Theta_1} \sqcap T_{P,\Theta_2}$ if $\Theta = \Theta_1\land  \Theta_2$ 
\item 
\label{it_or}
$T_{P,\Theta} \equiv T_{P,\Theta_1} \sqcup T_{P,\Theta_1}$  if $\Theta = \Theta_1\lor \Theta_2$ 
\item 
\label{it_loc}
$T_{P,\Theta} \equiv \exists \epsilon \lpath \mathsf{loc}(\Theta,P) \rpath$ if  $\Theta$ is an atomic constraint.
\end{enumerate}

We first show that the tree models we are interested in can be expanded along with these axioms:
\begin{claim}
\label{cl_part_II_app}
Every tree model of $\cC'$ w.r.t.\ $\cT' \cup \cT_{\mathsf{prop}}$ can be expanded to a tree model of $\cC'$ w.r.t.\
$\cT' \cup \cT_{\mathsf{prop}} \cup \cT_{\mathsf{loc}}$, and every tree model of 
$\cC'$ w.r.t.\
$\cT' \cup \cT_{\mathsf{prop}} \cup \cT_{\mathsf{loc}}$
is a tree model of $\cC'$ w.r.t\ $\cT' \cup \cT_{\mathsf{prop}}$.
\end{claim}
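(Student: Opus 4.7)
The plan is to handle the two directions of the claim separately, with the backward direction being essentially trivial and the forward direction proceeding by induction on the structure of constraints.

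For the backward direction, observe that the TBox $\cT_{\mathsf{loc}}$ involves only the fresh concept names $T_{P,\Theta}$ and the fresh copy-registers introduced for $\cT_{\mathsf{prop}}$. None of the $T_{P,\Theta}$ symbols appear in $\cC'$, $\cT'$, or $\cT_{\mathsf{prop}}$. Hence if $\cI$ is a tree model of $\cC'$ w.r.t.\ $\cT' \cup \cT_{\mathsf{prop}} \cup \cT_{\mathsf{loc}}$, then $\cI$ already interprets all symbols used in $\cC'$ and $\cT' \cup \cT_{\mathsf{prop}}$ correctly, and is thus a tree model of $\cC'$ w.r.t.\ $\cT' \cup \cT_{\mathsf{prop}}$.

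For the forward direction, let $\cI$ be a tree model of $\cC'$ w.r.t.\ $\cT' \cup \cT_{\mathsf{prop}}$. We will define $T_{P,\Theta}^{\cI} \subseteq \Delta^{\cI}$ for each $P \in W_{\mathsf{paths}}$ and each (sub)constraint $\Theta \in W_{\mathsf{cnstr}}$ by induction on the structure of $\Theta$ (recall that $\cC'$ and $\cT'$ are negation free, so $\Theta$ is built from atomic constraints using only $\land$ and $\lor$). In the base case, where $\Theta$ is atomic, set
\[
T_{P,\Theta}^{\cI} = (\exists \epsilon.\lpath \mathsf{loc}(\Theta,P)\rpath)^{\cI},
\]
which makes axiom \ref{it_loc} hold by construction. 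In the inductive step, if $\Theta = \Theta_1 \land \Theta_2$ set $T_{P,\Theta}^{\cI} = T_{P,\Theta_1}^{\cI} \cap T_{P,\Theta_2}^{\cI}$, and if $\Theta = \Theta_1 \lor \Theta_2$ set $T_{P,\Theta}^{\cI} = T_{P,\Theta_1}^{\cI} \cup T_{P,\Theta_2}^{\cI}$; these choices make axioms \ref{it_and} and \ref{it_or} hold.

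Since the expansion only assigns meaning to the fresh concept names $T_{P,\Theta}$ and leaves the interpretation of all other symbols (including roles, registers, and original concept names) unchanged, $\cC'$ is still satisfied and all axioms of $\cT' \cup \cT_{\mathsf{prop}}$ continue to hold. Combined with the axioms of $\cT_{\mathsf{loc}}$ just verified, the expansion is a tree model of $\cC'$ w.r.t.\ $\cT' \cup \cT_{\mathsf{prop}} \cup \cT_{\mathsf{loc}}$. No step here is genuinely hard; the only point requiring care is that the inductive definition relies on $\Theta$ being negation free so that we never have to reconcile $T_{P,\Theta}^{\cI}$ and $T_{P,\neg\Theta}^{\cI}$ across \emph{different} places in the induction, which is exactly what the earlier normalization step guarantees.
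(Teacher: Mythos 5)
Your proof is correct and follows essentially the same route as the paper: the backward direction is immediate since $\cT' \cup \cT_{\mathsf{prop}} \subseteq \cT' \cup \cT_{\mathsf{prop}} \cup \cT_{\mathsf{loc}}$, and the forward direction expands the model by interpreting each fresh $T_{P,\Theta}$ inductively (atomic case via $\mathsf{loc}(\Theta,P)$, then intersections and unions for $\land$ and $\lor$), exactly as in the paper's proof, which merely phrases the induction in terms of the circuit-depth of $\Theta$ rather than its syntactic structure.
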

\begin{proof}

Let $\cI$ be a tree model of $\cC'$ w.r.t.\
$\cT' \cup \cT_{\mathsf{prop}}$. 
Let $h$ be the largest circuit-depth of a constraint 
$\Theta$ appearing in $\cT'$ or $\cT' \cup \cT_{\mathsf{prop}}$.
We inductively define $\cJ^h$ as an expansion of 
$\cI$ by interpreting the fresh concept names of the form $T_{P,\Theta}$.

\begin{enumerate}
	\item 
	We first describe $\cJ^0$ by interpreting $T_{P,\Theta}$ for atomic $\Theta$ and $P \in W_{\mathsf{paths}}$

	For $e \in \Delta$, we have 
	$e \in T_{P,\Theta}^{\cJ^0}$ if and only if 
	$\cI,(e) \models \mathsf{loc}(\Theta,P)$. That is, if and only if the copy-registers of $e$ satisfy the localized version of 
	$\Theta$. Note that in $\cJ^0$, elements may be labeled with $T_{P,\Theta}$ even if they are not the endpoint of a $P$-path (or even if they are not on 
	a $P$-path at all).

	We have that the axioms of the form in item~\ref{it_loc}, which are the only ones relevant in this case, are satisfied by the construction. 
	\item
	Let $\Theta_1,\Theta_2$ be such that $T_{\Theta_1,P}$ and $T_{\Theta_2,P}$ were interpreted in $\cJ^{h'-1}$.
	\begin{itemize}
		\item 
		If $\Theta = \Theta_1 \wedge \Theta_2$ then $e \in T_{\Theta,P}^{\cJ^{h'}}$ if and only if $e \in T_{\Theta_1,P}^{\cJ^{h'-1}} \cap T_{\Theta_2,P}^{\cJ^{h'-1}}$
		\item
		If $\Theta = \Theta_1 \vee \Theta_2$ then $e \in T_{\Theta,P}^{\cJ^{h'}}$ if and only if $e \in T_{\Theta_1,P}^{\cJ^{h'-1}} \cup T_{\Theta_2,P}^{\cJ^{h'-1}}$
	\end{itemize}
	The axioms of $\cT_{\mathsf{loc}}$ of the forms in items~\ref{it_and} and~\ref{it_or}, 
	which are the only ones relevant in this case, are satisfied by the semantics of the connectives $\wedge$ and $\vee$.
\end{enumerate}
Therefore we have that $\cJ^h$ is a tree-shaped expansion of 
$\cI$ that models $\cC'$ w.r.t.\
$\cT' \cup \cT_{\mathsf{prop}} \cup \cT_{\mathsf{loc}}$.

\end{proof}

Next, we show that $\cT_{\mathsf{prop}} \cup \cT_{\mathsf{loc}}$ indeed relate the satisfaction of constraints along path
to the test concept names. 
\begin{claim}
\label{cl_part_II_a_app}
Let $\cJ$ be a tree model of $\cT_{\mathsf{prop}} \cup \cT_{\mathsf{loc}}$, 
and let $P$ be a role path and $\Theta$ a constraint appearing in $\cC'$ or $\cT'$.
Then it holds that
\begin{enumerate}
\item if $e\in T_{P,\Theta}^{\cJ}$ and  $\cJ$ contains 
	a $P$-path
  $e_0,\ldots, e_{|P|}$ that ends at $e$ ($e=e_{|P|}$), then the path
  $e_0,\ldots, e_{|P|}$ satisfies the constraint $\Theta$ in $\cJ$;
\item if $\cJ$ has a $P$-path  $e_0,\ldots, e_{|P|}$ that satisfies $\Theta$, then $e_{|P|} \in T_{P,\Theta}^{\cJ}$.
\end{enumerate}
\end{claim}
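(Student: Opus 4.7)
The plan is to prove both statements simultaneously by structural induction on $\Theta$. The inductive cases $\Theta = \Theta_1 \wedge \Theta_2$ and $\Theta = \Theta_1 \vee \Theta_2$ are handled directly by axioms \ref{it_and} and \ref{it_or} together with the induction hypothesis applied to the very same $P$-path: $e_{|P|} \in T_{P, \Theta_1 \wedge \Theta_2}^\cJ$ iff $e_{|P|} \in T_{P,\Theta_1}^\cJ \cap T_{P,\Theta_2}^\cJ$ iff the path satisfies both $\Theta_1$ and $\Theta_2$ iff it satisfies $\Theta$, and analogously for disjunction. So the real work is in the base case where $\Theta$ is atomic.

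For atomic $\Theta$, axiom \ref{it_loc} gives $e \in T_{P,\Theta}^\cJ$ iff $\cJ, (e) \models \mathsf{loc}(\Theta, P)$, and it therefore suffices to show that, for any $P$-path $e_0, \ldots, e_{|P|}$ in $\cJ$,
\[
\cJ, (e_0, \ldots, e_{|P|}) \models \Theta \;\iff\; \cJ, (e_{|P|}) \models \mathsf{loc}(\Theta, P).
\]
Since $\mathsf{loc}(\Theta, P)$ is obtained from $\Theta$ by replacing each $S^j x_i^0$ with $S^0 x_{i,P}^{|P|-j}$, this equivalence reduces to a register-alignment sub-lemma: for every $i$ and every $0 \le j \le |P|$,
\[
\beta(e_{|P|},\, x_{i,P}^{|P|-j}) \;=\; \beta(e_j,\, x_i^0),
\]
under the natural convention $x_{i,P}^0 := x_i^0$ that covers $j = |P|$. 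I would prove this sub-lemma by downward induction on $j$, starting from $j = |P|$ (trivial under the convention). At the step from $j+1$ to $j$, let $r$ be the role name that connects $e_j$ to $e_{j+1}$ on the $P$-path; then the propagation axiom of $\cT_{\mathsf{prop}}$ for $r$ at the appropriate depth, applied at $e_j$, yields $\beta(e_j, x_{i,P}^{|P|-j}) = \beta(e_{j+1}, x_{i,P}^{|P|-j-1})$, which combined with the induction hypothesis gives the desired equality.

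The main obstacle will be handling the boundary carefully: the step from $j+1$ to $j$ when $|P|-j = 1$ must invoke the ``base'' propagation axiom involving the original register $x_i^0$, while the other steps use the ``recursive'' axioms relating copy-registers at consecutive depths; these are different axioms in $\cT_{\mathsf{prop}}$ and must be combined coherently. Once the alignment sub-lemma is in place, both directions of the atomic case follow immediately, because the satisfaction of $\Theta$ along the path is expressed entirely in terms of the values $\beta(e_j, x_i^0)$, and the satisfaction of $\mathsf{loc}(\Theta, P)$ at $e_{|P|}$ is the same Boolean combination of equalities and inequalities but evaluated on the matching copy-register values. The structural induction then completes the proof, with no additional work required in the inductive steps beyond invoking $\cT_{\mathsf{loc}}$ and the induction hypothesis.
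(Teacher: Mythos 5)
Your overall strategy coincides with the paper's: a structural induction on $\Theta$ whose Boolean cases are discharged by the conjunction/disjunction axioms of $\cT_{\mathsf{loc}}$ together with the induction hypothesis, and whose atomic base case reduces, via the axiom $T_{P,\Theta} \equiv \exists \epsilon \lpath \mathsf{loc}(\Theta,P)\rpath$, to the register-alignment fact $\beta(e_{|P|}, x_{i,P}^{|P|-j}) = \beta(e_j, x_i^0)$. The paper asserts this alignment fact with only the remark that it follows from the axioms in $\cT_{\mathsf{prop}}$, so spelling it out is welcome. However, the inductive step you give for it does not work. The propagation axioms have the shape $\forall r.\lpath S^1 x_{i,P}^{k} = S^0 x_{i,P}^{k-1}\rpath$ (with $x_{i,P}^{0}$ read as $x_i^0$); since $S^0$ refers to the parent and $S^1$ to its $r$-child, the copy-register index \emph{increases} by one as you descend one level. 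Your displayed equation $\beta(e_j, x_{i,P}^{|P|-j}) = \beta(e_{j+1}, x_{i,P}^{|P|-j-1})$ has the index \emph{decreasing} from parent to child, so it is not an instance of any propagation axiom, and it is false in general: the left side stores the original $x_i$-value of the $(|P|-j)$-th ancestor of $e_j$, while the right side stores that of the $(|P|-j-1)$-th ancestor of $e_{j+1}$, and these are different nodes. Moreover, even with a corrected one-step equation, the downward induction on $j$ does not chain: the hypothesis for $j+1$ concerns the register $x_{i,P}^{|P|-j-1}$ \emph{at the endpoint $e_{|P|}$}, whereas an axiom applied at the pair $(e_j,e_{j+1})$ only relates registers at $e_j$ and $e_{j+1}$, so the two statements share no common term from which the goal for $j$ could be assembled.

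The repair is routine and does not change the architecture: for fixed $j$, prove $\beta(e_{j+t}, x_{i,P}^{t}) = \beta(e_j, x_i^0)$ by induction on $t$ from $0$ to $|P|-j$, the base case being the convention $x_{i,P}^0 = x_i^0$ and the step being exactly the axiom instance $\beta(e_{j+t+1}, x_{i,P}^{t+1}) = \beta(e_{j+t}, x_{i,P}^{t})$ at the parent--child pair $(e_{j+t}, e_{j+t+1})$; taking $t = |P|-j$ yields the alignment fact. With that correction your argument is complete and essentially identical to the paper's proof.
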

Notice the qualification in item 1., as a $P$-path might not exist closer to the root in a tree model.
\begin{proof}
First item: let $e_0,\ldots, e_{|P|}$ be a $P$-path and let  $e_{|P|} \in T_{P,\Theta}^{\cJ}$. 
Note that from the axioms in $\cT_{\mathsf{prop}}$ we have that 
$(e_j,x_{i,P}^{0})^{\cJ} = (e_j, x_i^{{|P|}-j})^{\cJ}$ (this is actually true for every $P' \in W_{\mathsf{paths}}$). 
We proceed by induction on $\Theta$.
\begin{itemize}
	\item 
	If $\Theta$ is atomic, then by the axioms from item~\ref{it_loc} we have $\cJ, (e_{|P|}) \models \mathsf{loc}(\Theta,P)$.
	From the fact that $(e_j,x_i^{0})^{\cJ} = (e_j, x_{i,P}^{{|P|}-j})^{\cJ}$, 
	together with the definition of $\mathsf{loc}$ we get that $\cJ, (e_0,\ldots,e_{|P|}) \models \Theta$.
	\item
	Let $\Theta_1$ and $\Theta_2$ be constraints for which the claim holds.
	\item
	If $\Theta = \Theta_1 \wedge \Theta_2$, then from the axioms in item~\ref{it_and} we have that 
	$e_{|P|} \in T_{P,\Theta_1} \sqcap T_{P,\Theta_2}$. From the IH we have that $\cJ, (e_0,\ldots,e_{|P|}) \models \Theta_1$ 
	and $\cJ, (e_0,\ldots,e_{|P|}) \models \Theta_2$
	and the claim follows.
	\item
	The $\vee$ case follows similarly.
\end{itemize}

Second item: let a $P$-path $e_0,\ldots, e_{|P|}$ in $\cJ$ satisfy $\Theta$. 
Note that from the axioms in $\cT_{\mathsf{prop}}$ we have that $(e_j,x_i^{0})^{\cJ} = (e_j, x_{i,P}^{|P|-j})^{\cJ}$.
We proceed by induction on $\Theta$.
\begin{itemize}
	\item 
	If $\Theta$ is atomic, then from the fact that $(e_j,x_i^{0})^{\cJ} = (e_j, x_{i,P}^{|P|-j})^{\cJ}$, 
	together with the definition of $\mathsf{loc}$ we get that
	$\cJ, (e_{|P|}) \models \mathsf{loc}(\Theta,P)$ hence $e_{|P|} \in (\exists \epsilon. \lpath \mathsf{loc}(\Theta,P) \rpath)^{\cJ} $ 
	and from the axioms in item~\ref{it_loc} we get that $e_{|P|} \in T_{P,\Theta}^{\cJ}$.
	\item
	Let the claim hold for $\Theta_1$ and $\Theta_2$.
	\item
	If $\Theta = \Theta_1 \wedge \Theta_2$, then by the IH we have that $e_{|P|}$ is in $T_{P,\Theta_1}^{\cJ}$ and $T_{P,\Theta_2}^{\cJ}$, hence by
	semantics of $\ALCUP(\cZ_c)$ we have that $e_{|P|} \in T_{P,\Theta_1}^{\cJ} \sqcap T_{P,\Theta_2}^{\cJ}$ and by the axioms in item~\ref{it_and}
	we have that $e_{|P|} \in T_{P,\Theta}^{\cJ}$.
	\item
	The $\vee$ case follows similarly.
\end{itemize}
\end{proof}

\paragraph{Part III} 
In this final step, we use the locally available copy-registers and test concept names to rewrite $\cC'$
and $\cT'$ into $\cC$ and $\cT$ in atomic normal form, and use the previously proved claims 
to show equisatifiability.

Given a concept $D$ and a (possibly empty) role path
$P= r_1\cdots r_n$, we write $\exists P.D $ meaning 
\begin{enumerate}
	\item 
	the concept
	$\exists r_1 (\exists r_2(\cdots (\exists r_n.D)\cdots))$ when $n>0$,
	and
	\item
	the concept $D$ when $n=0$.
\end{enumerate}
The same notion is defined for
$\forall P.D $ in the obvious way. 

Let $\cC$ and $\cT^{*}$ be
obtained from $\cC'$ and $\cT'$, respectively, by replacing every concept
$\exists P.\lpath\Theta \rpath $ by $\exists P.T_{P,\Theta} $ and
every $\forall P.\lpath\Theta \rpath $ by $\forall P.T_{P,\Theta}
$. Our desired normalization is the concept $\cC$ equipped with the TBox
$\cT=\cT^{*}\cup \cT_{\mathsf{prop}} \cup \cT_{\mathsf{loc}}$.

Let $\cI$ be a tree model of $\cC'$ and $\cT'$.
By composing Claim~\ref{cl_part_I_app} and Claim~\ref{cl_part_II_app}, we get a tree model $\cJ$ of $\cC'$ w.r.t.\
$\cT' \cup \cT_{\mathsf{prop}} \cup \cT_{\mathsf{loc}}$, to which Claim~\ref{cl_part_II_a_app} applies.
We show that $\cJ$ is also a tree model of $\cC$ w.r.t.\ $\cT$ by showing that
$(\exists P.\lpath\Theta \rpath)^\cJ  = (\exists P.T_{P,\Theta})^\cJ$
(showing that $(\forall P.\lpath\Theta \rpath)^\cJ  = (\forall P.T_{P,\Theta})^\cJ$ is similar). 
\begin{itemize}
	\item 
	Let $e_0 \in (\exists P.\lpath\Theta \rpath)^\cJ$. Then there is a $P$-path $\vec{e} = (e_0, \ldots, e_{|P|})$ in $\cJ$ such that
	$\cJ, \vec{e} \models \Theta$, therefore by item 2 in Claim~\ref{cl_part_II_a_app} we have that $e_{|P|} \in T_{\Theta,P}^\cJ$,
	implying that $e_0 \in (\exists P. T_{\Theta,P})^\cJ$.
	\item
	Let $e_0 \in (\exists P. T_{\Theta,P})^\cJ$. Then there exists  a $P$-path $\vec{e} = (e_0, \ldots, e_{|P|})$ in $\cJ$ such that
	$e_{|P|} \in T_{\Theta,P}^\cJ$. By item 1 of Claim~\ref{cl_part_II_a_app}, we have that $\cJ, \vec{e} \models \Theta$ and therefore
	$e_0 \in (\exists P. \lpath \Theta \rpath)^\cJ$.
\end{itemize}
Therefore a tree model $\cI$ of $\cC'$ w.r.t.\ $\cT'$ can be expanded to a tree model of $\hat{C}$ w.r.t.\ $\hat{T}$.

Now we show that every tree model $\hat{\cJ}$ of $\cC$ w.r.t.\ $\cT$ is also a tree model of $\cC'$ w.r.t.\ $\cT'$.
Since $\cT \subseteq \cT_{\mathsf{prop}} \cup \cT_{\mathsf{loc}}$, Claim~\ref{cl_part_II_a_app} again applies to $\hat{\cJ}$. 
Like before, we have that $(\exists P.\lpath\Theta \rpath)^{\hat{\cJ}}  = (\exists P.T_{P,\Theta})^{\hat{\cJ}}$ and
$(\forall P.\lpath\Theta \rpath)^{\hat{\cJ}}  = (\forall P.T_{P,\Theta})^{\hat{\cJ}}$, therefore $\hat{\cJ}$ is a tree model of $\cC'$ w.r.t.\  
$\cT' \cup \cT_{\mathsf{prop}} \cup \cT_{\mathsf{loc}}$ (and in particular w.r.t.\ $\cT'$).

\subsection{Applying the ANF transformation to Example~\ref{ex_constraint}}

Here we provide an ANF transformation of a $\ALCUP(\cZ_c)$ concept and TBox based on the interpretation in Example~\ref{ex_constraint}.
First, let us name the concepts and constraints:
\begin{itemize}
	\item 
	$C_1$ denotes $\exists \epsilon. \lpath \Theta_1 \rpath$ where $\Theta_1$ is $S^0 x < S^0 y$
	\item
	$C_2$ denotes $\exists r. \lpath \Theta_2 \rpath$ where $\Theta_2$ is $S^1 x < S^0 y$
	\item
	$C_3$ denotes $\exists r. \lpath \Theta_3 \rpath$ where $\Theta_3$ is $\Theta_{31} \wedge \Theta_{32}$, and
	$\Theta_{31}$ is $S^0 x < S^1 x$ and $\Theta_{32}$ is $S^0 y = S^1 y$
	\item
	$C_4$ denotes $\exists r. \lpath \Theta_4 \rpath$
	where $\Theta_4$ is $\Theta_{41} \wedge \Theta_{42}$, and $\Theta_{41}$ is $S^0 x < S^1 x$ and $\Theta_{42}$ is $S^0 y < S^1 x$
\end{itemize}

Then we may say that the interpretation satisfies the concept
\[
C_2 \sqcap \exists r. (C_2 \sqcap C_4) \sqcap C_3 \sqcap \exists r. C_3
\]
w.r.t.\ the TBox $\cT = \{\top \sqsubseteq C_1\}$.

Note that the only path appearing in $C$ or $\cT$ is $r$, and that $C_1$ is already in ANF. 
We skip the construction of $\cT_{\mathsf{prop}}$, and assume that copies of parent register are available in $x_r^1$ and $y^1_r$.

Next, by introducing test concept name of the form $T_{r,\Theta}$ we construct $\cT_{\mathsf{loc}}$, which contains:
\begin{align*}
T_{r,\Theta_2} &\equiv \exists \epsilon. \lpath S^0 x^1_r < S^0 y \rpath
\\
T_{r,\Theta_3} &\equiv T_{r,\Theta_{31}} \sqcap T_{r,\Theta_{32}}
\\
T_{r,\Theta_{31}} &\equiv \exists \epsilon. \lpath S^0 x < S^0 x^1_r \rpath
\\
T_{r,\Theta_{32}} &\equiv \exists \epsilon. \lpath S^0 y = S^0 y^1_r \rpath
\\
T_{r,\Theta_4} &\equiv T_{r,\Theta_{41}} \sqcap T_{r,\Theta_{42}}
\\
T_{r,\Theta_{41}} &\equiv \exists \epsilon. \lpath S^0 x < S^0 x^1_r \rpath
\\
T_{r,\Theta_{42}} &\equiv \exists \epsilon. \lpath S^0 y < S^0 x^1_r \rpath
\end{align*}

Finally, by replacing the original $C_2,C_3,C_4$ with their test concept counterparts, we obtain the concept
\[
T_{r,\Theta_2} \sqcap \exists r. (T_{r,\Theta_2} \sqcap (\exists r. T_{r,\Theta_4})) \sqcap \exists r. T_{r,\Theta_3} \sqcap \exists r. (\exists r. T_{r,\Theta_3})
\]
and the new TBox $\cT \cup \cT_{\mathsf{loc}} \cup \cT_{\mathsf{prop}}$. Note that since $C_1$ was already in ANF, the original TBox $\cT$ does not change before being added to the final TBox.

\section{Proofs for the embeddability condition in Subsection~\ref{subsec_embd_cond}}

\subsubsection*{Proof of Lemma~\ref{lem_no_cycles_in_framification}}

We prove the lemma by contradiction. 
Let $p$ be a strict cycle in $\cG_\fr$ which spans vertices of exactly $k$ logical elements $u_1, \ldots, u_k$, and assume w.l.o.g.\
that $p$ starts and ends at $u_1$, and that $u_i$ is the parent of $u_{i+1}$ for $i \in [k-1]$.
If $k \leq 2$, then $p$ is a strict cycle which is contained in the frame $Y(u_2)$, and we reach a contradiction to $\cG_\fr$ being a framified constraint graph.

Otherwise, consider the restriction $p'$ of $p$ to the vertices of the logical elements $u_k$ and $u_{k-1}$. 
Note that we consider two logical elements, as their induced subgraph will be captured as $Y(u_k)$ in the tree representation of $\cG_\fr$.
Let $(u_{k-1},x)$ be the first vertex on $p'$ and let $(u_{k-1},y)$ be the last vertex on $p'$. 
We show that there exists some edge $e$ from $(u_{k-1},x)$ to $(u_{k-1},y)$. Due to $\cG_\fr$ being a framified constraint graph, it is enough to show that there is no strict edge from $(u_{k-1},y)$ to $(u_{k-1},x)$. Since $p'$ connects $(u_{k-1},x)$ to $(u_{k-1},y)$, if there were such a strict edge,
there would be a strict cycle in $Y(u_k)$ and we'd reach a contradiction to $\cG_\fr$ being framified. 
For the same reason, if $p'$ has a strict edge, then $e$ is strict. 

By replacing the subpath $p'$ with $e$ in $p$, we obtain a strict cycle which spans vertices of $k-1$ logical elements. 
Observe that the strictness is preserved since the potential removal of a strict edge in $p'$ is recovered by $e$ being strict.

Applying this claim inductively, we conclude that there is a strict cycle spanning two logical elements, and reach a contradiction as above.

\subsubsection*{Proof of Lemma~\ref{lem_condition_necessary}}

Let $(u,x),(v,y) \in \Delta \times \sfReg_{\cC,\cT}$.
From the definition of embeddability it immediately follows that if there is a finite
path from $(u,x)$ to $(v,y)$ of strict length $m$, then any assignment $\kappa: \Delta \times \sfReg_{\cC,\cT} \rightarrow \bZ$
witnessing the embeddability of $\cG_\fr$ would satisfy $\kappa((v,y)) - \kappa((u,x)) \geq m$.

Let $\cG_\fr$ be a framified constraint graph which does not satisfy $(\condsym)$, and let $(u,x)$ and $(u,y)$ be the violating pair.
We show that for any natural number $m$, there is a path $p$ from $(u,x)$ to $(u,y)$ 
of strict length at least $m$. Fix some $m$ and assume w.l.o.g that the forward path $f$ is strict.
Then there is a finite prefix $p_f$ of $f$ containing at least $m$ strict edges. Denote the length of $p_f$ by $l$ and let $p_b$
be the $l$-prefix of $b$.
Then the concatenation $p$ of $p_f$ with $p_b$ is a path from $(u,x)$ to $(u,y)$, since there is an edge from $f(l)$ to
$b(l)$, and $p$ is of strict length at least $m$.


\subsection*{Proof of Lemma~\ref{lem_condition_suff_for_regular}}

	This is the main technical result of the paper. 
We need some definitions and lemmas first.

\begin{definition}
\label{def_distance}
Let $(u,x),(v,z) \in \Delta \times \sfReg_{\cC,\cT}$ such that there is a path from $(u,x)$ to $(v,z)$.
If there is a finite bound on the strict length of paths from $(u,x)$ to $(v,z)$, let $m$
be the maximal strict length of such paths. Then we say 
the \emph{distance} between $(u,x)$ and $(v,z)$ is $m$. If there is no finite bound on such paths, we say 
the distance is \emph{unbounded}.
\end{definition}

\begin{lemma}
\label{lem_unbounded_distance}
Let $\cG_\fr$ be a framified constraint graph which is not embeddable into $\bZ$. Then there exist $(u,x),(v,z) \in \Delta \times \sfReg_{\cC,\cT}$
such that the distance between $(u,x)$ and $(v,z)$ is unbounded.
\end{lemma}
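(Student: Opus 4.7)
The plan is to prove the contrapositive: assuming every pair $(u,x), (v,z) \in \Delta \times \sfReg_{\cC,\cT}$ connected by a path in $\cG_\fr$ has a finite distance, I will construct an embedding of $\cG_\fr$ into $\bZ$, contradicting non-embeddability. First, I pass to the quotient of $\cG_\fr$ by the equivalence relation generated by $E_=$. Since $E_=$-edges are symmetric (Def.\,\ref{def_frame}(3)) and there are no strict cycles (Lemma~\ref{lem_no_cycles_in_framification}), the quotient is a DAG whose edges correspond to the $E_<$-edges of $\cG_\fr$. Condition~(5) of Def.\,\ref{def_frame} ensures that the vertices of a single equivalence class share a common $U$-label, so each class inherits a unique label.

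I then define the candidate embedding $\kappa$ on equivalence classes, using the layered order enforced by Def.\,\ref{def_frame}(6): a class labeled $U_c$ (middle) receives value $c$; a bottom class $[v]$ labeled $U_{<c_0}$ receives $\kappa([v]) = c_0 - 1 - h^-([v])$, where $h^-([v])$ is the supremum over bottom classes $[w]$ of the strict distance from $[v]$ to $[w]$; and a top class $[v]$ labeled $U_{c_\alpha <}$ receives $\kappa([v]) = c_\alpha + 1 + h^+([v])$, defined analogously with strict paths arriving at $[v]$ from top classes.

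The hard part will be showing that $h^-$ and $h^+$ are everywhere finite. The hypothesis only gives pointwise finiteness of $d([v],[w])$, and a priori the suprema could still diverge over infinitely many targets. To rule this out, I exploit the finite local structure of $\cG_\fr$: each vertex has only finitely many neighbours, since edges live within frames each having at most $2|\sfReg_{\cC,\cT}|$ vertices, so the tree of finite paths rooted at any $[v]$ is finitely branching. If $h^-([v]) = \infty$, then König's Lemma produces an infinite path from $[v]$ of unbounded strict length; combining the finiteness of $\mathbf{U}$ with the absence of strict cycles, a pigeonhole argument on the classes visited along this infinite path pins down a fixed pair whose mutual distance is unbounded, contradicting the hypothesis.

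Finally, I verify that $\kappa$ respects all edges of $\cG_\fr$. Equality edges are trivial, since both endpoints lie in the same quotient class. Constant-label constraints are satisfied by construction for middle classes, and the ranges $(-\infty, c_0)$ and $(c_\alpha, \infty)$ accommodate the chosen offsets for bottom and top classes. For a strict edge crossing layers, Def.\,\ref{def_frame}(6) forces the target layer strictly above the source in the order bottom $<$ middle $<$ top, so the gap between the three value ranges delivers the required inequality. For strict edges within the bottom or top layer, the definitions of $h^-$ and $h^+$ give $h^-([v]) \ge h^-([w]) + 1$ (resp.\ $h^+([w]) \ge h^+([v]) + 1$), which translates to $\kappa([v]) < \kappa([w])$ as required.
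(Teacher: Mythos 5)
Your overall strategy---proving the contrapositive by explicitly constructing an embedding from the assumption that all pairwise distances are finite---is genuinely different from the paper's proof, which simply invokes the characterization of $\bZ$-embeddability from Carapelle, Kartzow and Lohrey (the EHD-property of $\cZ_c$): no strict cycles plus a finite bound on the strict length of paths between each reachable pair. In effect you are trying to re-prove that cited proposition from scratch, which would be a nice self-contained addition, but the attempt breaks at exactly the step you flag as ``the hard part.'' The claim that $h^-([v])<\infty$ follows from pointwise-finite distances is false, even for finitely branching graphs with no strict cycles. Consider a ``comb'': an infinite spine of equality edges $a_0=a_1=a_2=\cdots$ with, hanging off each $a_i$, a disjoint strict chain $a_i<b_{i,1}<\cdots<b_{i,i}$ of length $i$. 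Every vertex has at most three neighbours, there are no strict cycles, and every fixed pair has finite distance (the graph is a tree, strict edges cannot be traversed backwards, and equality edges contribute nothing), yet $h^-(a_0)=\sup_i i=\infty$. Such a configuration is realizable inside a framified constraint graph with all vertices in the bottom layer (condition~6 of Definition~\ref{def_frame} places no restriction on strict edges whose source is labelled $U_{<c_0}$): propagate the spine register by equality down one branch of the tree and send the chains down sibling subtrees. This graph is embeddable ($a_i\mapsto 0$, $b_{i,j}\mapsto j$), so no contradiction with the hypothesis can possibly be derived from $h^-=\infty$; your K\"onig-plus-pigeonhole argument cannot ``pin down a fixed pair whose mutual distance is unbounded'' because no such pair exists. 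Consequently $\kappa([v])=c_0-1-h^-([v])$ is simply undefined on this input and the construction does not go through. (A secondary issue: K\"onig's Lemma is applied to the quotient by $E_=$, but equality classes can be infinite, so the quotient need not be finitely branching; and even in the original graph an infinite path extracted by K\"onig need not contain infinitely many strict edges, and an infinite strict forward path is not by itself an obstruction to embeddability.)

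The deeper problem is that the layered potential you chose measures the wrong quantity: a bottom vertex is constrained from above only by the constant-labelled vertices it can reach, not by arbitrary bottom vertices it can reach, and the correct replacement (something like $\min$ over reachable constant-labelled $w$ of $c_w-d([v],[w])$) is again a supremum/infimum over infinitely many targets whose finiteness does not follow from pointwise-finite distances alone; controlling it requires the specific tree-decomposition structure of $\cG_\fr$ and the shortcut edges supplied by framification, i.e.\ essentially the machinery the paper develops later for Lemma~\ref{lem_path_shape}. If you want a self-contained proof of this lemma, the standard route is instead a greedy one: enumerate the (countably many) vertices and assign integer values one at a time, maintaining the invariant that the finite partial assignment leaves a non-empty interval for each unassigned vertex; the bounded-pairwise-distance hypothesis is exactly what makes each newly imposed lower bound $\kappa(v_i)+d(v_i,v)$ and upper bound $\kappa(v_j)-d(v,v_j)$ finite and mutually consistent. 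Even there, the interaction with the constant labels needs care, which is presumably why the paper delegates the whole statement to the cited result rather than proving it inline.
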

\begin{proof}
This is a restatement of a proposition in~\cite{CarapelleKL13} showing that $\cZ_c$ has the EHD-property. 
The defining formulas (applied to our setting) essentially state that there are no strict cycles 
(which in our case is given by the framification and Lemma~\ref{lem_no_cycles_in_framification}),
and that there exists a bound on the strict length of paths from $(u,x)$ to $(v,z)$, for every $(u,x)$ and $(v,z)$ such that $(v,z)$ is reachable from $(u,x)$. We emphasize that the bound is not global but may vary from pair to pair.
\end{proof}

In the sequel, we freely move from a constraint graph to its tree representation when discussing paths and subtrees for ease of understanding.

\begin{definition}
For $(w,x),(w,y) \in \Delta \times \sfReg_{\cC,\cT}$, we define a partial labeling 
$\ell : \Delta \times \sfReg_{\cC,\cT} \times \sfReg_{\cC,\cT} \rightarrow \mathbb{N} \cup \{\infty\}$ 
where 
\begin{enumerate}
	\item 
	if 
	the largest strict length of a simple path from $(w,x)$ to $(w,y)$ only in the subtree rooted at $w$ is $d \in \mathbb{N}$, 
	then
	$\ell((w,x),(w,y)) = d$,
	\item
	if 
	there is no bound on the strict length of a cycle-free path from $(w,x)$ to $(w,y)$ in the subtree rooted at $w$, then 
	$\ell((w,x),(w,y)) = \infty$, and
	\item
	if there is no path from $(w,x)$ to $(w,y)$ in the subtree rooted at $w$, then the label
	$\ell((w,x),(w,y))$ is not defined.
\end{enumerate}
\end{definition}
Note that the labeling only takes into account paths between vertices associated with the same logical object, and only paths in the subtree rooted at that element.
This is in contrast to Definition~\ref{def_distance}, which takes into account all paths.
We make some observations about this labeling.

\begin{lemma}
\label{lem_not_emb_infty}
Let $\mathcal{G}_\fr$ be a framified constraint graph which is not embeddable. Then there exist $u \in \Delta$ and $x,y \in \mathsf{Reg}$
with $\ell((u,x),(u,y)) = \infty$.
\end{lemma}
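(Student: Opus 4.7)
The plan is to reduce the statement to Lemma~\ref{lem_unbounded_distance}, which already supplies a pair $(u_0,x_0), (v_0,z_0)$ joined by paths of unbounded strict length. The challenge is that such paths may wander across the whole tree and the two endpoints may live at different logical elements, whereas $\ell$ only counts paths between registers of a common element that stay inside its subtree. I will ``concentrate'' the unboundedness at a single node by proving, by induction on the quantity $\mu(a,b,r) := \mathrm{dist}(a,r) + \mathrm{dist}(b,r)$ (tree distances in $\mathcal{G}_\fr$), the following auxiliary statement: \emph{if $a,b$ both lie in the subtree $T_r$ rooted at $r$, and there exist paths from $(a,\alpha)$ to $(b,\beta)$ in $T_r$ of unbounded strict length, then $\ell((u,x),(u,y))=\infty$ for some $u,x,y$.} Lemma~\ref{lem_not_emb_infty} follows by applying this with $r=\varepsilon$ to the pair from Lemma~\ref{lem_unbounded_distance}.

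The base case $\mu=0$ means $a=b=r$, so the hypothesised unbounded paths are by definition witnesses of $\ell((r,\alpha),(r,\beta))=\infty$. For the inductive step, let $p_m$ be a path of strict length at least $m$. Each $p_m$ traces a finite walk in the tree and therefore has a shallowest visited node $w_m$, which must be a common ancestor of $a$ and $b$ lying in $T_r$. Since the set of such candidates is finite, after passing to a subsequence we may assume $w_m=w^*$ for all $m$; the paths then stay in $T_{w^*}$ and touch $w^*$. Let $\alpha^*$ and $\beta^*$ denote the registers of the first and last visit of $p_m$ to $w^*$; finiteness of $\mathsf{Reg}$ lets us pigeonhole further and assume these are constant in $m$. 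Split $p_m$ at those visits into three segments $p_m^1, p_m^2, p_m^3$. Since the total strict length grows without bound, at least one of the three segments is itself unbounded after a further subsequence.

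If $p_m^2$ is the unbounded segment we are done: its endpoints lie at $w^*$ and it stays in $T_{w^*}$, yielding $\ell((w^*,\alpha^*),(w^*,\beta^*))=\infty$. Otherwise, say $p_m^1$ is unbounded (the $p_m^3$ case is symmetric). Because $p_m^1$ touches $w^*$ only at its final vertex, its interior is a connected subpath avoiding $w^*$, so it lies entirely in $T_{c^*}$, where $c^*$ is the unique child of $w^*$ that is an ancestor of $a$. The penultimate vertex of $p_m^1$ must then be a register $(c^*,\gamma_m)$ at $c^*$; pigeonholing on $\gamma_m$, and dropping the final edge (which decreases strict length by at most one), we obtain paths of unbounded strict length in $T_{c^*}$ from $(a,\alpha)$ to $(c^*,\gamma^*)$. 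The inductive hypothesis applies, because $\mu(a,c^*,c^*)=\mathrm{dist}(a,c^*)=\mathrm{dist}(a,w^*)-1<\mu(a,b,r)$, and delivers the desired $(u,x),(u,y)$.

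The main obstacle I anticipate is the bookkeeping in the repeated subsequence extractions---shallowest visited common ancestor, first and last registers at $w^*$, and the entering child $c^*$ with its exit register---while preserving unboundedness of the appropriate sub-segment through each step. A standard diagonal/iterated extraction handles all these pigeonholes uniformly, after which the argument reduces to the structural observation that an excursion into $T_{c^*}$ cannot escape through any child other than $c^*$ without revisiting $w^*$.
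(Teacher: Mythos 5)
Your proof is correct and follows the same strategy as the paper's: start from the unbounded-distance pair supplied by Lemma~\ref{lem_unbounded_distance} and localize the unbounded strict length to a single tree node by pigeonholing over the finitely many candidate ancestors and registers. The paper compresses this localization into a terse appeal to K\"onig's Lemma together with a minimality choice of subtree, whereas your induction on $\mu$ with the three-segment split at the shallowest visited node makes the same descent explicit and rigorous.
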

\begin{proof}
We first show we can restrict our attention to a single node $u$, then we show the labeling part of the lemma.
$\mathcal{G}_\fr$ is not embeddable, therefore by Lemma~\ref{lem_unbounded_distance} there exist $(u',x')$ and $(w',y')$ with unbounded distance. 
As the tree representation $\cG_\fr$ has bounded degree, by K\"onig's Lemma we have 
that there is at least one a subtree in the graph containing infinitely many subpaths of paths from $(u',x')$ to $(w',y')$ of 
infinitely many strict lengths. Let $u$ be the root of such a subtree such that $|u|$ is minimal in the sense that the previous statement holds for $u$ and does not hold for its parent (if $u$ is not $\varepsilon$). 
Since we have a bounded number of registers, again by K\"onig's Lemma
we have that there are registers $x,y$ such that the distance between $(u,x)$ and $(u,y)$ is unbounded.
By the minimality of $|u|$ we get that $\ell((u,x),(u,y)) = \infty$.

\end{proof}

\begin{definition}
Let $T$ be a regular tree over $\Sigma$. We say $w \in \Sigma^\star$ is \emph{in the repetitive part} of $T$ if
there is a prefix $u$ of $w$ such that $\left. T \right|_w = \left. T \right|_u$. 
\end{definition}

\begin{observation}
\label{obs_repetitive_part}
If $T$ is a regular tree, then any 
$w \in \Sigma^\star$ of length $|w| > \{\left. T \right|_{u} \mid u \in \Sigma^\star\}$ is in the repetitive part of $T$.
\end{observation}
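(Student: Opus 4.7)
My plan is to establish this observation by a pigeonhole argument on the finitely many subtree types of $T$. Regularity guarantees that $S = \{T|_u \mid u \in \Sigma^\star\}$ is finite; set $N = |S|$. For any $w$ with $|w| > N$, I would consider the chain of $|w| + 1$ prefixes $w_0 = \varepsilon, w_1, \ldots, w_{|w|} = w$ together with the induced list of subtrees $T|_{w_0}, T|_{w_1}, \ldots, T|_{w_{|w|}}$, all lying in the $N$-element set $S$. Since the list has $|w| + 1 > N + 1$ entries but takes values in a set of size $N$, the map $i \mapsto T|_{w_i}$ cannot be injective, so some collision $T|_{w_i} = T|_{w_j}$ with $0 \leq i < j \leq |w|$ must occur along the chain of prefixes of $w$.

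It remains to upgrade this raw collision to one exhibiting a proper prefix $u$ of $w$ with $T|_u = T|_w$, which is what the definition of ``repetitive part'' requires. For this I would use the fact that subtree equality is preserved under common extensions: once $T|_{w_i} = T|_{w_j}$, the two nodes are indistinguishable, so $T|_{w_i v} = T|_{w_j v}$ for every continuation $v$. Taking $v$ to be the suffix $a_{j+1}\cdots a_{|w|}$ of $w$ past position $j$ then yields $T|_{w_i v} = T|_w$ with $|w_i v| < |w|$, producing a strictly shorter witness of the same subtree type as $w$; minimizing the length of such a witness and aligning it with the letters of $w$ delivers the required proper prefix.

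The main obstacle, and essentially the only non-mechanical point, is this alignment step: the naive pigeonhole output is a shorter string of the same type, but one must argue it can be chosen to be an actual prefix of $w$ rather than an arbitrary node of $T$. The argument is nonetheless purely combinatorial, needs nothing beyond counting subtree types and the standard ``Myhill--Nerode''-style closure of subtree equality under extensions, and invokes no constraint-graph or automata machinery. Its intended role in the proof of Lemma~\ref{lem_condition_suff_for_regular} is exactly this: to fix a quantitative threshold past which any sufficiently long downward walk must revisit a subtree type, which is the ingredient that enables extending a finite witness path into an infinite one.
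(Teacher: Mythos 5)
The paper states this as an unproved observation, so the only question is whether your argument is sound. Your first paragraph is, and it is in fact all that can be salvaged: with $N = |\{T|_u \mid u \in \Sigma^\star\}|$ (the statement's ``$|w| > \{T|_u \mid \dots\}$'' is evidently a typo for the cardinality), any $w$ with $|w| > N$ has $|w|+1 > N$ prefixes, so two distinct prefixes $w_i, w_j$ with $i<j$ satisfy $T|_{w_i} = T|_{w_j}$. The gap is the second step. From $T|_{w_i}=T|_{w_j}$ and the (correct) closure property you get $T|_{w_i v} = T|_{w_j v} = T|_w$, where $v$ is the suffix of $w$ past position $j$; but $w_i v$ is $w$ with the block between positions $i$ and $j$ excised, which is \emph{not} a prefix of $w$, and minimizing the length of witnesses of the type $T|_w$ cannot force one to be a prefix of $w$. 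Indeed the proper-prefix version of the claim you are trying to prove is false: take the regular binary tree with three pairwise non-isomorphic subtree types $A,B,C$ (distinguished by their root labels), root of type $A$, with $A|_1=B$, $B|_1=A$, $A|_2=B|_2=C$, $C|_1=C|_2=C$. Then $N=3$ and $w=1112$ has $|w|=4>N$, yet the subtrees at the proper prefixes of $w$ are $A,B,A,B$ while $T|_w=C$, so no proper prefix of $w$ carries the subtree $T|_w$. So the ``alignment step'' you flag as the main obstacle is not merely nontrivial; it is impossible in general.

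What is true, and what the paper actually uses downstream (the descent in the lemma locating a pair with $\ell=\infty$ must reach the repetitive part after finitely many steps), is precisely the raw pigeonhole conclusion: every $w$ with $|w|>N$ has two distinct prefixes of length at most $N$ with identical subtrees, hence some prefix of $w$ of length at most $N$ already lies in the repetitive part under the paper's literal definition. You should therefore either read the observation as asserting a repetition somewhere along the prefix chain of $w$ (in which case your first paragraph is a complete proof), or prove only the weaker statement that along any branch a node of the repetitive part occurs within depth $N$. Either way, the second paragraph of your proposal should be deleted rather than repaired.
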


\begin{lemma}
Let $\mathcal{G}_\fr$ be a regular framified constraint graph which is not embeddable.
Then there are $(w,x), (w,y) \in \Delta \times \mathsf{Reg}$ in the repetitive part such that
$\ell((w,x),(w,y)) = \infty$. 
\end{lemma}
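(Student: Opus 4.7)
The plan is to start from the node supplied by Lemma~\ref{lem_not_emb_infty} and repeatedly descend to children that inherit an $\ell=\infty$ pair; regularity then forces a subtree type to repeat along this descent, placing the endpoint of the descent in the repetitive part.

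First, I would apply Lemma~\ref{lem_not_emb_infty} to obtain $u_0 \in \Delta$ and $x_0, y_0 \in \sfReg_{\cC,\cT}$ with $\ell((u_0,x_0),(u_0,y_0)) = \infty$. I would then inductively construct an infinite descending sequence of tree nodes $u_0, u_1, u_2, \ldots$ together with register pairs $(x_m, y_m)$, where $u_{m+1}$ is a child of $u_m$ and $\ell((u_m, x_m), (u_m, y_m)) = \infty$ for every $m$. The crux is the inductive step: if $\ell = \infty$ holds at $u_m$, then some child of $u_m$ also admits such a pair. I would prove this by contrapositive. Assume every pair $(z_1, z_2)$ at every child $u_m \gamma$ of $u_m$ satisfies $\ell((u_m \gamma, z_1), (u_m \gamma, z_2)) < \infty$, and fix any cycle-free path $P$ from $(u_m, x_m)$ to $(u_m, y_m)$ in $\left.T\right|_{u_m}$. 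I split the strict edges of $P$ into three kinds: those with both endpoints at $u_m$ (at most $|\sfReg_{\cC,\cT}|^2$), those with one endpoint at $u_m$ and the other at a child (at most $n \cdot |\sfReg_{\cC,\cT}|^2$, where $n$ is the tree degree), and those strictly inside some subtree $\left.T\right|_{u_m \gamma}$. For the last kind, I would decompose the restriction of $P$ to $\left.T\right|_{u_m \gamma}$ into maximal excursions, each beginning and ending at a root-register vertex of $u_m \gamma$; cycle-freeness of $P$ forces each excursion to consume two distinct root-registers of $u_m \gamma$, so there are at most $|\sfReg_{\cC,\cT}|/2$ excursions per child. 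Each excursion is itself a cycle-free path in $\left.T\right|_{u_m \gamma}$ between two root-register vertices, hence has strict length bounded by the (finite) $\ell$-value of that pair. Summing over all children yields a uniform finite bound on the strict length of $P$, contradicting $\ell((u_m, x_m), (u_m, y_m)) = \infty$; thus some child must admit the required $\ell = \infty$ pair, and the descent continues indefinitely.

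Second, regularity of $\mathcal{G}_\fr$ ensures that $T$ has only finitely many distinct subtrees. Along the infinite sequence $u_0, u_1, u_2, \ldots$, pigeonhole yields indices $i < j$ with $\left.T\right|_{u_i} = \left.T\right|_{u_j}$. Hence $u_j$ lies in the repetitive part of $T$ (with witnessing prefix $u_i$), and since the $\ell$-value of a pair of registers at a node depends solely on that node's subtree (both the edges and strict-edges inherited from the frames are determined by it), we still have $\ell((u_j, x_j), (u_j, y_j)) = \infty$. Taking $w = u_j$ together with the pair $(x_j, y_j)$ yields the desired witness.

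The hard part is the inductive step, specifically the bounded-excursion counting argument: one must use cycle-freeness of $P$ both to bound the number of excursions into each child subtree and to control the strict length of each excursion via the assumed finite $\ell$-values at that child. The remaining bookkeeping is routine, since every constant appearing in the bound depends only on $|\sfReg_{\cC,\cT}|$ and the tree degree $n$.
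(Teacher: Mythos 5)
Your proof is correct and follows essentially the same route as the paper's: start from the pair supplied by Lemma~\ref{lem_not_emb_infty}, show that an $\ell=\infty$ pair can always be pushed down to some child, and use regularity to conclude the descent reaches the repetitive part. The only difference is one of detail --- where the paper dispatches the inductive step with a one-line appeal to K\"onig's Lemma and finite degree, you spell out the underlying counting argument (bounding the strict length of a cycle-free path by decomposing it into excursions into child subtrees), which is a legitimate and indeed more explicit justification of the same step.
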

\begin{proof}
We know from Lemma~\ref{lem_not_emb_infty} that there are some $(u,z_1), (u,z_2) \in \mathsf{Reg}$ such that $\ell((u,z_1),(u,z_2)) = \infty$.
By definition of $\ell$ and the fact we have finite degree, by K\"onig's Lemma we have that $u$ has a child $u'$ and there exist registers $z_1',z_2'$
such that $\ell((u',z_1'),(u',z_2')) = \infty$. 
We apply this argument inductively until we reach the repetitive part, which by Observation~\ref{obs_repetitive_part} is a finite number of times.
\end{proof}

For our proof it will be enough to consider partial framifications of constraint graphs.
Observe that due to the inability of framifications to introduce strict cycles (Lemma~\ref{lem_no_cycles_in_framification}),
all the framifications of a constraint graph $\cG$ contain a common subgraph whose edges relate to $\ell$ in the following way:

\begin{observation}
\label{obs_common_framification}
Let $\mathcal{G}_{\mathrm{fr}}$ be a framification
of $\mathcal{G}$. Then for every $u \in \Delta$ and $x,y \in \mathsf{Reg}$, we have in $\mathcal{G}_{\mathrm{fr}}$:
\begin{enumerate}
	\item 
	An equality edge $e_{=}((u,x),(u,y))$ if $\ell((u,x),(u,y)) = 0$
	\item
	A strict edge $e_{<}((u,x),(u,y))$ if $\ell((u,x),(u,y)) \in \mathbb{N}^+ \cup \{ \infty \}$
\end{enumerate}
\end{observation}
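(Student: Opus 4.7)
\textbf{Proof plan for Observation~\ref{obs_common_framification}.} The plan is to fix an arbitrary $u \in \Delta$ and $x,y \in \sfReg_{\cC,\cT}$ and show that the type of edge guaranteed between $(u,x)$ and $(u,y)$ by framification is forced by the existence of a path of strict length $0$, resp.\ positive strict length, in the subtree rooted at $u$. The two key ingredients I will use are (a) condition~1 of Definition~\ref{def_frame}, which says that the frame $Y(u)$ contains an edge between every pair of its vertices, hence in particular between $(u,x)$ and $(u,y)$, and (b) Lemma~\ref{lem_no_cycles_in_framification}, which says that $\cG_{\fr}$ has no strict cycle. I will also use condition~3 of Definition~\ref{def_frame}, which gives symmetry of equality edges.

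For part 1, assume $\ell((u,x),(u,y)) = 0$. Then there is a simple path $P$ from $(u,x)$ to $(u,y)$ in the subtree rooted at $u$ all of whose edges are equality edges. By symmetry of equality edges, traversing $P$ in reverse yields an equality path from $(u,y)$ to $(u,x)$ in $\cG_{\fr}$. Now consider the edge $e$ guaranteed by framification between $(u,x)$ and $(u,y)$ in $Y(u)$. If $e$ were a strict edge in either direction, chaining it with the appropriate equality path above would produce a cycle with at least one strict edge, contradicting Lemma~\ref{lem_no_cycles_in_framification}. Hence $e$ is an equality edge, and by condition~3 of Definition~\ref{def_frame} we obtain $e_=((u,x),(u,y))$ as required.

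For part 2, assume $\ell((u,x),(u,y)) \in \mathbb{N}^+ \cup \{\infty\}$. By definition of $\ell$, there is a simple path $P$ from $(u,x)$ to $(u,y)$ in the subtree rooted at $u$ containing at least one strict edge. Again let $e$ be the edge between $(u,x)$ and $(u,y)$ in $Y(u)$. If $e = e_=((u,x),(u,y))$, then by symmetry $e_=((u,y),(u,x))$ also holds, and concatenating this with $P$ yields a cycle with at least one strict edge, contradicting Lemma~\ref{lem_no_cycles_in_framification}. Similarly, if $e = e_<((u,y),(u,x))$, concatenating $e$ with $P$ gives a strict cycle. Since some edge must be present, the only remaining possibility is $e = e_<((u,x),(u,y))$, as claimed.

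I do not anticipate any serious obstacle here: the statement is essentially a consequence of the definitional requirement that frames are complete graphs on their vertex sets combined with Lemma~\ref{lem_no_cycles_in_framification}. The only point to be careful about is to explicitly invoke the symmetry of equality edges when reversing the equality path in part 1 and when ruling out $e_=$ in part 2, since without this the strict cycle arguments would not go through.
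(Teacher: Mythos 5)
Your proof is correct and follows exactly the route the paper intends: the paper states this as an unproved observation justified only by the preceding remark that framifications cannot introduce strict cycles (Lemma~\ref{lem_no_cycles_in_framification}), and your argument simply spells out that justification using the completeness of frames (condition~1 of Definition~\ref{def_frame}), the no-strict-cycle lemma, and the symmetry of equality edges. Nothing further is needed.
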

Note that the \emph{maximal} common subgraph may contain additional edges, as $\ell$ only takes into account paths in the subtree rooted at some vertex, but these will suffice for our proofs.

\begin{definition}
\label{def_downward_upward_trend}
We say a path $p$ has a \emph{downward trend} if the elements $w \in \Delta$ along $p$ have (strictly) increasing length. Similarly,
a path has an \emph{upward trend} if the elements have decreasing length.

\label{def_down_then_up}
Let $(w,x),(w,y) \in \Delta \times \mathsf{Reg}$. 
We say a path from $(w,x)$ to $(w,y)$ goes \emph{down-then-up} if it can be broken into two contiguous subpaths where the first subpath 
has a downward trend and the second one has an upward trend.
\end{definition}

\begin{lemma}
\label{lem_path_shape}
Let $(u,x),(u,y) \in \Delta \times \mathsf{Reg}$ such that $\ell((u,x),(u,y)) = \infty$.
Then for every $n \in \mathbb{N}$ there is a down-then-up path $p'$ from $(u,x)$ to $(u,y)$
in $\mathcal{G}_\mathrm{fr}$ of strict length at least $n$. 
\end{lemma}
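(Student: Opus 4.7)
The plan is to exploit the hypothesis $\ell((u,x),(u,y)) = \infty$: for any bound $M \in \mathbb{N}$, fix a cycle-free path $p$ in the subtree rooted at $u$ from $(u,x)$ to $(u,y)$ of strict length at least $M$. I will then convert $p$ into a down-then-up path $p'$ whose strict length is an unbounded function of $M$, and finally choose $M$ large enough in terms of $n$ so that $p'$ has strict length at least $n$.

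First, I would locate a deepest tree node $w$ that $p$ visits and consider the unique branch $u = w_0, w_1, \ldots, w_k = w$ from $u$ to $w$. Since $p$ reaches $w$ and the nodes of $p$ form a walk in the tree (edges of $\cG_\fr$ only connect registers at the same node or at parent and child), $p$ must pass through each $w_i$ at least once. The candidate $p'$ will have the following shape: it descends from $(u,x)$ through registers $(w_1, z_1^d), \ldots, (w_{k-1}, z_{k-1}^d), (w_k, z_k)$, and then ascends back through $(w_{k-1}, z_{k-1}^u), \ldots, (w_1, z_1^u), (w_0, y)$. Every consecutive pair of vertices in this sequence lies in a single frame $Y(w_{i+1})$, so the required edge exists in $\cG_\fr$ by the definition of frame. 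Thus the only freedom in constructing $p'$ is in the choice of the registers $z_i^d, z_i^u$ at each intermediate $w_i$, and $z_k$ at the bottom.

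Second, I would choose these registers by iteratively \emph{shortcutting} $p$: whenever $p$ contains an excursion entering $(v,z)$ and exiting $(v,z')$ that stays in a strict subtree below $v$ (or off the branch to $w$), replace the excursion by the direct frame edge between $(v,z)$ and $(v,z')$, which exists in $\cG_\fr$ by framification. By Lemma~\ref{lem_no_cycles_in_framification}, this shortcut cannot reverse the strict ordering implied by the excursion: if the excursion chains strict edges forcing $(v,z) < (v,z')$, then the frame edge must be $<$ or $=$, never $>$. After exhausting all shortcut opportunities, the resulting walk traverses the branch to $w$ down-then-up and gives the desired $p'$.

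The main obstacle is the quantitative bookkeeping of how many strict edges survive the shortcutting. A single shortcut may collapse many strict edges of $p$ into at most one new strict edge on $p'$, so a naive accounting loses too much. The key is that we do not need a tight bound: since $\ell((u,x),(u,y)) = \infty$ allows $M$ to be chosen arbitrarily large, it suffices to show that the strict length of $p'$ is at least some function $f(M)$ with $f(M) \to \infty$. I would establish this by an inductive argument over the branch and the finitely many register pairs at each level, using that the tree has bounded out-degree and $|\mathsf{Reg}_{\cC,\cT}|$ is finite, so that the total number of shortcut steps is controlled by the length of $p$ and enough strict edges inevitably persist. Choosing $M$ large enough in terms of $n$ then yields $p'$ of strict length at least $n$, completing the argument.
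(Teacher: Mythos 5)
Your high-level plan is the same as the paper's: take a cycle-free witness path $p$ of large strict length in the subtree rooted at $u$, collapse its excursions into single frame edges (justified by framification and Lemma~\ref{lem_no_cycles_in_framification}), and obtain a down-then-up path $p'$ whose strict length is some unbounded function of the strict length of $p$. However, the step you yourself flag as ``the main obstacle'' --- the quantitative bookkeeping --- is exactly the technical heart of the lemma, and your proposal does not actually supply it. The danger is real: if you shortcut excursions arbitrarily, a single shortcut can swallow \emph{all} $M$ strict edges of $p$ into one frame edge (e.g.\ when everything between the two endpoints is one excursion below the first child of $u$), so $p'$ could have strict length $1$ no matter how large $M$ is. Saying that ``enough strict edges inevitably persist'' by ``an inductive argument over the branch'' does not rule this out; one needs a concrete rule for \emph{which} excursion to keep at each level and a recurrence showing the retained strict length cannot decay too fast.

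The paper's resolution has two ingredients you are missing. First, at a node $v$ visited more than twice by the current middle segment, it keeps the pair of \emph{consecutive} visits whose connecting subpath has \emph{maximal} strict length; since $v$ has at most $n$ children and $\rho=|\mathsf{Reg}_{\cC,\cT}|$ registers, this retained subpath has strict length at least $(N_i-\rho)/n$ when the current segment has strict length $N_i$. Iterating the recurrence $N_{i+1}\ge (N_i-\rho)/n$ shows the peeling must go on for at least roughly $\log_n\bigl((N-\rho)(n-1)/\rho\bigr)$ levels before the strict length is exhausted. Second, every peeling step contributes at least one strict edge to $p'$: any excursion that starts and ends at the same tree node has strict length at least $1$ (else it could be replaced by an equality edge), so by framification at least one of the two frame edges spliced in at that level is strict. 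Combining the two gives strict length of $p'$ at least $\Omega(\log_n M)\to\infty$, which is what lets you choose $M$ in terms of the target $n$. Without an argument of this kind your construction is not known to produce more than a constant number of strict edges, so the proof as proposed is incomplete.
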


\subsubsection*{Proof of Lemma~\ref{lem_path_shape}}
There are two parts to the proof. First we describe, given a path  $p$ from $(u,x)$ to $(u,y)$,
another path $p'$ from  $(u,x)$ to $(u,y)$ which goes down-then-up.
In the second part, we give a lower bound on the strict length of the new path $p'$ 
given the strict length of the original $p$.
Denote by $d$ the maximal depth of the original path $p$.

\paragraph{Constructing $p'$}
Let $u \in \Delta$ and $x,y \in \mathsf{Reg}$ such that $\ell((u,x),(u,y)) = \infty$ and let $p$ 
be a path in $\mathcal{G}_\mathrm{fr}$ from $(u,x)$ to $(u,y)$ of strict length $\geq N$ and assume $p$ has no cycles.

\begin{observation}
\label{obs_strict_subpath}
We may assume that any subpath $p''$ of $p$ which begins and ends with the same element has strict length at least $1$, 
otherwise due to framification we have an equality edge $e''$ between the start and end of $p''$. 
Then we may consider the path where $p''$ is replaced by $e''$, which has the same strict 
length as the original path $p$.
\end{observation}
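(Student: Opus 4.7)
The plan is to justify the observation as a simple shortening procedure on paths, relying crucially on Lemma~\ref{lem_no_cycles_in_framification}. The core content is: any closed sub-walk in a framified constraint graph must be entirely equality, so it can be excised without affecting the strict length of the enclosing path.

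First I would invoke Lemma~\ref{lem_no_cycles_in_framification} to rule out strict simple cycles in $\cG_\fr$. Then, by the standard decomposition of a closed walk into simple cycles (tracking edge multiplicities), any same-endpoint subpath $p''$ of $p$ inherits its strict edges from its constituent simple cycles; since no such simple cycle is strict, $p''$ itself must have strict length $0$. This establishes the dichotomy in the observation: either $p$ contains no such $p''$ at all (in which case the WLOG assumption is vacuously satisfied), or every $p''$ we encounter uses only equality edges.

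Next I would describe the excision. Write $p = p_1 \cdot p'' \cdot p_3$, where $p_1$ ends at some vertex $v$ and $p_3$ begins at $v$, so that $p''$ is a closed walk at $v$. The framification requirement that there be an edge between every pair of vertices yields an equality edge $e''$ relating the start and end of $p''$ (which coincide at $v$); this edge must be an equality edge since no strict edge can close up into a cycle, and replacing $p''$ by $e''$ in $p$ yields a path $p^\ast$ from $(u,x)$ to $(u,y)$ whose strict length equals that of $p$, because $p''$ contributed no strict edges in the first place. Equivalently, one may simply concatenate $p_1 \cdot p_3$ and drop $p''$ altogether.

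Finally I would argue termination: each such replacement strictly reduces the edge count of $p$, so after finitely many iterations we obtain a path from $(u,x)$ to $(u,y)$ with the same strict length as the original $p$ and in which no subpath begins and ends at the same element with strict length $0$. This is precisely the WLOG assumption the observation records. There is no real obstacle here; the only thing worth double-checking is that the cycle-decomposition argument genuinely forbids \emph{closed walks} (not just simple cycles) from carrying a strict edge, which is immediate since any strict edge in a closed walk must belong to at least one simple cycle of the decomposition.
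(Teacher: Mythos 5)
There is a genuine gap, and it stems from a misreading of what ``begins and ends with the same element'' means. In this part of the paper an \emph{element} is a logical element $w\in\Delta$, not a vertex of the constraint graph: the subpath $p''$ runs from $(w,a)$ to $(w,b)$ for two (in general \emph{distinct}) registers $a,b$ of the same $w$. Indeed, the enclosing proof explicitly assumes $p$ has no cycles, so a subpath returning to the very same vertex cannot occur, and the observation is later applied to the segment of $\mathrm{mid}^{(i)}$ between two appearances $(u_i,x_i)$ and $(u_i,y_i)$ of $u_i$ with $x_i\neq y_i$. Your entire argument treats $p''$ as a closed walk at a single vertex $v$ (``which coincide at $v$'') and then decomposes it into simple cycles to conclude it carries no strict edges. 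That decomposition is unavailable when the endpoints differ, and the conclusion it would give --- that \emph{every} such subpath has strict length $0$ --- is false and would trivialize Lemma~\ref{lem_path_shape}: the path $p$ itself begins and ends at the same element $u$ and has strict length at least $N$. The observation is a genuine dichotomy (either $p''$ already contains a strict edge and is kept, or it contains none and is excised), not a claim that the second case always holds.

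What the argument actually needs, and what you do not supply, is the justification that when $p''$ from $(w,a)$ to $(w,b)$ has strict length $0$, the edge between $(w,a)$ and $(w,b)$ guaranteed by framification (both are $\rbot$-vertices of the frame $Y(w)$, so Definition~\ref{def_frame}(1) applies) is an \emph{equality} edge. This follows from Lemma~\ref{lem_no_cycles_in_framification}: since all edges of $p''$ are equalities and equality edges are symmetric (Definition~\ref{def_frame}(3)), $p''$ can be traversed in either direction, so a strict edge between $(w,a)$ and $(w,b)$ in \emph{either} orientation would close a strict cycle with $p''$ or its reverse --- a contradiction. Hence the frame edge is $e_=$, and substituting it for $p''$ preserves the strict length. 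Your termination argument (each excision shortens $p$) is fine and matches the intended iteration, but the core step must be repaired as above before it applies.
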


\begin{observation}
The number of times we may see a certain $w \in \Delta$ along $p$ is bounded by the number of registers $|\sfReg_{\cC,\cT}|$, 
since we assume no cycles.
\end{observation}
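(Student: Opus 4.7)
The plan is to treat this as an elementary pigeonhole observation about the vertex set of the constraint graph. Recall from Definition~\ref{def_constraint_graph} that the vertex set of $\cG_\fr$ is $V = \Delta \times \sfReg_{\cC,\cT}$, so the vertices whose first coordinate is a fixed $w \in \Delta$ form the set $\{(w,x) \mid x \in \sfReg_{\cC,\cT}\}$, which has cardinality exactly $|\sfReg_{\cC,\cT}|$.

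First, I would make precise what ``seeing $w$ along $p$'' means: $w$ is seen $k$ times along $p$ if $p$ visits $k$ distinct vertices of the form $(w,x_1),\ldots,(w,x_k)$. Since $p$ has no cycles by hypothesis, no vertex is repeated, so the $x_i$'s must be pairwise distinct elements of $\sfReg_{\cC,\cT}$. Therefore $k \le |\sfReg_{\cC,\cT}|$, which is exactly the claimed bound.

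The only subtlety to address is that a path is a \emph{sequence} of vertices, not a set, so two occurrences of $w$ along $p$ might in principle come from the same vertex $(w,x)$ being revisited; but that would constitute a cycle in $p$, contradicting the cycle-free assumption. This is the full content of the argument; no obstacle is expected because the statement is essentially the observation that at most $|\sfReg_{\cC,\cT}|$ distinct vertices of $\cG_\fr$ project to any given logical element $w$, combined with the no-cycle hypothesis on $p$.
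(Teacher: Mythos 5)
Your argument is correct and is exactly the reasoning the paper intends: the vertices of $\cG_\fr$ projecting to a fixed $w$ are $\{(w,x) \mid x \in \sfReg_{\cC,\cT}\}$, and a cycle-free path cannot revisit a vertex, so pigeonhole gives the bound. The paper leaves this as an immediate observation justified only by ``since we assume no cycles,'' and your write-up simply makes that same one-line argument explicit.
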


Assume that $u$ appears exactly twice along $p$.
We inductively construct $\mathrm{pre}^{(i)}$, $\mathrm{mid}^{(i)}$, and $\mathrm{suf}^{(i)}$,
where $\mathrm{pre}^{(i)}$ has a downward trend, $\mathrm{suf}^{(i)}$ has an upward trend, and
$\mathrm{mid}^{(i)}$ remains to be altered.
For a path $q$ with endpoints $a,b$, we denote by $q \setminus \{ a, b \}$ the subpath of $q$ obtained by excluding $a$ and $b$.

Set $\mathrm{pre}^{(0)} = (u,x)$, $\mathrm{suf}^{(0)} = (u,y)$, and $\mathrm{mid}^{(0)} = p \setminus{(u,x),(u,y)}$.
Note that $\mathrm{pre}^{(0)}$ and $\mathrm{suf}^{(0)}$ have a strict downward and upward trend, respectively, and 
that $\mathrm{mid}^{(0)}$ begins and ends with vertices associated with the same logical element.

Given $\mathrm{pre}^{(i)}$, $\mathrm{suf}^{(i)}$, and $\mathrm{mid}^{(i)}$,
we define $\mathrm{pre}^{(i+1)}$, $\mathrm{suf}^{(i+1)}$, and $\mathrm{mid}^{(i+1)}$.
Denote the node appearing in the first and last vertices on $\mathrm{mid}^{(i)}$ by $u_i$.
\begin{enumerate}
	\item 
	If $u_i$ appears exactly twice on $\mathrm{mid}^{(i)}$, 
	denote its appearances by $(u_i,x_i)$ and $(u_i,y_i)$.
	Then define
	\[
	\begin{array}{llll}
	&\mathrm{pre}^{(i+1)} &=& \mathrm{pre}^{(i)} (u_i,x_i), \\ 
	&\mathrm{suf}^{(i+1)} &=& (u_i,y_i) \mathrm{suf}^{(i)}, \\ 
	\text{and }& \mathrm{mid}^{(i+1)} &=& \mathrm{mid}^{(i)} \setminus \{ (u_i,x_i) , (u_i,y_i) \}.		
	\end{array}
	\]
	\item
	If $u_i$ appears more than twice on $\mathrm{mid}^{(i)}$, 
	let $(u_i,x_i)$ and $(u_i,y_i)$ be the pair of subsequent appearances of $u_i$
	on $\mathrm{mid}^{(i)}$ whose subpath has largest strict length 
	(if there are multiple such pairs, take the earliest one).
	\begin{enumerate}
		\item 
		\label{step_add}
		Define
		\[
		\begin{array}{lll}
		\mathrm{pre}^{(i+1)} &=& \mathrm{pre}^{(i)} (u_i,x_i), \\ 
		\mathrm{suf}^{(i+1)} &=& (u_i,y_i) \mathrm{suf}^{(i)}. \\ 
		\end{array}
		\]
		Note that due to framification, there is an edge from the end of $\mathrm{pre}^{(i)}$
		to $(u_i,x_i)$, and an edge from $(u_i,y_i)$ to the beginning of $\mathrm{suf}^{(i)}$ and therefore
		$\mathrm{pre}^{(i+1)}$ and $\mathrm{suf}^{(i+1)}$ are well defined.
		Furthermore, at least one these edges is strict since at least one of them is due to Observation~\ref{obs_strict_subpath}.
		\item
		\label{step_replace}
		Let $q_{(u_i,x_i)}$ be the subpath of $\mathrm{mid}^{(i)}$ beginning at the first vertex of $\mathrm{mid}^{(i)}$ and ending at $(u_i,x_i)$.
		Let $q'_{(u_i,y_i)}$ be the subpath of $\mathrm{mid}^{(i)}$ beginning at $(u_i,y_i)$ and ending at the last vertex of $\mathrm{mid}^{(i)}$.
		Then define
		\[
		\mathrm{mid}^{(i+1)} = \mathrm{mid}^{(i)} \setminus \{ q_{(u_i,x_i)} , q'_{(u_i,y_i)} \}.
		\]
	\end{enumerate}
	\item
	If $\mathrm{mid}^{(i)}$ is empty, then define
	\[
	\begin{array}{llll}
	&\mathrm{pre}^{(i+1)} &=& \mathrm{pre}^{(i)}\\ 
	&\mathrm{suf}^{(i+1)} &=& \mathrm{suf}^{(i)} \\ 
	\text{and }& \mathrm{mid}^{(i+1)} &=& \mathrm{mid}^{(i)}.		
	\end{array}
	\]
\end{enumerate}

\begin{observation}
\label{obs_strict_edge}
If $\mathrm{mid}^{(i)}$ has strict length $> 1$, then due to framification, there is a strict edge from $(u_i,x_i)$ to $(u_i,y_i)$.
\end{observation}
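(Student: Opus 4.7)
The plan is to prove this by a short cycle-avoidance argument. First, I would observe that $(u_i,x_i)$ and $(u_i,y_i)$ are both vertices associated with the same logical element $u_i$, so they lie together in a common frame of the tree representation --- concretely, as $\rbot$-vertices of $Y(u_i)$ (and, if $u_i$ has children appearing in the path, also as $\rtop$-vertices of $Y(v)$ for any child $v$ of $u_i$). Condition~(1) of Definition~\ref{def_frame} then forces an edge $e$ between them in $\cG_\fr$; the remaining work is to pin down the direction and strictness of $e$.

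The main step is to establish that the subpath $q$ of $\mathrm{mid}^{(i)}$ from $(u_i,x_i)$ to $(u_i,y_i)$ has strict length at least $1$. Let $k$ be the number of times $u_i$ appears on $\mathrm{mid}^{(i)}$; then $\mathrm{mid}^{(i)}$ decomposes into $k-1$ consecutive subpaths between subsequent appearances of $u_i$, and the strict lengths of these pieces sum to the strict length of $\mathrm{mid}^{(i)}$, which by hypothesis exceeds $1$. A pigeonhole observation shows that at least one of these pieces must have strict length $\geq 1$, and since $q$ is by construction the piece of maximum strict length, $q$ itself has strict length $\geq 1$.

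With $e$ in hand and a path of positive strict length from $(u_i,x_i)$ to $(u_i,y_i)$, I would close the argument by appealing to Lemma~\ref{lem_no_cycles_in_framification}: if $e$ were either an equality edge or a strict edge oriented from $(u_i,y_i)$ to $(u_i,x_i)$, concatenating $q$ with $e$ (traversed in the appropriate direction) would yield a strict cycle in $\cG_\fr$, contradicting that lemma. Hence $e$ must be a strict edge oriented from $(u_i,x_i)$ to $(u_i,y_i)$, as claimed.

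I do not anticipate any real obstacle. The only mildly delicate bookkeeping is verifying that $Y(u_i)$ always contains both $(u_i,x_i)$ and $(u_i,y_i)$ --- straightforward from Definition~\ref{def_tree_representation}, which places every $(u_i,x)$ into $Y(u_i)$ as an $\rbot$-vertex --- and confirming the pigeonhole step in the degenerate case where $u_i$ appears exactly twice on $\mathrm{mid}^{(i)}$; in that case $q = \mathrm{mid}^{(i)}$ and the bound on strict length is immediate from the hypothesis.
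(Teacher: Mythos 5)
Your proof is correct and fills in exactly the reasoning the paper leaves implicit behind the phrase ``due to framification'': condition~(1) of Definition~\ref{def_frame} forces some edge between the two $\rbot$-vertices of $Y(u_i)$, and since the subpath of $\mathrm{mid}^{(i)}$ from $(u_i,x_i)$ to $(u_i,y_i)$ has positive strict length (your pigeonhole step over the pieces between subsequent appearances of $u_i$ is sound), an equality edge or a reversed strict edge would create a strict cycle, contradicting Lemma~\ref{lem_no_cycles_in_framification}. This is the same argument pattern the paper itself uses in the proof of Lemma~\ref{lem_no_cycles_in_framification}, so no further comment is needed.
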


After at most $d$ steps of the above construction, we will have $\mathrm{mid}^{(d)} = \epsilon$. 
Take 
$p' = \mathrm{pre}^{(d)}\mathrm{suf}^{(d)}$. 
We have that for every depth, an element of that depth appears at most twice (in fact, exactly twice except for possibly the deepest element).

\paragraph{A lower bound on the strict length of $p'$}

Note that only applications of case $2$ decrease the strict length of $p'$. 
Therefore the strict length of $p'$ will be the 
smallest when its construction involves the most applications of case $2$.
We want to bound the number of times case $2$ can be applied before we reach
$\mathrm{mid}^{(i)} = \epsilon$.

Recall that $n$ is the degree of the tree representation of $\cG_\fr$.
For $0 \leq i \leq d$, denote by $N_{\mathrm{mid}}^{(i)}$ the strict length of $\mathrm{mid}^{(i)}$.
Denote by $\rho$ the number of register names used, i.e.\ $|\sfReg_{\cC,\cT}|$.

Assume that we apply case $2$ in step $i$,
meaning $u_i$ appears more than twice on $\mathrm{mid}^{(i)}$.
Due to the degree being $n$, this implies that there is a pair $(u_i,x_i)$ and $(u_i,y_i)$ of subsequent appearances 
whose subpath has strict length at least $(N_{\mathrm{mid}}^{(i)}-\rho)/n$.
The subtraction of $\rho$ is in order to account for possibly losing $\rho$ strict edges within the same depth as we perform Step~\ref{step_replace}.
In other words,
\[
N_{\mathrm{mid}}^{(i+1)} \geq (N_{\mathrm{mid}}^{(i)}-\rho)/n
\]

Let us define this bound of $N_{\mathrm{mid}}^{(i)}$ from below as a series.
\begin{flalign*}
a_0 & =  N \\	
a_1 & =  \frac{1}{n}(a_0 - \rho) \\	
a_{i+1} & =  \frac{1}{n}(a_i - \rho)
\end{flalign*}

\begin{claim}
\label{cl_series}
For $m \geq 1$, 
\[
a_m = \frac{N-\rho}{n^m}-\rho \sum_{h=1}^{m-1}{n^{-h}}
\]
\end{claim}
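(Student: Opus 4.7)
The plan is to prove the closed-form expression by induction on $m$, since the recurrence $a_{i+1} = (a_i - \rho)/n$ is a first-order linear recurrence and the stated formula is nothing more than its unrolled solution.

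For the base case $m=1$, I would verify directly from the recurrence that $a_1 = (a_0 - \rho)/n = (N-\rho)/n$, and observe that evaluating the right-hand side of the claim at $m=1$ yields the same value, because the sum $\sum_{h=1}^{0} n^{-h}$ is empty and hence $0$, leaving only $(N-\rho)/n^{1}$.

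For the inductive step, assuming the formula holds for some $m \geq 1$, I would substitute it into $a_{m+1} = (a_m - \rho)/n$. Dividing the inductive hypothesis by $n$ turns $\frac{N-\rho}{n^m}$ into $\frac{N-\rho}{n^{m+1}}$ and each summand $n^{-h}$ into $n^{-(h+1)}$, so after reindexing the sum runs over $h = 2, \ldots, m$. Combining the leftover $-\rho/n$ contributed by the $-\rho$ term of the recurrence with this reindexed sum exactly supplies the missing $h=1$ summand, giving $-\rho \sum_{h=1}^{m} n^{-h}$ and hence the formula at $m+1$.

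There is no substantial obstacle here; the only point of care is the index shift $h \mapsto h+1$ inside the geometric-style sum and the bookkeeping that absorbs the extra $-\rho/n$ term as the newly uncovered $h=1$ entry. The argument is entirely mechanical and local, and does not use anything about the underlying constraint graph construction beyond the definition of the recurrence.
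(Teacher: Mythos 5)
Your proof is correct and follows exactly the same route as the paper: induction on $m$, with the base case $m=1$ handled by the empty sum and the inductive step carried out by dividing the hypothesis by $n$, reindexing the sum to $h=2,\ldots,m$, and absorbing the extra $-\rho/n$ as the new $h=1$ term. Nothing is missing.
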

\begin{proof}
By induction on $m$.
\begin{itemize}
	\item We show the claim holds for $m=1$:
	By definition, we have
	\[
	a_1 = \frac{1}{n}(a_0 - \rho) = \frac{N-\rho}{n}
	\]
	By substituting $1$ for $m$, we have:
	\[
	\left.\frac{N-\rho}{n^m} - \rho \sum_{h=1}^{m-1}{n^{-h}}\right|_{m=1} = \frac{N-\rho}{n}
	\]
	\item
	We assume the claim holds for $m=m'$:
	\[
	a_{m'} = \frac{N-\rho}{n^{m'}} - \rho \sum_{h=1}^{m'-1}{n^{-h}}
	\]
	\item
	We show correctness for $m= m'+1$:
	\begin{align*}	
	a_{m'+1} & = \frac{a_{m'}-\rho}{n} &\\
	& =  	\frac{N-\rho}{n^{m'+1}} - \rho \sum_{h=1}^{m'-1}{n^{-h-1}} - \frac{\rho}{n} &\\
	& =  \frac{N-\rho}{n^{m'+1}} - \rho \sum_{h=2}^{(m'+1)-1}{n^{-h}} - \frac{\rho}{n} &\\
	& =  \frac{N - \rho}{n^{m'+1}} - \rho \sum_{h=1}^{(m'+1)-1}{n^{-h}}
	\end{align*}
\end{itemize}
\end{proof}

Since this series bounds $N_{\mathrm{mid}}^{(m)}$ from below, we have that 
\[
N_{\mathrm{mid}}^{(m)} \geq \frac{N-\rho}{n^{m}} - \rho \sum_{h=1}^{m-1}{n^{-h}}
\]
Furthermore, we have that
\[
\frac{N-\rho}{n^{m}} - \rho \sum_{h=1}^{m-1}{n^{-h}} > \frac{N-\rho}{n^{m}} - \rho \sum_{h=1}^{\infty}{n^{-h}}
 = \frac{N-\rho}{n^{m}} - \rho \frac{1}{n-1}
\]

We solve for $m$ in order to bound the maximal number of times case $2$ may be applied in the construction of $p'$.
\[
\frac{N-\rho}{n^{m}}-\rho\frac{1}{n-1} = 0
\]
After some algebra we get
\[
m = \log_n(\frac{(N-\rho)(n-1)}{\rho})
\]

To recap -- the constructed path $p'$ has the smallest strict length if case $2$ was applied a maximal number of times, and we have showed
that this may occur at most  $\log_n(\frac{(N-\rho)(n-1)}{\rho})$ times.
However, since each time we apply case $2$ we have at least one strict edge added to the path (in Step~\ref{step_add}), 
we also have at least $\log_n(\frac{(N-\rho)(n-1)}{\rho})$ strict edges in $p'$.
Obviously, $\lim_{N \to \infty}(\log_n(\frac{(N-\rho)(n-1)}{\rho})) = \infty$, and we have our proof of Lemma~\ref{lem_path_shape}.

\paragraph{Back to the proof of Lemma~\ref{lem_condition_suff_for_regular}}
Finally, we can prove the lemma which will facilitate the construction of the paths violating $(\condsym)$.
Let $N_{\mathrm{st}}$ be the number of different subtrees in the tree representation of $\mathcal{G}_{\mathrm{fr}}$.
Let $N_{\mathrm{rep}} = \rho^4 N_{\mathrm{st}}+1$.
\begin{lemma}
Let $u \in \Delta$ be in the repetitive part of $\mathcal{G}_\fr$ and let $x,y \in \mathsf{Reg}$ such that $\ell((u,x),(u,y)) = \infty$.
Let $p$ be the path promised by Lemma~\ref{lem_path_shape} of strict length $2N_{\mathrm{rep}}$. 
Then there is $v \in \Delta$ on $p$ and registers $z,z'$ such that $v.z,v.z'$ violate $(\condsym)$. 
\end{lemma}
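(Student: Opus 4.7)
The plan is to use the regularity of $\cG_\fr$ to extract a finite repeating pattern from $p$ and inflate it into an infinite descent witnessing a violation of $(\condsym)$.

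First, I extract the skeleton of $p$. By Lemma~\ref{lem_path_shape} each logical node on $p$ is visited at most twice, so every edge of $p$ runs between adjacent depths and the edge count is at most $2L$, where $L$ is the maximum depth. Together with the hypothesis strict length $\geq 2N_\mathrm{rep}$, this forces $L \geq N_\mathrm{rep} = \rho^4 N_\mathrm{st} + 1$. Write the down-portion as $(u_0,x_0), (u_1,x_1), \ldots, (u_L,x_L)$ and the up-portion as $(u_L,y_L), \ldots, (u_0,y_0)$, where $u_k = u\gamma_1\cdots\gamma_k$ for some word $\gamma_1 \cdots \gamma_L$.

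Second, I apply pigeonhole. Since $u$ lies in the repetitive part, so does every $u_k$, and the subtree rooted at $u_k$ assumes at most $N_\mathrm{st}$ shapes. To each depth $k$ I attach a local configuration built from $x_k$, $y_k$, the subtree at $u_k$, and enough adjacent-depth data so that two indices sharing the configuration yield a loop that closes cleanly in both the forward and backward directions and preserves strictness of the framified cross-edge between $(u_k,x_k)$ and $(u_k,y_k)$. These configurations number at most $\rho^4 N_\mathrm{st}$, and with $L+1 \geq \rho^4 N_\mathrm{st} + 2$ depths the pigeonhole principle yields indices $i < j$ with identical configurations; a careful bookkeeping also places them within the depth range $[0,k^{\ast}]$ where strict tails of $p$ still exist ($k^{\ast}$ being the deepest position carrying a strict edge of $p$, which satisfies $k^{\ast}\geq N_\mathrm{rep}-1$ because at least $N_\mathrm{rep}$ depths of $p$ carry a strict edge).

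Third, I build the witnesses. Take $v = u_i$, $z = x_i$, $z' = y_i$, and $\bfw = (\gamma_{i+1}\cdots\gamma_j)^\omega$. The subtree equality $\mathrm{subtree}(u_j) = \mathrm{subtree}(u_i)$ guarantees that each period of $\bfw$ traverses an isomorphic copy of the interval $[i,j]$ of $p$. I define $f$ by threading the $x$-vertices of the down-portion through the periods (seam edges are provided by the matched configurations), and $b$ symmetrically through the $y$-vertices of the up-portion. At each depth of the extension, $f(k)$ and $b(k)$ sit at two registers of the same logical node, so the framification supplies an edge between them. This edge is strict: for $k$ inside the first period, the tail of $p$ from $(u_{i+r},x_{i+r})$ down to $u_L$ and back up to $(u_{i+r},y_{i+r})$ is a path of positive strict length in the subtree at $u_{i+r}$ (using $i+r \leq j-1 \leq k^{\ast}$), and Observation~\ref{obs_common_framification} forces the framified edge to be strict; for later periods strictness persists by subtree equality. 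Finally, because the matched pair lies inside the strict-edge-populated range, the loop $[i,j-1]$ contains at least one strict edge of $p$, so $f$ or $b$ is strict.

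The key difficulty is coordinating, in a single pigeonhole, three simultaneous demands on the matched pair $i < j$: the configurations must match so the loop closes, the intermediate depths must carry strict framification cross-edges, and the loop itself must contain at least one strict edge of $p$. Meeting all three requires tracking enough local data at each depth to raise the configuration count from the naive $\rho^2 N_\mathrm{st}$ (adequate only for structural matching) to $\rho^4 N_\mathrm{st}$, which is exactly what drives the choice $N_\mathrm{rep} = \rho^4 N_\mathrm{st} + 1$.
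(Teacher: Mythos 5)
Your overall strategy coincides with the paper's: walk along the down-then-up path $p$, pigeonhole on the pair of registers at each depth together with the (finitely many, by regularity) subtree shapes to find two depths $i<j$ with matching configurations, take $v=u_i$, $z=x_i$, $z'=y_i$, pump the segment between them using the subtree isomorphism to obtain the infinite $f$ and $b$, and extract the strict cross-edges from $f(k)$ to $b(k)$ from the framification via Observation~\ref{obs_common_framification}, using the tail of $p$ below depth $k$ to witness $\ell((u_k,x_k),(u_k,y_k))\geq 1$. All of that matches the paper's intended argument, and the count $\rho^4 N_{\mathrm{st}}$ is the same one the paper uses (``a subtree with a quadruple of registers'').

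However, one step fails as written: the claim that ``because the matched pair lies inside the strict-edge-populated range $[0,k^{\ast}]$, the loop $[i,j-1]$ contains at least one strict edge of $p$.'' That inference is not valid. Having $i<j\leq k^{\ast}$ only guarantees that some strict edge of $p$ occurs at depth at least $j$; it does not place a strict edge inside the interval $[i,j-1]$. If the pumped loop contains no strict edge, then neither $f$ nor $b$ is strict (each period is a copy of a strictness-free segment), and no violation of $(\condsym)$ is produced. The repair is exactly what the paper does: since $p$ has strict length $2N_{\mathrm{rep}}$, at least $N_{\mathrm{rep}}=\rho^4 N_{\mathrm{st}}+1$ of its edges are strict in one of the two directions, say downward; apply the pigeonhole only to those strict edges (equivalently, to the depths carrying them), of which there are more than the $\rho^4 N_{\mathrm{st}}$ possible configurations. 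Then both matched indices carry a strict downward edge, the loop $[i,j-1]$ contains the strict edge at depth $i$, and every period of $f$ inherits an isomorphic copy of it, so $f$ is strict. With that one change your argument goes through and is essentially the paper's proof.
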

\begin{proof}
Since $p$ is an down-then-up path of strict length $2N_{\mathrm{rep}}$,
there are at least $N_{\mathrm{rep}}$ strict edges in one of the directions on $p$. 
Assume w.l.o.g.\ that it is the downward direction.

Since the number of strict edges in the downward direction is larger than the number of possible combinations of a subtree with a quadruple of registers, we have the following on $p$ (see Figure~\ref{fig_proof_help}):
\begin{enumerate}
	\item 
	nodes $v$ and $v' = vw$ for some $w \in [n]^+$ such that $v$ and $v'$ have isomorphic subtrees,
	\item
	registers $z,z'$ such that there is a path $f_1$ from $(v,z)$ to $(v',z)$ with strict length at least $1$, 
	and a path $b_1$ from
	$(v',z')$ to $(v,z')$
\end{enumerate}
Since $p$ is a path from $(u,x)$ to $(u,y)$ which goes through $(v',z)$ and then $(v',z')$, and since framifications may
not introduce strict cycles (Lemma~\ref{lem_no_cycles_in_framification}), we have an edge $e'$ from $(v',z)$ to  $(v',z')$. Since $v$ and $v'$ have isomorphic subtrees, this implies there is also an 
(isomorphic) edge $e$ from $(v,z)$ to $(v,z')$.
Finally, since $f_1$ is strict, by Observation~\ref{obs_common_framification} we have that the edges $e$ and $e'$ are strict.  

As $v$ and $v'$ have isomorphic subtrees, 
this implies that there is an infinite strict forward path $f$ from $(v,z)$, since the strict forward path $f_1$ can be concatenated indefinitely.
Similarly, there is an infinite backward path $b$ into $(v,z')$, as the path $b_1$ can also be concatenated.

It remains to show that there is strict edge from $f(i)$ to $b(i)$ for every $i \geq 0$.
By our construction, we have that 
for every $i$, there is a strict path from $f(i)$ to $b(i)$ -- for example one which uses a copy of the edge $e$ above. 
Furthermore, by the construction in the proof of Lemma~\ref{lem_path_shape}, for every $i$, $f(i)$ and $b(i)$ are vertices 
associated with the same logical element.
Therefore by framification we have a strict edge from $f(i)$ to $b(i)$.

\begin{figure}
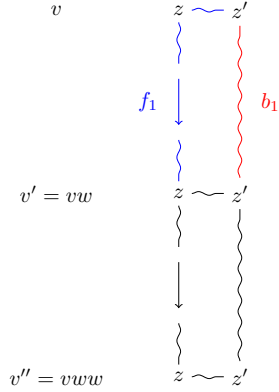
%
\centering
\resizebox{4cm}{!}{
\pathpic
}
\caption{Subgraph of the constraint graph. $v$, $v'$, $v''$ have isomorphic subtrees along a periodic word $w$, $f_1$ is a forward back with at least one strict edge and $b_1$ is a backward path. Note that $f_1$ goes from the $z$ register of a vertex to the $z$ register of an isomorphic vertex (and $b_1$ behaves similarly).}%
\label{fig_proof_help}%
\end{figure}

\end{proof}

\section*{The automata in Subsections~\ref{subsec_automata_const} and~\ref{subsec_putting_automata_together}}
Here we give the construction of the automata $\cA_{\mathrm{ct}}$, $\cA_{\mathrm{alcf}}$, and $\cA_{\mathrm{m}}$.

\paragraph{Checking consistency of trees}
First we denote the pairs of consistent pairs of frames as in Definition~\ref{def_consistent_frames}: 
$\mathrm{CFr} = \{ (\sigma_1, \sigma_2) \in \Sigma_\fr \times \Sigma_\fr \mid (\sigma_1,\sigma_2) \text{ is consistent}\}$.
Also denote the set of frames whose vertex set only has $\rbot$ vertices: $\Sigma_\fr^{\rbot} = \{\sigma \in \Sigma_\fr \mid V(\sigma) = \sfReg_{\cC,\cT}^{\rbot} \}$.

We define $\cA_{\mathrm{ct}} = (Q, q_0, \longrightarrow, (\emptyset, Q))$, where:
\begin{itemize}
	\item 
	$Q = \{q_0\} \cup \{q_{\sigma} \mid \sigma \in \Sigma_\fr\}$.
	\item
	For every $\sigma \in \Sigma_\fr^{\rbot}$, we have $q_0 \overset{\sigma}{\longrightarrow} (q_\sigma, \ldots, q_\sigma)$.
	\item
	For every $(\sigma_1, \sigma_2) \in \rCFr$, we have $q_{\sigma_1} \overset{\sigma_2}{\longrightarrow} (q_{\sigma_2}, \ldots, q_{\sigma_2})$.
\end{itemize}

\paragraph{Satisfiability of the abstracted $\ALCF$ part}
The construction is very nearly identical to the one in~\cite{Baader09}, therefore we only describe the parts needed to understand our adaptation.

Following the notation in~\cite{Baader09}, let $S_{\cC,\cT}$ be the set of subexpressions of $\cC$ and $\cT$, and let
$R_{\cC,\cT}$ be the set of role names used in $\cC$ and $\cT$. 
The state set of $\cA_\ralcf$ contains the Hintikka sets for $\cC,\cT$, i.e.\ $q \subseteq S_{\cC,\cT} \cup R_{\cC,\cT}$ where either $q = \emptyset$
or $q$ contains exactly one role name and maximally consistent subexpressions. 

Our automaton $\cA_\ralcf$ has $\Omega = (\emptyset, Q)$ where $Q$ is the entire state set, and its transition relation only
differs from the one in~\cite{Baader09} in order to properly handle functional roles. Specifically, 
our $(q,\xi) \longrightarrow (q_1, \ldots, q_n)$ additionally satisfies that
\begin{itemize}
	\item 
	if $\exists r. D \in q$ for $r \in \sfNR$, then there is exactly one $i$ such that $\{D,r\} \subseteq q_i$
	\item
	if $\forall r. D \in q$ for $r \in \sfNR$, then either: 
	\begin{itemize}
		\item 
		there is no $i$ such that $\{r\} \subseteq q_i$, or
		\item
		$n=1$ and $q \overset{\xi}{\longrightarrow} (q_1)$ where $\{D,r\} \subseteq q_1$
	\end{itemize}
\end{itemize}

\paragraph{Matching the alphabets}
We need to verify that the graph induced by the $\Xi$ part of the letter is contained the in framification 
of the $\sfReg_{\cC,\cT}^\rbot$ vertices in the $\Sigma_\fr$ part. We introduce some notation. 
Recall that $\bfB$ is the set of placeholders introduced during the abstraction of $\cC,\cT$.
For $\xi \in \Xi$, denote $\bfB(\xi) = \xi \cap \bfB$, i.e.\ the set of placeholders appearing in $\xi$.
For $\sigma \in \Sigma_\fr$ denote 
\[
\bfB_{\rbot}(\sigma) = \{ B \in \bfB \mid \text{ there is } v \in \sfReg_{\cC,\cT}^\rbot \text{ in } V(\sigma) \text{ s.t } B \in \lambda(\sigma)\}
\]
I.e.\ the placeholders appearing on $\sfReg_{\cC,\cT}^\rbot$ vertices in $\sigma$.
Now we define $\cA_{\mathrm{m}}$ to simply ensure we always have $\bfB(\xi) \subseteq \bfB_{\rbot}(\sigma)$. More precisely,
we define $\cA_{\mathrm{m}} = (Q, q_0,\longrightarrow,(\emptyset, Q))$ where
\begin{itemize}
	\item 
	$Q = \{q_0\}$
	\item
	For every $(\sigma, \xi) \in \Sigma_\fr \times \Xi$ where $\bfB(\xi) \subseteq \bfB_{\rbot}(\sigma)$, we have 
	$q_0 \overset{(\sigma,\xi)}{\longrightarrow} (q_0, \ldots, q_0)$
\end{itemize}

\paragraph{Product of Rabin tree automata}

The product of two Rabin tree automata is obtained by simply taking the product of the state sets, transition relation, and accepting pairs as follows.
Let $\cA = (Q, q_0, \longrightarrow, \{(L_1,U_1), \ldots, (L_m,U_m)\})$ 
and $\cA' = (Q', q_0', \longrightarrow', \{(L_1',U_1'), \ldots, (L'_{m'},U'_{m'})\})$ be two Rabin tree automata over some alphabet $\Gamma$ which run on $n$-trees.
The automaton $\cA^{\cap} = (Q^\cap, q_0^\cap, \longrightarrow^\cap, \Omega^\cap)$ is given by
\begin{itemize}
	\item 
	$Q^\cap = Q \times Q'$
	\item
	$q_0^\cap = (q_0, q_0')$
	\item
	$(q,q') {\longrightarrow^\cap} ((q_1,q_1'), \ldots, (q_n,q_n'))$ with the letter $\gamma$ if and only if 
	$q {\longrightarrow} (q_1, \ldots, q_n)$ with $\gamma$ and 
	$q' {\longrightarrow'} (q_1', \ldots, q_n')$ with $\gamma$.
	\item
	$\Omega^\cap = \{((L_i,L'_j),(U_i,U_j')) \mid i \in [m], j \in [m']\}$
\end{itemize}
That is, $\cA^\cap$ runs $\cA$ and $\cA'$ simultaneously. 
For a run on some input, we have acceptance by both $\cA$ and $\cA'$ if and only if we have that every path has some $i$ such that its restriction to $Q$ is successful due to $(L_i,U_i)$, and some $j$ such that its restriction to $Q'$ is successful due to $(L_j',U_j')$. Therefore, a given input is accepted by both $\cA$ and $\cA'$ if and only if there is a run where every path has some $(i,j)$ witnessing its success, i.e. there is an accepting run of $\cA^\cap$.

\end{document}